\tikzstyle{normalvertex}=[circle,fill=white,draw=black]
\tikzstyle{emptyvertex}=[draw,circle,minimum size=7pt,inner sep=0pt]
\tikzstyle{tinyvertex}=[draw,circle,minimum size=3pt,inner sep=0pt]
\tikzstyle{thickedge}=[draw,gray!60,line width=1.6pt,-]
\pgfplotsset{compat=1.17}
\tikzstyle{vertex}=[circle, draw, fill=gray!80!white,thick,scale=1.2]
\tikzstyle{edge}=[draw=black, thick,-]
\theoremstyle{definition}
\newtheorem{theorem}{Theorem}
\newtheorem{proposition}[theorem]{Proposition}
\newtheorem{lemma}[theorem]{Lemma}
\newtheorem{corollary}[theorem]{Corollary}
\newtheorem{remark}[theorem]{Remark}
\setlist[enumerate]{itemsep=0.2ex, topsep=0.5\topsep}
\setlist[description]{itemsep=0.2ex, topsep=0.5\topsep}
\setlist[itemize]{itemsep=0.2ex, topsep=0.5\topsep}
\def\thmt@refnamewithcomma #1#2#3,#4,#5\@nil{%
	\@xa\def\csname\thmt@envname #1utorefname\endcsname{#3}%
	\ifcsname #2refname\endcsname
	\csname #2refname\expandafter\endcsname\expandafter{\thmt@envname}{#3}{#4}%
	\fi
}
\definecolor{purple}{RGB}{147,7,204}
\definecolor{blue}{RGB}{10,153,201}
\definecolor{orange}{RGB}{254,128,41}
\definecolor{gray}{RGB}{239,240,241}
\definecolor{pink}{RGB}{254,15,127}
\definecolor{green}{RGB}{140,211,89}
\definecolor{color1}{RGB}{254,15,127}
\definecolor{color2}{RGB}{10,153,201}
\definecolor{color3}{RGB}{194,145,62}
\definecolor{color4}{RGB}{254,128,41}
\definecolor{color5}{RGB}{254,191,185}
\definecolor{color6}{RGB}{140,211,89}
\definecolor{color7}{RGB}{245,221,66}
\definecolor{purple}{RGB}{147,7,204}
\definecolor{green}{RGB}{5,100,18}
\definecolor{blue}{RGB}{10,153,201}
\definecolor{orange}{RGB}{254,128,41}
\definecolor{gray}{RGB}{239,240,241}
\newcommand{\new}[1]{\emph{#1}}
\newcommand{\mF}{\bm{F}}
\newcommand{\mW}{\bm{W}}
\newcommand{\mA}{\bm{A}}
\newcommand{\mB}{\bm{B}}
\newcommand{\mN}{\bm{N}}
\newcommand{\mM}{\bm{M}}
\newcommand{\mX}{\bm{X}}
\newcommand{\mY}{\bm{Y}}
\newcommand{\mD}{\bm{D}}
\newcommand{\mE}{\bm{E}}
\newcommand{\mP}{\bm{P}}
\newcommand{\mJ}{\bm{J}}
\newcommand{\mZ}{\bm{Z}}
\newcommand{\mI}{\bm{I}}
\newcommand{\mO}{\bm{O}}
\newcommand{\mLambda}{\bm{\Lambda}}
\newcommand{\bbR}{\ensuremath{\mathbb{R}}}
\newcommand{\bbN}{\ensuremath{\mathbb{N}}}
\newcommand{\RR}{\mathbb{R}}
\newcommand{\NN}{\mathbb{N}}
\newcommand{\kwl}{$k$\text{-}\textsf{WL}}
\newcommand{\wlone}{$1$\text{-}\textsf{WL}}
\newcommand{\rwlone}{$1$\text{-}\textsf{RWL}}
\newcommand{\localkwl}{$\delta$-$k$-\textsf{LWL}}
\newcommand{\localmkwl}{$k$-\textsf{RLWL}}
\newcommand{\rgcn}{\textsf{R\text{-}GCN}}
\newcommand{\comp}{\textsf{CompGCN}\xspace}
\newcommand{\krns}[1]{$#1$\text{-}\textsf{RN}s}
\newcommand{\krn}[1]{$#1$\text{-}\textsf{RN}}
\newcommand{\rn}{$k$-\textsf{RN}}
\newcommand{\REL}{\mathsf{RELABEL}}
\newcommand{\hb}{\bm{h}}
\newcommand{\zb}{\bm{z}}
\newcommand{\gb}{\bm{g}}
\newcommand{\bbb}{\bm{b}}
\newcommand{\tup}[1]{{(#1)}}
\newcommand{\UPD}{\mathsf{UPD}}
\newcommand{\AGG}{\mathsf{AGG}}
\newcommand{\RO}{\mathsf{READOUT}}
\newcommand{\RL}{\mathsf{RELABEL}}
\renewcommand{\vec}[1]{\mathbf{#1}}
\newcommand{\oms}{\{\!\!\{}
\newcommand{\cms}{\}\!\!\}}
\newcommand{\sign}{\mathsf{sign}}
\title[Weisfeiler and Leman Go Relational]{Weisfeiler and Leman Go Relational}
\author[Barcelo et al.]{%
    Pablo Barcelo\thanks{Alphabetical author order.}\\
	\institute{Institute for Mathematical and Computational Engineering, PUC Chile \& IMFD Chile \& CENIA Chile}\\
	\email{pbarcelo@uc.cl}\\
	\And
	Mikhail Galkin\\
	\institute{Mila Quebec AI Institute \& McGill University}\\
	\email{mikhail.galkin@mila.quebec}\\
	\And 
	Christopher Morris\\
	\institute{RWTH Aachen University}\\
	\email{morris@cs.rwth-aachen.de}\\
	\And 
	Miguel Romero\\
	\institute{Universidad Adolfo Ibáñez \& CENIA Chile}\\
	\email{miguel.romero.o@uai.cl}\\
}
\begin{document}

\maketitle

\begin{abstract}
	Knowledge graphs, modeling multi-relational data, improve numerous applications such as question answering or graph logical reasoning. Many graph neural networks for such data emerged recently, often outperforming shallow architectures. However, the design of such multi-relational graph neural networks is ad-hoc, driven mainly by intuition and empirical insights. Up to now, their expressivity, their relation to each other, and their (practical) learning performance is poorly understood. Here, we initiate the study of deriving a more principled understanding of multi-relational graph neural networks. Namely, we investigate the limitations in the expressive power of the well-known Relational GCN and Compositional GCN architectures and shed some light on their practical learning performance. By aligning both architectures with a suitable version of the Weisfeiler-Leman test, we establish under which conditions both models have the same expressive power in distinguishing non-isomorphic (multi-relational) graphs or vertices with different structural roles. Further, by leveraging recent progress in designing expressive graph neural networks, we introduce the \krn{k} architecture that provably overcomes the expressiveness limitations of the above two architectures. Empirically, we confirm our theoretical findings in a vertex classification setting over small and large multi-relational graphs.
\end{abstract}

\section{Introduction}
Recently, GNNs~\citep{Gil+2017,Sca+2009} emerged as the most prominent graph representation learning architecture. Notable instances of this architecture include, e.g.,~\citet{Duv+2015,Ham+2017}, and~\citet{Vel+2018}, which can be subsumed under the message-passing framework introduced in~\citet{Gil+2017}. In parallel, approaches based on spectral information were introduced in, e.g.,~\citet{Defferrard2016,Bru+2014,Kip+2017}, and~\citet{Mon+2017}---all of which descend from early work in~\citet{bas+1997,Kir+1995,mic+2005,Mer+2005,mic+2009,Sca+2009} and~\citet{Spe+1997}.

By now, we have a deep understanding of the expressive power of GNNs~\citep{Mor+2021b}. To start with, connections between GNNs and Weisfeiler--Leman type algorithms have been shown. Specifically,~\citet{Mor+2019} and~\citet{Xu+2018b} showed that the $\wlone$ limits the expressive power of any possible GNN architecture in terms of distinguishing non-isomorphic graphs. In turn, these results have been generalized to the \kwl, see, e.g.,~\citet{Azi+2020,Gee+2020a,Gee+2020b,Mar+2019,Mor+2019,Morris2020b,Mor+2022b}, and connected to permutation-equivariant function approximation over graphs, see, e.g.,~\citet{Che+2019,geerts2022,Mae+2019}. \citet{barcelo2020} further established an equivalence between the expressiveness of GNNs with readout functions and $\textsf{C}^2$, the $2$-variable fragment of first-order logic with counting quantifiers.

Most previous works focus on graphs that admit labels on vertices but not edges. However, \new{knowledge} or \new{multi-relational graphs}, that admit
labels on both vertices and edges play a crucial role in numerous applications, such as complex question answering in NLP~\citep{fu2020survey} or visual question answering~\citep{huang2022endowing} in the intersection of NLP and vision. To extract the rich information encoded in the graph's multi-relational structure and its annotations, the knowledge graph community has proposed a large set of \new{relational} GNN architectures, e.g.,~\cite{Sch+2019,Vas+2020,Ye+2022},  tailored towards knowledge or multi-relational graphs, targeting tasks such as vertex and link prediction~\citep{Sch+2019,Zhu+2021,Ye+2022}.
Notably, \citet{Sch+2019} proposed the first architecture, namely, \rgcn, being able to handle multi-relational data. Further, \citet{Vas+2020} proposed an alternative GNN architecture, \comp, using less number of parameters and reported improved empirical performance. In the knowledge graph reasoning area, \rgcn{} and \comp, being strong baselines, spun off numerous improved GNNs for vertex classification and transductive link prediction tasks~\citep{galkin2020message,yu2020generalized,compgcn_random_proj}. They also inspired architectures for more complex reasoning tasks such as inductive link prediction~\citep{teru2020inductive,ali2021improving,Zhu+2021,zhang2022knowledge} and query answering~\citep{daza2020message,starqe,gnn_qe}.

Although these approaches show meaningful empirical performance, their limitations in extracting relevant structural information, their learning performance, and their relation to each other are not understood well. For example, there is no understanding of these approaches' inherent limitations in distinguishing between knowledge graphs with different structural features, explicitly considering the unique properties of multi-relational graphs. Hence, a thorough theoretical investigation of multi-relational GNNs' expressive power and learning performance is yet to be established to become meaningful, vital components in today's knowledge graph reasoning pipeline.

\paragraph{Present Work.} Here, we initiate the study on deriving a principled understanding of the capabilities of GNNs for knowledge or multi-relational graphs. More concretely:
\begin{itemize}
	\item We investigate the expressive power of two well-known GNNs for multi-relation data, \new{Relational GCNs} (\rgcn)~\citep{Sch+2019} and \new{Compositional GCNs} (\comp)~\citep{Vas+2020}. We quantify their limitations by relating them to a suitable version of the established Weisfeiler-Leman graph isomorphism test. In particular, we show under which conditions the above two architectures possess the same expressive power in distinguishing non-isomorphic, multi-relational graphs or vertices with different structural features.
	\item To overcome both architectures' expressiveness limitations, we introduce the \krn{k} architecture, which provably overcomes their limitations and show that increasing $k$ always leads to strictly more expressive architectures.
	\item Empirically, we confirm our theoretical findings on established small- and large-scale multi-relational vertex classification benchmarks.
\end{itemize}

See Subsection~\ref{related_work} in the appendix for an expanded discussion of related work.

\section{Preliminaries}

As usual, let $[n] = \{ 1, \dotsc, n \} \subset \NN$ for $n \geq 1$, and let $\{\!\!\{ \dots\}\!\!\}$ denote a multiset.

A  \new{(undirected) graph} $G$ is a pair $(V(G),E(G))$ with a \emph{finite} set of
\new{vertices} $V(G)$ and a set of \new{edges} $E(G) \subseteq \{ \{u,v\}
	\subseteq V \mid u \neq v \}$.
For notational convenience, we usually denote an edge $\{u,v\}$ in $E(G)$ by $(u,v)$ or $(v,u)$.
We assume the usual definition of \new{adjacency matrix} $\mA$ of $G$. A \new{colored} or \new{labeled graph} $G$ is a triple $(V(G),E(G),\ell)$ with a \new{coloring} or \new{label} function $\ell \colon V(G)  \to \bbN$.  Then $\ell(w)$ is a \new{color} or \new{label} of $w$, for $w$ in $V(G)$.  The \new{neighborhood} of $v$ in $V(G)$ is denoted by $N(v) = \{ u \in V(G) \mid (v, u) \in E(G) \}$.

An \new{(undirected) multi-relational graph} $G$ is a tuple $(V(G),R_1(G), \dots, R_r(G))$ with a \emph{finite} set of
\new{vertices} $V(G)$ and \new{relations} $R_i \subseteq \{ \{u,v\} \subseteq V(G) \mid u \neq v \}$ for $i$ in $[r]$. The \new{neighborhood} of $v$ in $V(G)$ with respect to the relation $R_i$ is denoted by $N_i(v) = \{ u \in V(G) \mid (v, u) \in R_i \}$. We define \new{colored} (or \new{labeled})
multi-relational graphs in the expected way.

Two graphs $G$ and $H$ are \new{isomorphic} ($G \simeq H$) if there exists a bijection $\varphi \colon V(G) \to V(H)$ preserving the adjacency relation, i.e., $(u,v)$ in $E(G)$ if and only if $(\varphi(u),\varphi(v))$ in $E(H)$. We then call $\varphi$ an \emph{isomorphism} from $G$ to $H$. If the graphs have vertex labels, the isomorphism is additionally required to match these labels. In the case of multi-relational graphs $G$ and $H$, the bijection $\varphi \colon V(G) \to V(H)$ needs to preserve all relations, i.e., $(u,v)$ is in $R_i(G)$ if and only if $(\varphi(u),\varphi(v))$ is in $R_i(H)$ for each $i$ in $[r]$. For labeled multi-relational graphs, the bijection needs to preserve the labels.

We define the atomic type $\mathsf{atp} \colon V(G)^k \to \bbN$ such that $\mathsf{atp}(\vec{v}) = \mathsf{atp}(\vec{w})$ for $\vec{v}$ and $\vec{w}$ in $V(G)^k$ if and only if the mapping $\varphi\colon V(G) \to V(G)$ where $v_i \mapsto w_i$ induces a \emph{partial isomorphism}, i.e., $v_i = v_j \iff w_i = w_j$ and $(v_i,v_j)$ in $E(G) \iff (\varphi(v_i),\varphi(v_j))$ in $E(G)$.

\paragraph{The Weisfeiler-Leman Algorithm.}\label{vr_ext}
The \new{$1$-dimensional Weisfeiler-Leman algorithm} (\wlone), or \new{color refinement}, is a simple heuristic for the graph isomorphism problem, originally proposed by~\citet{Wei+1968}.\footnote{Strictly speaking, \wlone\ and color refinement are two different algorithms. That is, \wlone\ considers neighbors and non-neighbors to update the coloring, resulting in a slightly higher expressive power when distinguishing vertices in a given graph, see~\citet{Gro+2021} for details. For brevity, we consider both algorithms to be equivalent.}\footnote{
	We use the spelling ``Leman'' here as A.~Leman, co-inventor of the algorithm, preferred it over the transcription ``Lehman''; see
	\url{https://www.iti.zcu.cz/wl2018/pdf/leman.pdf}.}
Intuitively, the algorithm determines if two graphs are non-isomorphic by iteratively coloring or labeling vertices. Given an initial coloring or labeling of the vertices of both
graphs, e.g., their degree or application-specific information, in each iteration, two vertices with the same label get different labels if the number of identically labeled neighbors is not equal. If, after some iteration, the number of vertices annotated with a specific label is different in both graphs, the algorithm terminates and a stable coloring, inducing a vertex partition, is obtained. We can then conclude that the two graphs are not isomorphic. It is easy to see that the algorithm cannot distinguish all non-isomorphic graphs~\citep{Cai+1992}. Nonetheless, it is a powerful heuristic that can successfully test isomorphism for a broad class of graphs~\citep{Arv+2015,Bab+1979,Kie+2015}.

Formally, let $G = (V(G),E(G),\ell)$ be a labeled graph. In each iteration, $t > 0$, the $\wlone$ computes a vertex coloring $C^{(t)} \colon V(G) \to \bbN$,
which depends on the coloring of the neighbors. That is, in iteration $t>0$, we set
\begin{equation*}
	C^{(t)}(v) \coloneqq \REL\Big(\!\big(C^{(t-1)}(v),\oms C^{(t-1)}(u) \mid u \in N(v)  \cms \big)\! \Big),
\end{equation*}
where $\REL$ injectively maps the above pair to a unique natural number, which has not been used in previous iterations.  In iteration $0$, the coloring $C^{(0)}\coloneqq \ell$. To test if two graphs $G$ and $H$ are non-isomorphic, we run the above algorithm in ``parallel'' on both graphs. If the two graphs have a different number of vertices colored $c$ in $\bbN$ at some iteration, the \wlone{} \new{distinguishes} the graphs as non-isomorphic. Moreover, if the number of colors between two iterations, $t$ and $(t+1)$, does not change, i.e., the cardinalities of the images of $C^{(t)}$ and $C^{(t+1)}$ are equal, or, equivalently,
\begin{equation*}
	C^{(t)}(v) = C^{(t)}(w) \iff C^{(t+1)}(v) = C^{(t+1)}(w),
\end{equation*}
for all vertices $v$ and $w$ in $V(G)$, the algorithm terminates. For such $t$, we define the \new{stable coloring}
$C^{\infty}(v) = C^{(t)}(v)$ for $v$ in $V(G)$. The stable coloring is reached after at most $\max \{ |V(G)|,|V(H)| \}$ iterations~\citep{Gro2017}.

Due to the shortcomings of the $\wlone$ in distinguishing non-isomorphic graphs, several researchers, e.g.,~\citep{Bab1979,Cai1992}, devised a more powerful generalization of the former, today known as the $k$-dimensional Weisfeiler-Leman algorithm (\kwl), see Subsection~\ref{APP:vr_ext} for details.

\paragraph{Graph Neural Networks.}\label{sec:gnn} Intuitively, GNNs learn a vectorial representation, i.e., a $d$-dimensional vector, representing each vertex in a graph by aggregating information from neighboring vertices. Formally, let $G = (V(G),E(G),\ell)$ be a labeled graph with initial vertex features $(\hb_{v}^\tup{0})_{v\in V(G)}$ in $\RR^{d}$ that are \emph{consistent} with $\ell$, that is, $\hb_{u}^\tup{0} = \hb_{v}^\tup{0}$ if and only if $\ell(u) = \ell(v)$, e.g., a one-hot encoding of the labelling $\ell$. Alternatively, $(\hb_{v}^\tup{0})_{v\in V(G)}$ can be arbitrary vertex features annotating the vertices of $G$.

A GNN architecture consists of a stack of neural network layers, i.e., a composition of permutation-invariant or -equivariant parameterized functions. Similarly to \wlone, each layer aggregates local neighborhood information, i.e., the neighbors' features, around each vertex and then passes this aggregated information on to the next layer.

GNNs are often realized as follows~\citep{Mor+2019}. In each layer, $t > 0$, we compute vertex features
\begin{equation}\label{eq:basicgnn}
	\hb_{v}^\tup{t} \coloneqq \sigma \Big( 	\hb_{v}^\tup{t-1} \mW^{(t)}_0 + \sum_{{w \in N(v)}} \hb_{w}^\tup{t-1} \mW_1^{(t)} \Big) \in \bbR^{e},
\end{equation}
for $v$ in $V(G)$, where
$\mW_0^{(t)}$ and $\mW_1^{(t)}$ are parameter matrices from $\bbR^{d \times e}$ and $\sigma$ denotes an entry-wise non-linear function, e.g., a sigmoid or a ReLU function.\footnote{For clarity of presentation, we omit biases.} Following \citet{Gil+2017} and \citet{Sca+2009}, in each layer, $t > 0$, we can generalize the above by computing a vertex feature
\begin{equation*}\label{def:gnn}
	\hb_{v}^\tup{t} \coloneqq
	\UPD^\tup{t}\Bigl(\hb_{v}^\tup{t-1},\AGG^\tup{t} \bigl(\oms \hb_{w}^\tup{t-1}
	\mid w\in N(v) \cms \bigr)\Bigr), % \in \RR^{d},   
\end{equation*}
where  $\UPD^\tup{t}$ and $\AGG^\tup{t}$ may be differentiable parameterized functions, e.g., neural networks.\footnote{Strictly speaking, \citet{Gil+2017} consider a slightly more general setting in which vertex features are computed by $\hb_{v}^\tup{t+1} \coloneqq
		\UPD^\tup{t+1}\Bigl(\hb_{v}^\tup{t},\AGG^\tup{t+1} \bigl(\oms (\hb_v^\tup{t},\hb_{w}^\tup{t},\ell(v,w))
		\mid w\in N(v) \cms \bigr)\Bigr)$.}
In the case of graph-level tasks, e.g., graph classification, one uses
\begin{equation*}\label{readout}
	\hb_G \coloneqq \RO\bigl( \oms \hb_{v}^{\tup{T}}\mid v\in V(G) \cms \bigr), % \in \RR^{d},
\end{equation*}
to compute a single vectorial representation based on learned vertex features after iteration $T$. Again, $\RO$  may be a differentiable parameterized function. To adapt the parameters of the above three functions, they are optimized end-to-end, usually through a variant of stochastic gradient descent, e.g.,~\citep{Kin+2015}, together with the parameters of a neural network used for classification or regression.

\paragraph{Graph Neural Networks for Multi-relational Graphs.}
In the following, we describe GNN layers for multi-relational graphs, namely \rgcn{}~\citep{Sch+2019} and \comp{}~\citep{Vas+2020}. Initial features are computed in the same way as in the previous subsection.

\paragraph{\textsf{R\text{-}GCN}} Let $G$ be a labeled % or attributed 
multi-relational graph. In essence, \rgcn\ generalizes Equation~\ref{eq:basicgnn} by using an additional sum iterating over the different relations. That is, we compute a vertex feature
\begin{align}\label{eq:rgcn}
	\hb_{v, \rgcn}^\tup{t} \coloneqq \sigma \Big( 	\hb_{v, \rgcn}^\tup{t-1}  \mW^{(t)}_0 +  \sum_{i \in [r]}  \sum_{{w \in N_i(v)}}\! \hb_{w,\rgcn}^\tup{t-1} \mW_i^{(t)} \Big)  \in \RR^{e},
\end{align}
for $v$ in $V(G)$, where $\mW_0^{(t)}$ and $\mW_i^{(t)}$ for $i$ in $[r]$ are parameter matrices from $\bbR^{d \times e}$, and $\sigma$ denotes a entry-wise non-linear function. We note here that the original \rgcn{} layer defined in~\cite{Sch+2019} uses a mean operation instead of a sum in the most inner sum of Equation~\ref{eq:rgcn}. We investigate the empirical advantages of these two variations in Section~\ref{sec:experiments}.

\paragraph{\textsf{CompGCN}}  Let $G$ be a labeled %or attributed
multi-relational graph.  A \comp layer generalizes~ Equation~\ref{eq:basicgnn} by encoding relational information as edge features. That is, we compute a vertex feature
\begin{align}\label{eq:comp}
	\hb_{v, \comp}^\tup{t} \coloneqq \sigma \Big(\hb_{v, \comp}^\tup{t-1} \mW^{(t)}_0 +
	\sum_{i \in [r]} \sum_{w \in N_i(v)}  \phi \big(\hb_{w, \comp}^\tup{t-1}, \zb_i^{(t)} \big)  \mW_1^{(t)} \Big)\in \RR^{e},
\end{align}
for $v$ in $V(G)$, where $\mW_0^{(t)}$ and $\mW_1^{(t)}$ are parameter matrices from $\bbR^{d \times e}$ and $\bbR^{c \times e}$, respectively, and $\zb_i^{(t)}$ in $\bbR^{b}$ is the learned edge feature for the $i$th relation at layer $t$. Further, the function $\phi \colon \bbR^d \times  \bbR^b \to \bbR^c$
is a \new{composition map}, mapping two vectors onto a single vector in a non-parametric way, e.g., summation, point-wise multiplication, or concatenation. We note here that the original \comp layer defined in~\cite{Vas+2020} uses an additional sum to differentiate between in-going and out-going edges and self loops, see Section~\ref{APP:comp} for details.

\section{Relational Weisfeiler--Leman Algorithm}\label{sec:multi-wl}

In the following, to study the limitations in expressivity of the above two GNN layers, \rgcn\ and \comp, we define the \new{multi-relational \wlone} (\rwlone). Let $G=(V(G),R_1(G),\dots, R_r(G),\ell)$ be a labeled, multi-relational graph. Then the \rwlone{} computes a vertex coloring $C^{(t)}_{\textsf{R}} \colon V(G) \to \bbN$ for $t > 0$ by interpreting the different relations as edge types, i.e.,
\begin{align}\label{wlscomp}
	C^{(t)}_{\textsf{R}}(v) \coloneqq \RL \Big(\!\big(C^{(t-1)}_{\textsf{R}}(v), \oms (C^{(t-1)}_{\textsf{R}}(u), i) \mid i \in [r], u \in\!N_i(v) \cms \big)\! \Big),
\end{align}
for $v$ in $V(G)$.
%, where $l_r$ is  a (discrete) label representing the $r$th relation, e.g., simply $l_r = r$.  
In iteration $0$, the coloring $C^{(0)}_{\textsf{R}} \coloneqq \ell$.
In particular, two vertices $v$ and $w$ of the same color in iteration $(t-1)$ get different colors in iteration $t$ if there is a relation $R_i$ such that the number of neighbors in $N_i(v)$ and $N_i(w)$ colored with a certain color is different. We define the stable coloring $C^{\infty}_{\textsf{R}}$ in the expected way, analogously to the \wlone.

\paragraph{Relationship Between \wlone{} and \rwlone} Since \wlone{} does not consider edge labels it is clear that \rwlone{} is strictly stronger than the \wlone{}. For example, take a pair of isomorphic graphs and label the edges differently in each graphs, making the graph non-isomorphic. Clearly, \rwlone{} will distinguish them while \wlone{} will not.

\paragraph{Relationship Between \rwlone, \rgcn, and \comp\!\!}
\citet{Mor+2019,Xu+2018b} established the exact relationship between the expressive power of \wlone{} and GNNs.
In particular, \wlone{} upper bounds the capacity of any GNN architecture for distinguishing vertices in graphs. In turn,
over every graph $G$ there is a GNN architecture with the same expressive power as $\wlone$ for distinguishing vertices in $G$. In this section, we show that the same relationship can be established between multi-relational $\wlone$, on the one hand, and the \rgcn\ and \comp architectures, on the other.

Let $G=(V(G),R_1(G), \dots, R_r(G),\ell)$ be a labeled, multi-relational graph, and let $$\mathbf{W}_\rgcn^\tup{t} \ = \ \big(\mW^{(t')}_0, \mW^{(t')}_{i} \big)_{t'\leq t, i \in [r]}$$
denote the sequence of \rgcn\ parameters given by Equation~\ref{eq:rgcn} up to iteration $t$.
Analogously, we denote by $$\mathbf{W}_\comp^\tup{t} \ = \ (\mW^{(t')}_0, \mW^{(t')}_{1},\zb_i^{(t')})_{t'\leq t, i \in [r]}$$ the sequence of
\comp parameters given by Equation~\ref{eq:comp} up to iteration $t$. We first show that the multi-relational \wlone{} upper bounds the expressivity of both the \rgcn{} and \comp{} layers in terms of their capacity to distinguish vertices in labeled
multi-relational graphs.
\medskip
\begin{theorem}
	\label{thm:mrgcn-upper}
	{\em Let $G = (V(G), R_1(G), \dots, R_r(G), \ell)$ be a labeled, multi-relational graph. Then for all $t \geq 0$ the following holds:
		\begin{itemize}
			\item For all choices of initial vertex features consistent with $\ell$, sequences $\mathbf{W}_\rgcn^{(t)}$ of \rgcn\ parameters,
			      and vertices $v$ and $w$ in $V(G)$,
			      \begin{equation*}
				      C^{(t)}_{\textsf{R}}(v) =  	C^{(t)}_{\textsf{R}}(w) \ \ \Longrightarrow \ \   \hb_{v, \rgcn}^\tup{t} = \hb_{w, \rgcn}^\tup{t}.
			      \end{equation*}
			\item For all choices of initial vertex features consistent with $\ell$, sequences $\mathbf{W}_\comp^{(t)}$ of \comp
			      parameters,
			      composition functions $\phi$, and vertices $v$ and $w$ in $V(G)$,
			      \begin{equation*}
				      C^{(t)}_{\textsf{R}}(v) =  	C^{(t)}_{\textsf{R}}(w) \ \ \Longrightarrow \ \  \hb_{v, \comp}^\tup{t} = \hb_{w, \comp}^\tup{t}.
			      \end{equation*}
		\end{itemize} }
\end{theorem}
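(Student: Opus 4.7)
The plan is to prove both statements simultaneously by induction on the iteration number $t$, mirroring the classical argument that $\wlone$ upper bounds standard \mpnn{}s (see \citet{Mor+2019,Xu+2018b}) but tracking the information per relation. The base case $t=0$ is immediate: by hypothesis $(\hb_v^{(0)})_{v\in V(G)}$ is consistent with $\ell$ and $C^{(0)}_{\textsf{R}} = \ell$, so $C^{(0)}_{\textsf{R}}(v) = C^{(0)}_{\textsf{R}}(w)$ implies $\hb_{v,\rgcn}^{(0)} = \hb_{w,\rgcn}^{(0)}$ and analogously for \comp.

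For the inductive step, assume the implication holds at step $t-1$ for all vertices and fix $v,w$ with $C^{(t)}_{\textsf{R}}(v) = C^{(t)}_{\textsf{R}}(w)$. The key step is to exploit that $\RL$ in Equation~\ref{wlscomp} is injective, so this equality forces both
\begin{equation*}
	C^{(t-1)}_{\textsf{R}}(v) = C^{(t-1)}_{\textsf{R}}(w)
	\quad\text{and}\quad
	\oms (C^{(t-1)}_{\textsf{R}}(u),i) \mid i\in[r],\, u\in N_i(v) \cms = \oms (C^{(t-1)}_{\textsf{R}}(u),i) \mid i\in[r],\, u\in N_i(w) \cms.
\end{equation*}
Because the second tag $i$ is preserved, the latter equality actually splits per relation: for every $i\in[r]$, $\oms C^{(t-1)}_{\textsf{R}}(u) \mid u\in N_i(v) \cms = \oms C^{(t-1)}_{\textsf{R}}(u) \mid u\in N_i(w) \cms$. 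Hence for each $i$ there is a bijection $\pi_i\colon N_i(v)\to N_i(w)$ such that $C^{(t-1)}_{\textsf{R}}(u) = C^{(t-1)}_{\textsf{R}}(\pi_i(u))$, and by the inductive hypothesis the corresponding $(t-1)$-features coincide.

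It remains to push these equalities through Equations~\ref{eq:rgcn} and~\ref{eq:comp}. For \rgcn, the inner sum $\sum_{u\in N_i(v)} \hb_{u,\rgcn}^{(t-1)}\mW_i^{(t)}$ only depends on the multiset $\oms \hb_{u,\rgcn}^{(t-1)} \mid u\in N_i(v) \cms$, which by the bijection $\pi_i$ equals the analogous multiset at $w$; summing over $i$ and combining with $\hb_{v,\rgcn}^{(t-1)}\mW_0^{(t)} = \hb_{w,\rgcn}^{(t-1)}\mW_0^{(t)}$ and applying $\sigma$ entry-wise gives $\hb_{v,\rgcn}^{(t)} = \hb_{w,\rgcn}^{(t)}$. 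For \comp, the same argument works because $\phi(\cdot,\zb_i^{(t)})$ is a fixed function of its first argument once $\zb_i^{(t)}$ is fixed, so $\oms \phi(\hb_{u,\comp}^{(t-1)},\zb_i^{(t)}) \mid u\in N_i(v)\cms$ is likewise determined by the multiset of $(t-1)$-features in $N_i(v)$.

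No step is conceptually difficult: the proof is essentially bookkeeping. The only subtle point worth flagging is that the \emph{per-relation} multiset equality must be extracted from the single tagged multiset produced by $\RL$; this is where the tagging of neighbors by their relation index $i$ in Equation~\ref{wlscomp} becomes essential, and it is also what makes \rwlone{} a tighter bound than plain \wlone{}. Everything else reduces to the observation that sums, composition maps, linear maps, and entry-wise non-linearities are all permutation-invariant functions of the underlying multisets.
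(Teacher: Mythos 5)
Your proposal is correct and follows essentially the same route as the paper's proof: induction on $t$, with the base case from consistency of the initial features with $\ell$, the inductive step extracting the per-relation multiset equality from the injectivity of $\RL$ on the relation-tagged multiset, and then pushing the resulting feature equalities through Equations~\ref{eq:rgcn} and~\ref{eq:comp}. The paper writes out only the \comp{} case (noting \rgcn{} is analogous) and phrases the multiset step directly rather than via explicit bijections $\pi_i$, but these are presentational differences only.
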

Noticeably, the converse also holds. That is, there is a sequence of parameter matrices $\mathbf{W}_\rgcn^{(t)}$ such that \rgcn\ has the same expressive power in terms of distinguishing vertices in graphs as the coloring $C^{(t)}_{\textsf{R}}$. This equivalence holds provided the initial labels are encoded by linearly independent vertex features, e.g., using one-hot encodings. The result also holds for \comp as long as the composition map $\phi$ can express vector scaling, e.g., $\phi$ is point-wise multiplication or circular correlation, two of the composition functions studied and implemented in the paper that introduced the \comp architecture \citep{Vas+2020}.
\medskip
\begin{theorem}\label{thm:mrgcn-lower}
	{\em Let $G=(V(G), R_1(G), \dots, R_r(G), \ell)$ be a labeled, multi-relational graph. Then for all $t \geq 0$ the following holds:
		\begin{itemize}
			\item
			      There are initial vertex features and a sequence $\mathbf{W}_\rgcn^{(t)}$ of parameters such that for all $v$ and $w$ in $V(G)$,
			      \begin{equation*}
				      C^{(t)}_{\textsf{R}}(v) =  	C^{(t)}_{\textsf{R}}(w) \ \ \Longleftrightarrow \ \   \hb_{v, \rgcn}^\tup{t} = \hb_{w, \rgcn}^\tup{t}.
			      \end{equation*}
			\item
			      There are initial vertex features, a sequence $\mathbf{W}_\comp^{(t)}$ of parameters
			      and a composition function $\phi$ such that for all $v$ and $w$ in $V(G)$,
			      \begin{equation*}
				      C^{(t)}_{\textsf{R}}(v) =  	C^{(t)}_{\textsf{R}}(w) \ \ \Longleftrightarrow \ \   \hb_{v, \comp}^\tup{t} = \hb_{w, \comp}^\tup{t}.
			      \end{equation*}
		\end{itemize} }
\end{theorem}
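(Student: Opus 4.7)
The plan is to prove both statements by induction on $t$, leveraging Theorem~\ref{thm:mrgcn-upper} for the forward implication ``$C^{(t)}_{\textsf{R}}(v) = C^{(t)}_{\textsf{R}}(w) \Rightarrow \hb_{v}^{(t)} = \hb_{w}^{(t)}$'' and explicitly constructing initial features together with parameter sequences $\mathbf{W}_{\rgcn}^{(t)}$, respectively $\mathbf{W}_{\comp}^{(t)}$, that witness the converse. For the base case I would take initial features to be the one-hot encoding of $\ell$; these are consistent with $\ell$ and assign distinct initial colors to pairwise linearly independent feature vectors. I then strengthen the inductive hypothesis to: after $t-1$ iterations the vectors $\hb_v^{(t-1)}$ arising from distinct $C^{(t-1)}_{\textsf{R}}$-colors are pairwise linearly independent (equivalently, can be viewed as standard basis vectors indexed by colors).

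For the \rgcn inductive step, I would choose the output dimension large enough to split into $r+1$ disjoint coordinate slots. Pick $\mW_0^{(t)}$ so as to embed $\hb_v^{(t-1)}$ into slot $0$, and each $\mW_i^{(t)}$ so as to embed $\hb_w^{(t-1)}$ into slot $i$ with zeros elsewhere. The pre-activation of Equation~\ref{eq:rgcn} then becomes, slot by slot, the tuple $\bigl(\hb_v^{(t-1)},\; \sum_{w \in N_1(v)} \hb_w^{(t-1)},\; \ldots,\; \sum_{w \in N_r(v)} \hb_w^{(t-1)}\bigr)$. The key lemma I would invoke is the standard fact that summation of linearly independent basis vectors is injective on finite multisets; applied per slot, this makes each $\sum_{w \in N_i(v)} \hb_w^{(t-1)}$ an injective encoding of the multiset $\oms \hb_w^{(t-1)} \mid w \in N_i(v) \cms$, so the whole pre-activation is an injective function of the pair that determines $C^{(t)}_{\textsf{R}}(v)$ in Equation~\ref{wlscomp}, and hence distinguishes all distinct \rwlone-colors at step $t$.

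For the \comp case, I would replay the same construction with $\phi$ chosen to express vector scaling, e.g., pointwise multiplication. Using the previous iteration's $\mW_1^{(t-1)}$, arrange that each $\hb_w^{(t-1)}$ is $r$ tiled copies of a color-specific block, then pick $\zb_i^{(t)}$ as the indicator that keeps only the $i$-th tile and zeros the rest; then $\phi(\hb_w^{(t-1)}, \zb_i^{(t)})$ plays precisely the role of ``$\hb_w^{(t-1)}$ placed into the relation-$i$ slot'', and a single block-diagonal $\mW_1^{(t)}$ routes each slot to a relation-specific output subspace, mirroring the per-relation $\mW_i^{(t)}$'s of the \rgcn construction. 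An analogous slot-selector choice of $\zb_i^{(t)}$ handles circular correlation via its shift structure.

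The main obstacle I expect is controlling the non-linearity $\sigma$: the pre-activation is provably multiset-injective by the construction above, but $\sigma$ could in principle collapse distinct pre-activation vectors. Since $G$ is finite, only finitely many pre-activation vectors arise across all vertices, so I would pre-scale the parameter matrices (large gain, suitable shift) so that every active coordinate lands in a region where $\sigma$ acts injectively, e.g., the positive branch of \relu or an unsaturated range of a sigmoid. A final linear re-encoding, absorbed into $\mW_0^{(t+1)}$, restores the ``one basis vector per color'' invariant needed to close the induction. The secondary subtlety, namely that \comp's shared $\mW_1^{(t)}$ can match \rgcn's per-relation $\mW_i^{(t)}$'s, is precisely what the vector-scaling assumption on $\phi$ is designed to bridge.
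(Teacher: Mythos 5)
Your overall architecture (induction on $t$, forward direction from Theorem~\ref{thm:mrgcn-upper}, an invariant that distinct \rwlone-colors are realized by linearly independent feature vectors, and multiset-injectivity of sums of such vectors) matches the paper's strategy, which formalizes the invariant as ``row-independence modulo equality'' of the feature matrix. Your relation-slot construction (splitting the output into $r+1$ coordinate blocks, with $\zb_i$ acting as a tile selector under point-wise multiplication) is a legitimate alternative to the paper's more compact trick of taking $\alpha_i = n^i$ and packing the per-relation neighbor counts into a base-$n$ representation $\mE = \mF\mM + \sum_i n^i \mA_i\mF\mM$; both yield a pre-activation that injectively encodes the pair defining $C^{(t)}_{\textsf{R}}(v)$.

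However, there is a genuine gap in how you close the induction. After the injective pre-activation and an injectivity-preserving $\sigma$, you obtain one \emph{distinct} vector per new color, but these $p$ vectors live in a space of dimension roughly $(r+1)q$, and $p$ can exceed that dimension (and in any case distinct vectors need not be linearly independent). Your proposed fix --- ``a final linear re-encoding, absorbed into $\mW_0^{(t+1)}$, restores the one-basis-vector-per-color invariant'' --- cannot work: a linear map preserves every linear dependency among the $u_c$'s, so if $\sum_c \lambda_c u_c = 0$ nontrivially then no linear $T$ makes the $Tu_c$ linearly independent. Restoring the invariant requires a \emph{non-linear} step, and this is exactly the role of the paper's key ingredient (Lemma~\ref{APP:lemma:sign-matrix}, from \citet{Mor+2019}): for a matrix $\mB$ with pairwise distinct rows there is $\mX$ with $\sign(\mB\mX - \mJ)$ non-singular, so choosing $\sigma = \sign$ with bias $-1$ turns distinctness back into linear independence (ReLU is then handled by a separate simulation argument, not by operating on its identity branch, where it contributes nothing). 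Without this lemma or an equivalent non-linear re-encoding, your induction hypothesis fails at step $t+1$ and the multiset-sum injectivity you rely on in the next layer breaks down.
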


\paragraph{On the Choice of the Composition Function for \comp{} Architectures}
As Theorem~\ref{thm:mrgcn-lower} shows the expressive power of the  $\rwlone$ is matched by that of the \comp\ architectures if we allow the latter to implement vector scaling in composition functions.
However, not all composition maps that have been considered in relationship with \comp\ architectures admit such a possibility.
Think, for instance, of natural composition maps such as point-wise summation or vector concatenation. Interestingly, we can show that \comp\ architectures equipped with these composition maps are provably weaker in terms of expressive power than the ones studied in the proof of Theorem~\ref{thm:mrgcn-lower}, as they correlate with a weaker variant of $\wlone$ that we define next.

Let $G=(V(G),R_1(G),\dots, R_r(G),\ell)$ be a labeled, multi-relational graph. The \emph{weak multi-relational \wlone}  computes a vertex coloring $C^{(t)}_{\textsf{WR}} \colon V(G) \to \bbN$ for $t > 0$ as follows:
\begin{equation*}\label{eq:weak-wlscomp}
	C^{(t)}_{\textsf{WR}}(v) \coloneqq \RL \Big(\!\big(C^{(t-1)}_{\textsf{WR}}(v), \oms C^{(t-1)}_{\textsf{WR}}(u) \mid  i \in [r], u \in\! N_i(v)\cms, |N_1(v)|, \dots, |N_r(v)| \big)\! \Big),
\end{equation*}
for $v$ in $V(G)$. In iteration $0$, the coloring $C^{(0)}_{\textsf{WR}} \coloneqq \ell$. During aggregation, the weak variant does not take information about the relations into account. The only information relative to the different relations is the number of neighbors associated with each of them. We define the stable coloring $C^{\infty}_{\textsf{WR}}$ analogously to the \wlone. As it turns out, this variant is less powerful than the original one.
\medskip
\begin{proposition}
	\label{prop:weak-wl-sep}
	{\em
		There exist a labeled, multi-relational graph $G=(V(G),R_1(G),R_2(G),\ell)$ and two vertices $v$ and $w$ in $V(G)$, such that $C^{(1)}_{\textsf{R}}(v)\neq C^{(1)}_{\textsf{R}}(w)$ but $C^{\infty}_{\textsf{WR}}(v) = C^{\infty}_{\textsf{WR}}(w)$.
	}
\end{proposition}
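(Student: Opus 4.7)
The plan is to exhibit an explicit small counterexample on four vertices in which two vertices $v$ and $w$ share exactly the same unordered set of neighbors, but those neighbors are distributed differently across $R_1$ and $R_2$. The weak variant $C^{(t)}_{\textsf{WR}}$ only sees the pooled multiset of neighbor colors together with the per-relation degrees, so it cannot distinguish such $v$ and $w$, whereas \rwlone, which tags each neighbor with the relation it comes from, does so after a single iteration. The whole design therefore reduces to picking the smallest graph in which the relation-tag of each edge is the \emph{only} informative piece missing from the weak aggregator's input at $v$ and $w$.

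Concretely, I would take $V(G) = \{v, w, a, b\}$ with $\ell(v) = \ell(w) = 0$, $\ell(a) = 1$, $\ell(b) = 2$, and set $R_1(G) = \{\{v, a\}, \{w, b\}\}$ and $R_2(G) = \{\{v, b\}, \{w, a\}\}$. For the strict inequality, I would compute one round of \rwlone: the relation-tagged neighbor multiset of $v$ is $\{\!\!\{(1, 1), (2, 2)\}\!\!\}$, while that of $w$ is $\{\!\!\{(2, 1), (1, 2)\}\!\!\}$. These two multisets are distinct, and $v, w$ share the initial color $0$, so $\RL$ assigns them different colors, giving $C^{(1)}_{\textsf{R}}(v) \neq C^{(1)}_{\textsf{R}}(w)$.

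For the equality $C^{\infty}_{\textsf{WR}}(v) = C^{\infty}_{\textsf{WR}}(w)$, I would prove by induction on $t$ that $C^{(t)}_{\textsf{WR}}(v) = C^{(t)}_{\textsf{WR}}(w)$, which suffices because the stable coloring refines every $C^{(t)}_{\textsf{WR}}$. The base case $t = 0$ holds since $\ell(v) = \ell(w)$. For the inductive step, note that $N_1(v) \cup N_2(v) = \{a, b\} = N_1(w) \cup N_2(w)$ and $|N_1(v)| = |N_2(v)| = |N_1(w)| = |N_2(w)| = 1$. Hence the pooled neighbor-color multiset fed into $\RL$ equals $\{\!\!\{C^{(t)}_{\textsf{WR}}(a), C^{(t)}_{\textsf{WR}}(b)\}\!\!\}$ on both sides, the per-relation degree tuples coincide, and by the inductive hypothesis the current colors at $v$ and $w$ agree; so $\RL$ returns the same value.

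There is no real obstacle here beyond the design of the example: the single idea is to force the underlying neighbor \emph{sets} of $v$ and $w$ to coincide (at $\{a,b\}$, with matching per-relation degrees) while arranging that the \emph{which-neighbor-through-which-relation} information is genuinely different. This is exactly the information discarded by the weak aggregator but preserved by \rwlone, so the example simultaneously witnesses both halves of the proposition.
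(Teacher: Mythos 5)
Your construction is literally the paper's own example up to renaming ($a,b$ for $u_1,u_2$), and both halves are verified the same way; your explicit induction for the stable-coloring claim is a minor elaboration of what the paper leaves implicit. Correct, same approach.
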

As shown next, the expressive power of \comp\ architectures that use point-wise summation
or vector concatenation is captured by this
weaker form of \rwlone.
\medskip
\begin{theorem}\label{thm:weak-comp-wl}
	{\em
		Let $G = (V(G), R_1(G), \dots, R_r(G), \ell)$ be a labeled, multi-relational graph. Then for all $t \geq 0$ the following holds:
		\begin{itemize}
			\item For all choices of initial vertex features consistent with $\ell$,
			      sequences $\mathbf{W}_\comp^{(t)}$ of \comp\ parameters, and vertices
			      $v$ and $w$ in $V(G)$,
			      \begin{equation*}
				      C^{(t)}_{\textsf{WR}}(v) =  	C^{(t)}_{\textsf{WR}}(w) \ \ \Longrightarrow \ \  \hb_{v, \comp}^\tup{t} = \hb_{w, \comp}^\tup{t},
			      \end{equation*}
			      for either point-wise summation or concatenation as the composition map.

			\item There exist initial vertex features and a sequence $\mathbf{W}_\comp^{(t)}$ of \comp\
			      parameters, such that for all vertices $v$ and $w$ in $V(G)$,
			      \begin{equation*}
				      C^{(t)}_{\textsf{WR}}(v) =  	C^{(t)}_{\textsf{WR}}(w) \ \ \Longleftrightarrow \ \   \hb_{v, \comp}^\tup{t} = \hb_{w, \comp}^\tup{t},
			      \end{equation*}
			      for either point-wise summation or concatenation as the composition map.
		\end{itemize}
	}
\end{theorem}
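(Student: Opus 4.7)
The plan is to prove both bullets by exploiting an algebraic decomposition of the \comp\ aggregation that is available precisely for the two composition maps in question. For point-wise summation, linearity of $\mW_1^{(t)}$ gives
\begin{equation*}
\sum_{i \in [r]} \sum_{w \in N_i(v)} \phi\bigl(\hb_w^\tup{t-1}, \zb_i^{(t)}\bigr) \mW_1^{(t)} = \Bigl(\sum_{i \in [r]}\sum_{w \in N_i(v)} \hb_w^\tup{t-1}\Bigr)\mW_1^{(t)} + \Bigl(\sum_{i \in [r]} |N_i(v)|\, \zb_i^{(t)}\Bigr) \mW_1^{(t)},
\end{equation*}
and the same identity holds for concatenation once $\mW_1^{(t)}$ is split into the two row-blocks acting on the vertex- and relation-parts. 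Hence $\hb_{v, \comp}^\tup{t}$ is a deterministic function of the tuple $(\hb_v^\tup{t-1},\,\sum_{i,\,w \in N_i(v)} \hb_w^\tup{t-1},\,|N_1(v)|,\dots,|N_r(v)|)$, which is precisely the information aggregated by $C^{(t)}_{\textsf{WR}}$.

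For the first bullet I would induct on $t$. The base case is immediate from consistency of initial features with $\ell = C^{(0)}_{\textsf{WR}}$. For the inductive step, $C^{(t)}_{\textsf{WR}}(v) = C^{(t)}_{\textsf{WR}}(w)$ forces $C^{(t-1)}_{\textsf{WR}}(v) = C^{(t-1)}_{\textsf{WR}}(w)$, equality of the combined neighbor-color multisets, and equality of the relation-wise degrees. The inductive hypothesis upgrades color equalities to feature equalities, so in particular the neighbor-feature sums coincide, and the decomposition above yields $\hb_{v, \comp}^\tup{t} = \hb_{w, \comp}^\tup{t}$.

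For the second bullet I would follow the standard lower-bound template used in the proof of Theorem~\ref{thm:mrgcn-lower}, adapted so the construction stays within the expressive reach of summation/concatenation composition. Initialize $\hb_v^\tup{0}$ as a one-hot encoding of $\ell(v)$, and inductively preserve the invariant that $\hb_v^\tup{t}$ is a one-hot encoding of $C^{(t)}_{\textsf{WR}}(v)$. Choose each $\zb_i^{(t)}$ to live in a distinguished coordinate block (e.g.\ a standard basis vector) so that $\sum_i |N_i(v)|\,\zb_i^{(t)}$ recovers the vector of relation degrees, and pick $\mW_0^{(t)}, \mW_1^{(t)}$ so that the pre-activation stacks $\hb_v^\tup{t-1}$, the sum $\sum_{i,\,w \in N_i(v)} \hb_w^\tup{t-1}$ (which, since the summands are one-hot, equals the histogram of neighbor colors and therefore determines the multiset), and the recovered degree vector into disjoint coordinate blocks. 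A final injective re-encoding then produces the one-hot indexed by $C^{(t)}_{\textsf{WR}}(v)$.

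The main obstacle is realizing that injective re-encoding inside the rigid template of one \comp\ layer, whose post-processing is limited to a coordinate-wise non-linearity $\sigma$. I would resolve this exactly as in Theorem~\ref{thm:mrgcn-lower}: either widen the hidden layer and use $\sigma$ to separate the finitely many pre-activation values that actually occur across $V(G)$, or interleave an auxiliary \comp\ layer with $\mW_1 = \bm{0}$ (effectively a feed-forward map on $\hb_v$) that implements the re-encoding. Either option preserves the one-hot invariant and establishes the desired biconditional between $C^{(t)}_{\textsf{WR}}$ and $\hb_{v, \comp}^\tup{t}$ uniformly for point-wise summation and concatenation.
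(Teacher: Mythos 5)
Your proposal is correct and follows essentially the same route as the paper: the first bullet uses the identical decomposition of the aggregation into a neighbor-feature sum plus a degree-weighted sum of the $\zb_i^{(t)}$ (with $\mW_1^{(t)}$ split into row-blocks for concatenation), followed by the same induction, and the second bullet instantiates the same layer-by-layer lower-bound template as Theorem~\ref{thm:mrgcn-lower}. The only cosmetic difference is that the paper maintains the slightly weaker invariant of ``row-independence modulo equality'' (resolving your injective re-encoding step via the sign-matrix lemma of \citet{Mor+2019}) rather than a strict one-hot invariant, and handles point-wise summation by simulating one concatenation layer with two summation layers---which is precisely your ``distinguished coordinate block'' device made explicit.
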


Together with Proposition \ref{prop:weak-wl-sep} and Theorem \ref{thm:mrgcn-lower}, this result states that \comp\
architectures based on vector summation or concatenation  are provably weaker in terms of their capacity to distinguish vertices in graphs than the ones that use vector scaling.

We have shown that \rgcn{} and \comp{} with point-wise multiplication have the same expressive power in terms of distinguishing non-isomorphic multi-relational graphs or distinguishing vertices in a multi-relational graph. As it turns out, these two architectures actually define the \emph{same} functions. A similar result holds between \comp{} with vector summation/subtraction and concatenation. See Appendix \ref{APP:sec:comparison} for details.

%\subsection{A comparison between \rgcn\ and \comp architectures} \label{sec:comparison}

%\mr{Maybe append this section at the end of the previous section and move the proposition statements to the Appendix?}

%\mr{A similar result holds between \comp{} with summation and \comp{} witn concatenation, and the simulations are implicit in the proof of Theorem \ref{thm:weak-comp-wl}. Mention something about this?}

%\mr{Emphasize that from the previous results, it follows that there is a \rgcn (or \comp{} with multiplication) that cannot be simulated (even modulo equality) by \comp{} with summation or concatenation? This even holds in the non-uniform setting (over the graph of Proposition \ref{prop:weak-wl-sep}).}

\section{Limitations and More Expressive Architectures}\label{sec:more_exp}

Theorem~\ref{thm:mrgcn-upper} shows that both \rgcn{} as well as \comp have severe limitations in distinguishing structurally different multi-relational graphs. Indeed, the following results shows that there exists pairs of non-isomorphic, multi-relational graphs that neither \rgcn{} nor \comp can distinguish.
\medskip
\begin{proposition}\label{upper}
	{\em For all $r \geq 1$, there exists a pair of non-isomorphic graphs $G = (V(G), R_1(G), \dots, R_r(G), \ell)$ and $H= (V(H), R_1(H), \dots, R_r(H), \ell)$ that cannot be distinguished by \rgcn{} or \comp. }
\end{proposition}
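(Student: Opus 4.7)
The plan is to reduce to the classical failure of $\wlone$ on non-isomorphic graphs and then invoke Theorem~\ref{thm:mrgcn-upper}. It suffices to exhibit a pair of non-isomorphic labeled multi-relational graphs $G$ and $H$ whose $\rwlone$ stable color multisets coincide: applying Theorem~\ref{thm:mrgcn-upper} inside the disjoint union $G \sqcup H$ forces every $\rgcn$ and every $\comp$ instance to produce identical vertex-feature multisets on $G$ and $H$, so no permutation-invariant readout can tell them apart.

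First I would recall the well-known $\wlone$ counterexample in the single-relation case: take $G_0 \coloneqq C_6$ (the $6$-cycle) and $H_0 \coloneqq 2K_3$ (the disjoint union of two triangles), both on six vertices with the constant vertex label $\ell \equiv 0$. Since both graphs are $2$-regular and share the same constant initial coloring, a straightforward induction on $t$ shows that every vertex in $G_0$ and every vertex in $H_0$ receives the same $\wlone$ color at every iteration. In particular their color multisets coincide, even though $G_0$ is connected and $H_0$ is not, so $G_0 \not\simeq H_0$.

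Next, for arbitrary $r \geq 1$, I would lift this example by placing all edges into the first relation and leaving the rest empty: define $G \coloneqq (V(G_0), E(G_0), \emptyset, \dots, \emptyset, \ell)$ and $H \coloneqq (V(H_0), E(H_0), \emptyset, \dots, \emptyset, \ell)$. Since $N_i(v) = \emptyset$ for every vertex and every $i \geq 2$, the $\rwlone$ update in Equation~\ref{wlscomp} is determined entirely by the first-relation neighborhood and acts as a relabeling of the ordinary $\wlone$ update on the underlying graph. An easy induction on $t$ then yields that all vertices in $G$ and $H$ share the same $\rwlone$ color at every iteration; hence the stable color multisets of $G$ and $H$ coincide, and the conclusion follows from Theorem~\ref{thm:mrgcn-upper}.

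The main (rather minor) obstacle is bookkeeping in the induction step, namely verifying that the extra empty-multiset entries contributed by the relations $R_2, \dots, R_r$ inside Equation~\ref{wlscomp} do not introduce spurious distinctions; since this contribution is identical for every vertex in both graphs, the induction goes through unchanged.
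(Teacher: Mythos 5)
Your proposal is correct and follows essentially the same route as the paper: both reduce to the classical $C_6$ versus $2K_3$ counterexample for \wlone{} and then invoke Theorem~\ref{thm:mrgcn-upper} to transfer the indistinguishability to \rgcn{} and \comp. The only (immaterial) difference is that the paper replicates the edge set across all $r$ relations, whereas you place it in $R_1$ and leave the remaining relations empty; both embeddings make \rwlone{} collapse to \wlone{} on these graphs.
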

We note here that the two graphs $G$ and $H$ from the above theorem can also be used to show that neither \rgcn{} nor \comp
will be able to compute different features for vertices in $G$ and $H$, making them indistinguishable. Hence, to overcome the limitations of the \comp{} and \rgcn, we introduce \emph{local $k$-order relational networks} (\krns{k}), leveraging recent progress in overcoming GNNs' inherent limitations in expressive power~\citep{Mor+2019,Morris2020b,Mor+2021b,Mor+2022b}. To do so, we first extend the local $k$-dimensional Weisfeiler--Leman algorithm~\citep{Morris2020b}, see Subsection~\ref{APP:vr_ext}, to multi-relational graphs.

\paragraph{Multi-relational Local \kwl} Given a multi-relational graph $G = (V(G), R_1(G), \dots, R_r(G), \ell)$, we define the \new{multi-relational atomic type} $\mathsf{atp}_r \colon V(G)^k \to \bbN$ such  that $\mathsf{atp}_r(\vec{v}) = \mathsf{atp}_r(\vec{w})$ for $\vec{v}$ and $\vec{w}$ in $V(G)^k$ if and only if the mapping $\varphi\colon V(G) \to V(G)$ where $v_g \mapsto w_g$ induces a partial isomorphism, preserving the relations, i.e., we have $v_p = v_q \iff w_p = w_q$ and $(v_p,v_q) \in R_i(G) \iff (\varphi(v_p),\varphi(v_q)) \in R_i(G)$ for $i$ in $[r]$. The \new{multi-relational local  \kwl} (\localmkwl) computes the coloring $C_{k,r}^{(t)} \colon V(G)^k \to \bbN$ for $t \geq 0$, where $C_{k,r}^{(0)} \coloneqq  \mathsf{atp}_r(\vec{v})$, and refines a coloring $C_{k,r}^{(t)}$ (obtained after $t$ iterations of the \localmkwl) via the \new{aggregation function}
\begin{equation}\label{eqnmidd_ext_m}
	\begin{split}
		M_{r}^{(t)}(\vec{v}) \coloneqq   \big( &\{\!\! \{ (C_{k,r}^{(t)}(\theta_1(\vec{v},w)), i) \mid w \in N_i(v_1)  \text{ and } i \in [r]  \} \!\!\}, \dots, \\ &\{\!\! \{  (C_{k,r}^{(t)}(\theta_k(\vec{v},w)),i) \mid w \in N_i(v_k) \text{ and } i \in [r] \}  \!\!\} \big),
	\end{split}
\end{equation}
where $\theta_j(\vec{v},w)\coloneqq (v_1, \dots, v_{j-1}, w, v_{j+1}, \dots, v_k)$. That is, $\theta_j(\vec{v},w)$ replaces the $j$-th component of the tuple $\vec{v}$ with the vertex $w$. Like the local \kwl~\citep{Morris2020b}, the algorithm considers only the local $j$-neighbors, i.e., $v_i$ and $w$ must be adjacent, for each relation in each iteration and additionally differentiates between different relations. The coloring functions for the iterations of the multi-relational \localmkwl{} are then defined by
\begin{equation*}\label{wlsimple_ext_m}
	C_{k,r}^{(t+1)}(\vec{v}) \coloneqq (C_{k,r}^{(t)}(\vec{v}), M_{r}^{(t)}(\vec{v})).
\end{equation*}
In the following, we derive a neural architecture, the \krn{k}, that has the same expressive power as the  \localmkwl{} in terms of distinguishing non-isomorphic multi-relational graphs.

\paragraph{The \krn{k} Architecture.} Given a labeled, multi-relational graph $G$, for each $k$-tuple $\vec{v}$ in $V(G)^k$, a \krn{k} architecture computes an initial feature $\hb_{v}^\tup{0}$ \new{consistent} with its multi-relational atomic type, e.g., a one-hot encoding of $\mathsf{atp}_r(\vec{v})$. In each layer, $t > 0$, a \krn{k} computes a $k$-tuple feature
\begin{align}\label{kgnn}
	\begin{split}
		\hb_{\vec{v},k}^\tup{t} \coloneqq
		\UPD^\tup{t}\Bigl( \hb_{\vec{v},k}^\tup{t-1},\AGG^\tup{t} \bigl(&\oms \phi(\hb_{\theta_1(\vec{v},w),k}^\tup{t-1}, \zb_i^{(t)}) \
		\mid w \in N_i(v_1)  \text{ and } i \in [r] \cms, \dots,\\ &\oms \phi(\hb_{\theta_k(\vec{v},w),k}^\tup{t-1}, \zb_i^{(t)})
		\mid w \in N_i(v_k)  \text{ and } i \in [r] \cms \bigr)\Bigr) \in \RR^{e},
	\end{split}
\end{align}
where the functions $\UPD^\tup{t}$ and $\AGG^\tup{t}$ for $t > 0$ may be a differentiable parameterized functions, e.g., neural networks.
Similarly to Equation~\ref{eq:comp}, $\zb_i^{(t)}$ in $\bbR^{c}$ is the learned edge feature for the $i$th relation at layer $t$ and $\phi \colon \bbR^d \times  \bbR^b \to \bbR^c$ is a composition map. In the case of graph-level tasks, e.g., graph classification, one uses
\begin{equation}\label{readout_k}
	\hb_G \coloneqq \RO\bigl( \oms \hb_{\vec{v}}^{\tup{T}}\mid \vec{v} \in V(G)^k \cms \bigr) \in \RR^{e},
\end{equation}
to compute a single vectorial representation based on learned $k$-tuple features after iteration $T$. The following results shows that the \localmkwl{} upperbounds the expressivity of any \krn{k} in terms of distinguishing non-isomorphic graphs.
\medskip
\begin{proposition}
	\label{thm:ho-upper}
	{\em	Let $G = (V(G), R_1(G), \dots, R_r(G), \ell)$ be a labeled, multi-relational graph. Then for all $t \geq 0$, $r > 0$, $k \geq 1$, and all choices of $\UPD^\tup{t}$, $\AGG^\tup{t}$, and all $\vec{v}$ and $\vec{w}$ in $V(G)^k$,
		\begin{equation*}
			C_{k,r}^{(t)}(\vec{v}) = C_{k,r}^{(t)}(\vec{w}) \ \Longrightarrow \ \hb_{\vec{v},k}^\tup{t} = \hb_{\vec{w},k}^\tup{t}.
		\end{equation*}}
\end{proposition}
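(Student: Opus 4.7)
The plan is to proceed by induction on the number of iterations $t$. For the base case $t=0$, I would observe that the \krn{k} architecture initializes $\hb_{\vec{v},k}^\tup{0}$ consistent with the multi-relational atomic type $\mathsf{atp}_r(\vec{v})$, while $C_{k,r}^{(0)}(\vec{v}) = \mathsf{atp}_r(\vec{v})$ by definition; hence $C_{k,r}^{(0)}(\vec{v}) = C_{k,r}^{(0)}(\vec{w})$ immediately yields $\hb_{\vec{v},k}^\tup{0} = \hb_{\vec{w},k}^\tup{0}$.

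For the inductive step, I would assume the implication holds up to iteration $t-1$ and suppose $C_{k,r}^{(t)}(\vec{v}) = C_{k,r}^{(t)}(\vec{w})$. Since the refinement rule gives $C_{k,r}^{(t)}(\vec{x}) = (C_{k,r}^{(t-1)}(\vec{x}), M_{r}^{(t-1)}(\vec{x}))$, this yields both (i) $C_{k,r}^{(t-1)}(\vec{v}) = C_{k,r}^{(t-1)}(\vec{w})$ and (ii) $M_{r}^{(t-1)}(\vec{v}) = M_{r}^{(t-1)}(\vec{w})$; the latter means that for every coordinate $j \in [k]$, the multiset
\[
\oms (C_{k,r}^{(t-1)}(\theta_j(\vec{v}, u)), i) \mid u \in N_i(v_j), i \in [r] \cms
\]
and its $\vec{w}$-counterpart coincide. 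Part (i) combined with the inductive hypothesis gives $\hb_{\vec{v},k}^\tup{t-1} = \hb_{\vec{w},k}^\tup{t-1}$, which is what $\UPD^\tup{t}$ receives as its first argument.

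The central step is to lift (ii) from colors to features. By the inductive hypothesis, tuples with equal $(t-1)$-level colors also have equal $(t-1)$-level features, so the map sending each pair $(c,i)$ appearing in the multisets above to $\phi(\hb_{\vec{x},k}^\tup{t-1}, \zb_i^{(t)})$, where $\vec{x}$ is any tuple with $C_{k,r}^{(t-1)}(\vec{x})=c$, is well-defined. Multiset equality is preserved by any well-defined map, so for every $j \in [k]$,
\[
\oms \phi(\hb_{\theta_j(\vec{v}, u),k}^\tup{t-1}, \zb_i^{(t)}) \mid u \in N_i(v_j), i \in [r] \cms \ = \ \oms \phi(\hb_{\theta_j(\vec{w}, u),k}^\tup{t-1}, \zb_i^{(t)}) \mid u \in N_i(w_j), i \in [r] \cms.
\]
Feeding these identical $k$-tuples of multisets into $\AGG^\tup{t}$, and combining with the equal previous-layer features via $\UPD^\tup{t}$, produces $\hb_{\vec{v},k}^\tup{t} = \hb_{\vec{w},k}^\tup{t}$, closing the induction.

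The main obstacle, such as it is, lies in the color-to-feature transport inside the inductive step: one must carefully track that the multiset entries are pairs $(c,i)$ ranging over both neighbors \emph{and} relation indices, and that the composition map $\phi$ together with the learned edge embeddings $\zb_i^{(t)}$ sit between the neighbor features and the aggregation. Once this well-definedness is pinned down, the rest is the standard multiset-preservation argument familiar from GNN--WL upper bounds~\citep{Mor+2019,Morris2020b}, and no genuinely new difficulty arises; the proof is essentially a relational lift of the local-\kwl{} upper bound established in~\citet{Morris2020b}.
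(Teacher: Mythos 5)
Your proof is correct and is exactly the standard \wlone-style induction that the paper itself invokes: the published proof is a one-line sketch deferring to Proposition~3 of \citet{Mor+2019}, and your argument spells out precisely that induction (base case via consistency with $\mathsf{atp}_r$, inductive step via transporting the multiset equality in $M_r^{(t-1)}$ from colors to features through the well-defined map $(c,i)\mapsto\phi(\hb^{\tup{t-1}},\zb_i^{(t)})$), correctly adapted to the relational and local setting. No substantive difference from the paper's intended proof.
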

Moreover, we can also show the converse, resulting in the following theorem.
\medskip
\begin{proposition}
	\label{thm:ho-lower}
	{\em	Let $G = (V(G), R_1(G), \dots, R_r(G), \ell)$ be a labeled, multi-relational graph.  Then for all $t \geq 0$ and $k \geq 1$, there exists $\UPD^\tup{t}$, $\AGG^\tup{t}$,  such that for all $\vec{v}$ and $\vec{w}$ in $V(G)^k$,
		\begin{equation*}
			C_{k,r}^{(t)}(\vec{v}) = C_{k,r}^{(t)}(\vec{w}) \ \Longleftrightarrow \ \hb_{\vec{v},k}^\tup{t} = \hb_{\vec{w},k}^\tup{t}.
		\end{equation*}
	}
\end{proposition}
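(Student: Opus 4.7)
The plan is to proceed by induction on $t$, following the template of Theorem~\ref{thm:mrgcn-lower} and the standard GIN-style argument of \citet{Xu+2018b}, but lifted to $k$-tuples and adapted to carry both vertex-feature and relation information through the composition map $\phi$. Throughout, we fix the initial features $\hb_{\vec{v},k}^{(0)}$ to be any injective (e.g.\ one-hot) encoding of the multi-relational atomic type $\mathsf{atp}_r(\vec{v})$; since $C_{k,r}^{(0)} = \mathsf{atp}_r$ by definition, this settles the base case $t=0$.

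For the inductive step, assume the claim has been established for iteration $t-1$, so that $\hb_{\vec{v},k}^{(t-1)} = \hb_{\vec{w},k}^{(t-1)}$ iff $C_{k,r}^{(t-1)}(\vec{v}) = C_{k,r}^{(t-1)}(\vec{w})$. Recall that
\begin{equation*}
C_{k,r}^{(t)}(\vec{v}) = \bigl(C_{k,r}^{(t-1)}(\vec{v}),\, M_r^{(t-1)}(\vec{v})\bigr),
\end{equation*}
where $M_r^{(t-1)}(\vec{v})$ is a $k$-tuple of multisets of pairs $(C_{k,r}^{(t-1)}(\theta_j(\vec{v},w)), i)$. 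My goal is therefore to choose $\phi$, $\zb_i^{(t)}$, $\AGG^{(t)}$, and $\UPD^{(t)}$ so that $\hb_{\vec{v},k}^{(t)}$ injectively encodes exactly this data.

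First, I would pick $\phi$ to be concatenation (alternatively, point-wise multiplication with linearly independent $\zb_i^{(t)}$), and set $\zb_1^{(t)},\dots,\zb_r^{(t)}$ to be the standard basis vectors of $\bbR^r$. Then $\phi(\hb_{\theta_j(\vec{v},w),k}^{(t-1)}, \zb_i^{(t)})$ is an injective encoding of the pair $(\hb_{\theta_j(\vec{v},w),k}^{(t-1)}, i)$, and by the inductive hypothesis this is in turn in bijection with $(C_{k,r}^{(t-1)}(\theta_j(\vec{v},w)), i)$. Since the underlying graph is finite, each of the $k$ multisets in the aggregation of Equation~\ref{kgnn} is drawn from a finite countable domain and has bounded cardinality; hence, by the injective-multiset lemma of \citet{Xu+2018b} (or the Deep Sets construction), there is an injective function $f_j$ from such multisets to $\bbR^{d_j}$. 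Choose $\AGG^{(t)}$ to apply one such $f_j$ to the $j$-th multiset and concatenate the $k$ resulting vectors; this yields an injective encoding of $M_r^{(t-1)}(\vec{v})$.

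Finally, pick $\UPD^{(t)}$ to concatenate $\hb_{\vec{v},k}^{(t-1)}$ with the output of $\AGG^{(t)}$, followed by any injective map (an MLP suffices, since the domain is finite and both arguments are injectively encoded). The result $\hb_{\vec{v},k}^{(t)}$ is then an injective encoding of $\bigl(C_{k,r}^{(t-1)}(\vec{v}), M_r^{(t-1)}(\vec{v})\bigr) = C_{k,r}^{(t)}(\vec{v})$, as required; combined with Proposition~\ref{thm:ho-upper} this gives the iff. The only genuinely new wrinkle relative to standard GNN lower bounds is that relational information must be threaded through the non-parametric $\phi$ rather than through per-relation weight matrices as in \rgcn; the hardest step is therefore verifying that a fixed $\phi$ together with suitably chosen $\zb_i^{(t)}$ can recover both the neighbor feature and the relation index injectively. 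Choosing $\phi$ as concatenation trivializes this, so the construction goes through cleanly.
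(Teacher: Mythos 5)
Your proof is correct, and its overall skeleton (induction on $t$, one injective encoding per component multiset, an injective update on a finite domain) matches the paper's. The key injectivity step is established by a genuinely different mechanism, however. The paper decomposes $\AGG^\tup{t}$ into an outer function applied to $k$ inner aggregators, realizes each inner aggregator as the \comp-style linear sum $\sum_{i\in[r]}\sum_{w\in N_i(v_j)}\phi\big(\hb_{\theta_j(\vec{v},w)}^\tup{t-1},\zb_i^{(t)}\big)\mW_1^{(t)}$ with $\phi$ being vector scaling, and imports injectivity wholesale from Theorem~\ref{thm:mrgcn-lower}, i.e., from the linear-algebraic machinery of sign activations, row-independence modulo equality, and scaling factors $n^i$. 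You instead take $\phi$ to be concatenation with one-hot relation vectors $\zb_i^{(t)}$, so that each element of the multiset injectively encodes the pair (neighbor color, relation index), and then invoke the countable-domain injective-multiset lemma of \citet{Xu+2018b}. Both are legitimate existential witnesses, since the proposition leaves $\phi$ unconstrained and $\AGG^\tup{t}$ may be an arbitrary parameterized function. Two remarks on what each buys: (i) your route works only because $\AGG^\tup{t}$ in Equation~\ref{kgnn} is not forced to be a single linear map followed by a sum --- in the plain \comp{} layer, concatenation is precisely the composition map that Theorem~\ref{thm:weak-comp-wl} proves strictly weaker, so you should state explicitly that the generality of $\AGG^\tup{t}$ is what rescues concatenation here; (ii) the paper's reduction is more constructive (explicit weight matrices) and shows in passing that the same composition maps used in the \comp{} analysis (scaling, point-wise multiplication) already suffice, which your choice of $\phi$ does not establish.
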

%The following result implies that increasing $k$ leads to a strict boost in terms of expressivity of the \localmkwl{} and \krns{k} architectures in terms of distinguishing non-isomorphic multi-relational graphs. 
%\medskip
%\begin{proposition}\label{thm:hier}
%{\em For $k \geq 2 $ and $r \geq 1$, there exists a %pair of non-isomorphic multi-relational graphs  $G_r = %(V(G_r), R_1(G_r), \dots, R_r(G_r), \ell)$ and $H_r = %(V(H_r), R_1(H_r), \dots, R_r(H_r), \ell)$ that can be %distinguished by the  $(k+1)$\text{-}\textsf{RLWL} but %not by the  $k$\text{-}\textsf{RLWL}.}
%\end{proposition}
%The above result can be lifted to the neural setting.

The following result implies that increasing $k$ leads to a strict boost in terms of expressivity of the \localmkwl{} and \krn{k} architectures in terms of distinguishing non-isomorphic multi-relational graphs.
\medskip
\begin{proposition}\label{hier_neural}
	{\em For $k \geq 2 $ and $r \geq 1$, there exists a pair of non-isomorphic multi-relational graphs  $G_r = (V(G_r), R_1(G_r), \dots, R_r(G_r), \ell)$ and $H= (V(H_r), R_1(H_r), \dots, R_r(H_r), \ell)$ such that:
		\begin{itemize}
			\item For all choices of $\UPD^\tup{t}$, $\AGG^\tup{t}$, for $t > 0$, and $\RO$  the \rn{} architecture will not distinguish the graphs $G_r$ and $H_r$.
			\item There exists $\UPD^\tup{t}$, $\AGG^\tup{t}$, for $t > 0$, and $\RO$ such that the $(k+1)$\text{-}\textsf{RN} will distinguish them.
		\end{itemize}}
\end{proposition}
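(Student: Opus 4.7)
The plan is to reduce the multi-relational separation to the known strict hierarchy for the local $k$-dimensional Weisfeiler--Leman algorithm on single-relation graphs established in~\citet{Morris2020b}. For every $k \geq 2$, that line of work exhibits a pair of non-isomorphic labeled graphs $G$ and $H$ such that the local $k$-WL produces identical multisets of colors on $V(G)$ and $V(H)$, while the local $(k+1)$-WL separates them.

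I would then lift such a pair $(G,H)$ to multi-relational graphs $(G_r,H_r)$ by placing the original edge set into the first relation and leaving the remaining relations empty. Concretely, set $V(G_r) := V(G)$, $R_1(G_r) := E(G)$, $R_i(G_r) := \emptyset$ for $i \in \{2,\dots,r\}$, and inherit the vertex labeling; define $H_r$ analogously. Because every relation other than $R_1$ is empty in both graphs, each of the $k$ multisets in Equation~\ref{eqnmidd_ext_m} collapses to either the single-relation neighbor multiset tagged with the constant $1$ (for $i=1$) or to the empty multiset (for $i>1$). A straightforward induction on the iteration count then shows that the multi-relational local $k$-WL coloring $C_{k,r}^{(t)}$ on $G_r$ (resp.\ $H_r$) induces exactly the same tuple-partition as the local $k$-WL coloring on $G$ (resp.\ $H$), and the same coincidence holds at dimension $k+1$.

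Given this coincidence, the first bullet follows from Proposition~\ref{thm:ho-upper} applied to the \rn{} architecture: for every choice of $\UPD^\tup{t}$, $\AGG^\tup{t}$, and readout, the resulting features are upper bounded in expressivity by $C_{k,r}^{(\infty)}$, which, by the choice of $(G,H)$ and the coincidence above, cannot separate $G_r$ from $H_r$. The second bullet follows from Proposition~\ref{thm:ho-lower}: choose initial features that injectively encode the multi-relational atomic types (e.g., one-hot encodings), and pick $\UPD^\tup{t}$, $\AGG^\tup{t}$ together with a composition $\phi$ that preserves relation identity (e.g., vector concatenation or a scaling-capable map), so that the $(k+1)$\text{-}\textsf{RN} simulates the multi-relational local $(k+1)$-WL step by step; composing the final tuple features with an injective readout as in Equation~\ref{readout_k} then produces distinct graph-level representations for $G_r$ and $H_r$.

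The hard part will be invoking the strict hierarchy for the \emph{local} $k$-WL rather than for the classical $k$-WL: the standard CFI-style separators communicate across arbitrary $k$-tuples, whereas the local variant only aggregates over genuine neighbors, so one must either import the specialized CFI-like gadgets already used for the local hierarchy in~\citet{Morris2020b} or adapt them. Once a separating pair $(G,H)$ at the single-relation level is secured, the lifting via empty auxiliary relations is routine and symmetric across $G_r$ and $H_r$, and both bullets become immediate consequences of Propositions~\ref{thm:ho-upper} and~\ref{thm:ho-lower}.
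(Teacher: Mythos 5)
Your proposal follows essentially the same route as the paper: take the CFI-style pairs $(G_k,H_k)$ from \citet{Morris2020b} that witness the local-WL hierarchy, lift them to multi-relational graphs by a trivial relation assignment (you leave $R_2,\dots,R_r$ empty; the paper sets every $R_i$ equal to the edge set---both are routine and equivalent in effect), and then conclude the two bullets from Propositions~\ref{thm:ho-upper} and~\ref{thm:ho-lower}. The one step you flag as "the hard part"---establishing that the \emph{local} $k$-WL, not just the oblivious $k$-WL, fails on these pairs while the local $(k+1)$-WL succeeds---is exactly the step the paper has to supply itself, and it does so not by adapting gadgets but by a short comparison argument: the oblivious $k$-WL is equivalent to the folklore $(k-1)$-WL, which in turn subsumes the $\delta$-$(k-1)$-\textsf{LWL}, so the known failure of the oblivious $k$-WL on $(G_k,H_k)$ already implies failure of the local $(k-1)$-dimensional variant; combined with the positive result that the $\delta$-$k$-\textsf{LWL} distinguishes them, this yields the strict local hierarchy without new constructions. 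With that step filled in, your argument is complete and matches the paper's.
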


Moreover, the following results shows that for $k=2$ the \krn{k} architecture is strictly more expressive than \comp{} and \rgcn{} in distinguishing non-isomophics graphs.
\medskip
\begin{corollary}
	{\em There exists a \krn{2} architecture that is strictly more expressive than the \comp and the \rgcn\ architecture in terms of distinguishing non-isomorphic graphs. }
\end{corollary}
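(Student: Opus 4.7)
The corollary splits into a simulation, showing that \krn{2} is at least as expressive as \rgcn and \comp, and a separation, exhibiting non-isomorphic graphs distinguished by \krn{2} but by neither \rgcn nor \comp.

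For the simulation, by Theorems \ref{thm:mrgcn-upper}, \ref{thm:mrgcn-lower}, and \ref{thm:weak-comp-wl}, the distinguishing power of \rgcn and of any \comp variant is upper-bounded by \rwlone, while by Propositions \ref{thm:ho-upper} and \ref{thm:ho-lower} the distinguishing power of \krn{2} coincides with that of \localmkwl at $k=2$. It therefore suffices to show that \localmkwl at $k=2$ refines \rwlone. I would adapt the standard single-relation argument: by induction on iteration $t$, the pair color $C_{2,r}^{(t)}(u,v)$ determines $C^{(t)}_{\textsf{R}}(u)$. The inductive step uses that the \localmkwl aggregation at $(u,v)$ ranges over $\theta_1((u,v),w) = (w,v)$ for $w \in N_i(u)$, whose multi-relational atomic types record the relation $i$ and whose colors, by the inductive hypothesis, determine $C^{(t-1)}_{\textsf{R}}(w)$; this suffices to reconstruct the \rwlone neighborhood multiset at $u$, and so the claim lifts from colorings to architectures.

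For the separation, I would invoke Proposition \ref{upper} to obtain, for some $r \geq 1$, a pair of non-isomorphic multi-relational graphs $G$ and $H$ indistinguishable by \rgcn and \comp, and then verify that \localmkwl at $k=2$ separates $G$ and $H$. Combined with Proposition \ref{thm:ho-lower}, this produces a concrete \krn{2} architecture distinguishing $G$ and $H$, yielding the desired strict separation.

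The main obstacle is confirming that \localmkwl at $k=2$ actually distinguishes the specific pair produced in the proof of Proposition \ref{upper}. If that pair is a multi-relational adaptation of classical \wlone-equivalent graphs (for instance, the $C_6$ versus $2C_3$ construction, or a CFI-style pair equipped with a trivial relation structure), separation follows from standard 2-tuple invariants such as the relational atomic type of pairs $(v,u)$ at small distance, together with the monotonicity of \localmkwl; otherwise one would construct an explicit such pair, ensuring both that \rwlone cannot distinguish it (so that \rgcn and \comp fail by the simulation direction of Theorems \ref{thm:mrgcn-upper} and \ref{thm:weak-comp-wl}) and that a well-chosen pair invariant separates it under \localmkwl at $k=2$.
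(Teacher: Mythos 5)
Your proposal is correct and follows essentially the same route as the paper: the separation uses exactly the pair from Proposition~\ref{upper} (the multi-relational adaptation of $C_6$ versus the disjoint union of two triangles), which the multi-relational local $2$-WL distinguishes because it detects triangles (the paper outsources this to Lemma~13 of \citet{Mor+2022a}), combined with Propositions~\ref{thm:ho-upper} and~\ref{thm:ho-lower}. Your explicit induction showing that the local $2$-WL color of a pair determines the \rwlone{} color of its first component supplies the containment direction that the paper leaves implicit, so your write-up is, if anything, slightly more complete than the published proof.
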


\paragraph{\krns{k} for Vertex-level Prediction}
As defined in Equations~\ref{kgnn} and~\ref{readout_k}, an \krn{k} architecture either computes $k$-tuple- or graph-level features. However, it is straightforward to compute a vertex-level features, see, e.g.,~\citet[Section 4.1]{Mor+2022a}.

\paragraph{Scalability} Although the \krn{k} is provably expressive, see Proposition~\ref{hier_neural}, it suffer some high memory requirement. Similar to the \kwl, it's memory complexity can only be lower bounded in~$\Omega(n^k)$, making it not applicable for large knowledge graphs. However, recent progress in making higher-order architectures more scalable, e.g.,~\cite{Bev+2021,Mor+2022a,Qia+2022}, can be straightforwardly lifted to the multi-relational case.

\section{Experimental Study}
\label{sec:experiments}

Here, we investigate to what extend the above theoretical results hold for real-world data distributions. Specifically, we aim to answer the following questions.

\begin{description}
	\item[Q1] Does the theoretical equivalence of \rgcn{} and \comp{} hold in practice?\\
	\item[Q2] Does the performance depend on the dimension of vertex features?\\
	\item[Q3] Does \comp{} benefit from normalization and learnable edge weights?\\
	\item[Q4] Does the theoretical difference in composition functions of \comp{} hold in practice?\\
\end{description}

\paragraph{Datasets.} To answer Q1 to Q4, we investige \rgcn\ and \comp's empirical performance on the small-scale AIFB (6\,000 vertices) and the large-scale AM (1.6 million vertices)~\citep{ristoski2016collection} vertex classification benchmark dataset; see Section~\ref{app:data} for dataset statistics.

\paragraph{Featurization.}
Most relational GNNs for vertex- and link-level tasks assume that the initial vertex states come from a learnable vertex embedding matrix~\citep{wang2021survey, ali2021bringing}. However, this vertex feature initialization or featurization method makes the model inherently transductive, i.e., the model must be re-trained when adding new vertices. Moreover, such an initialization strategy is incompatible with our Weisfeiler-Leman-based theoretical results since a learnable vertex embedding matrix will result in most initial vertex features being pair-wise different. Here, however, being faithful to the Weisfeiler-Leman formulation, we initialize \emph{all} vertex features with the \emph{same} $d$-dimensional vector, namely, a standard basis vector of $\bbR^{d}$, e.g., $(1, 0, \dots, 0)$ in $\bbR^{d}$.\footnote{We also probed a vector initialized with the \citet{glorot2010understanding} strategy, showing similar results.}
Relation-specific weight matrices in the case of \rgcn{} and edge features in the case of \comp{} are still learnable. We stress here that such a featurization strategy endows GNNs with inductive properties. Since we are using the same vertex feature initialization, we can run inference on previously unseen vertices or graphs.

\begin{figure}[t]
	\centering
	\begin{subfigure}[b]{0.60\textwidth}
		\centering
		\includegraphics[width=0.8\textwidth]{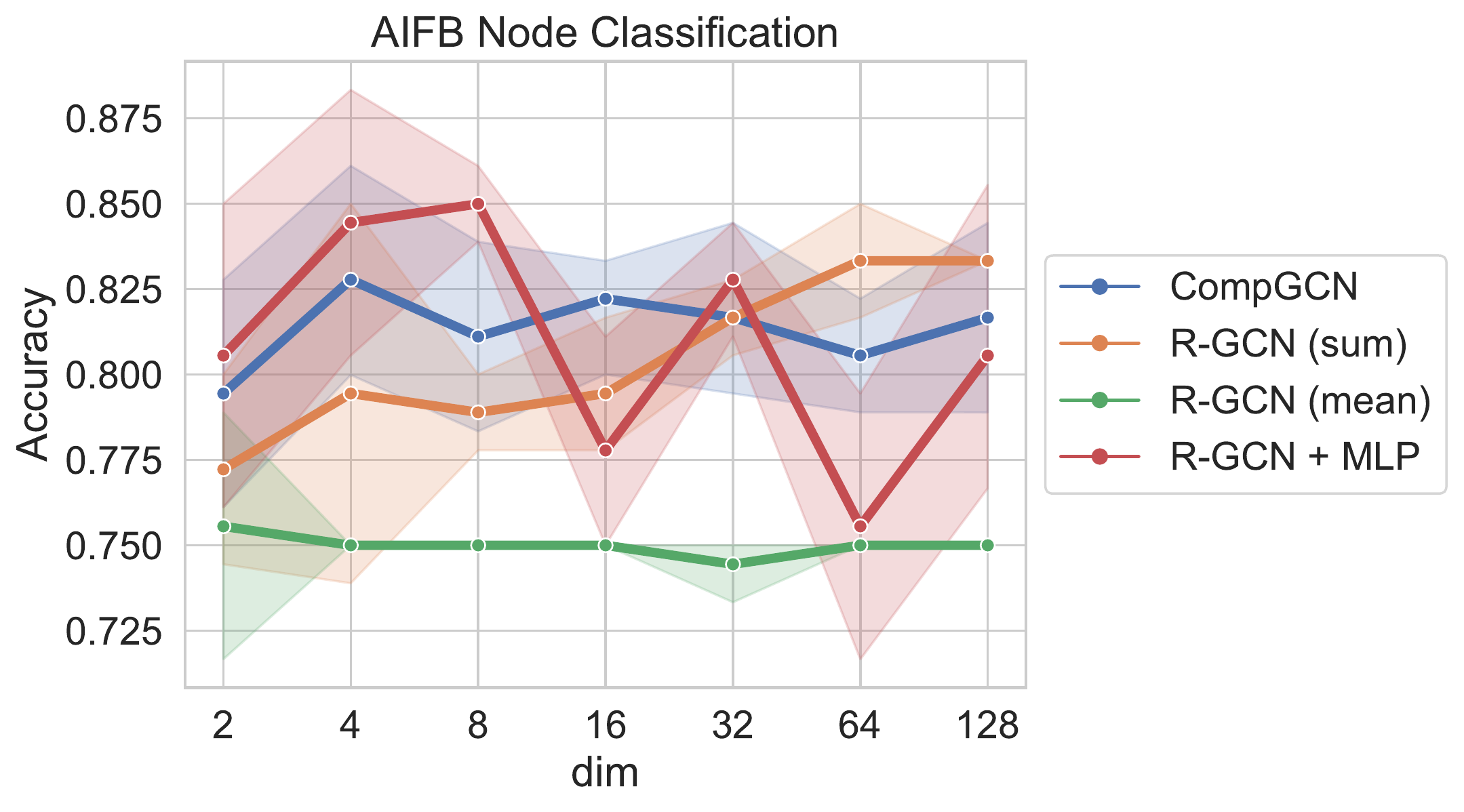}
		\caption{AIFB results with varying input feature dimension.}
		\label{fig:exp1_1}
	\end{subfigure}
	\begin{subfigure}[b]{0.35\textwidth}
		\centering
		\includegraphics[width=\textwidth]{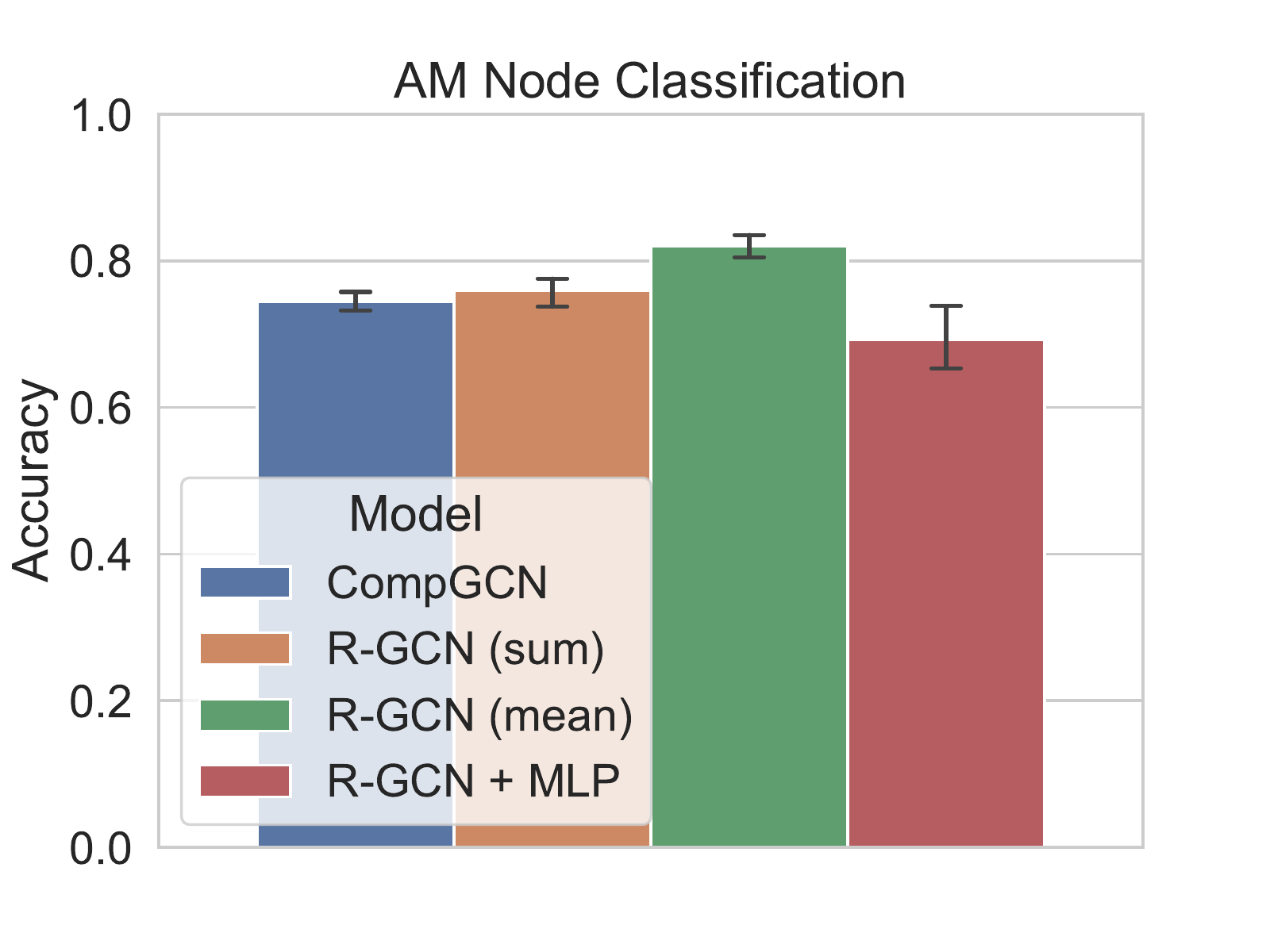}
		\caption{AM results with $dim=4$.}
		\label{fig:exp1_2}
		\vspace{0pt}
	\end{subfigure}
	\caption{Vertex classification performance of \comp and \rgcn\ on smaller (AIFB) and larger (AM) graphs. Initial vertex feature dimensions higher than 4 do not improve the accuracy.}
\end{figure}

\paragraph{Implementation.}
We use the \rgcn\ and \comp implementation provided by PyG framework~\citep{Fey+2019}. The source code of all methods and evaluation procedures is available at \url{https://github.com/migalkin/RWL}. For the smaller AIFB dataset, both models use two GNN layers. For the larger AM dataset, \rgcn\ saturates with three layers. Following the theory, we do not use any basis decomposition of relation weights in \rgcn. We list other hyperparameters in Section~\ref{app:data}.
We report averaged results of five independent runs using different random seeds. We conducted all experiments in the full-batch mode on a single GPU (Tesla V100 32\,GB or RTX 8000).

\paragraph{Discussion.}
Probing \rgcn\ with different aggregations and \comp on the smaller AIFB (Figure~\ref{fig:exp1_1}) and larger AM (Figure~\ref{fig:exp1_2}) datasets, we largely confirm the theoretical hypothesis of their expressiveness equivalence (\textbf{Q1}) and observe similar performance of both GNNs. The higher variance on AIFB is due to the small test set size (36 vertices), i.e., one misclassified vertex drops accuracy by $\approx 3\%$.

To test if increasing the input vertex feature dimensions leads to more expressive GNN architectures (\textbf{Q2}), we vary the initial vertex feature dimension in  $\{2,4,8,\ldots, 64,128\}$ on the smaller AIFB dataset (Figure~\ref{fig:exp1_1}) and do not observe any significant differences starting from $d=4$ and above. Having identified that, we report the best results of compared models on the larger AM graph with the vertex feature dimension $d$ in $\{4,8\}$.

Following the theory where the \texttt{sum} aggregator is most expressive, we investigate this finding on the smaller AIFB dataset for both GNNs. \rgcn\ with \texttt{mean} aggregation shows slightly better results on the larger AM dataset, which we attribute to the unstable optimization process of the \texttt{sum} aggregator where vertices might have thousands of neighbors, leading to large losses and noisy gradients. We hypothesize that stabilizing the training process on larger graphs might improve performance.

Furthermore, we perform an ablation study (Figure~\ref{fig:exp2}) of main \comp components (\textbf{Q3}), i.e., direction-based weighting (over direct, inverse, and self-loop edges), relation projection update in each layer, and message normalization in the GCN style  $\vec{D}^{-\frac{1}{2}}\vec{A}\vec{D}^{-\frac{1}{2}}$; see also Sections~\ref{APP:comp} and~\ref{app:rgcn}.

The crucial components for the smaller and larger graphs are (1) three-way direction-based message passing and (2) normalization. Replacing message passing over three directions (and three weight matrices) with one weight matrix using a single adjacency leads to a significant drop in performance. Removing normalization increases variance in the larger graph. Finally, removing both directionality and normalization leads to significant degradation in predictive performance.

\begin{figure}[t]
	\centering
	\includegraphics[width=0.65\textwidth]{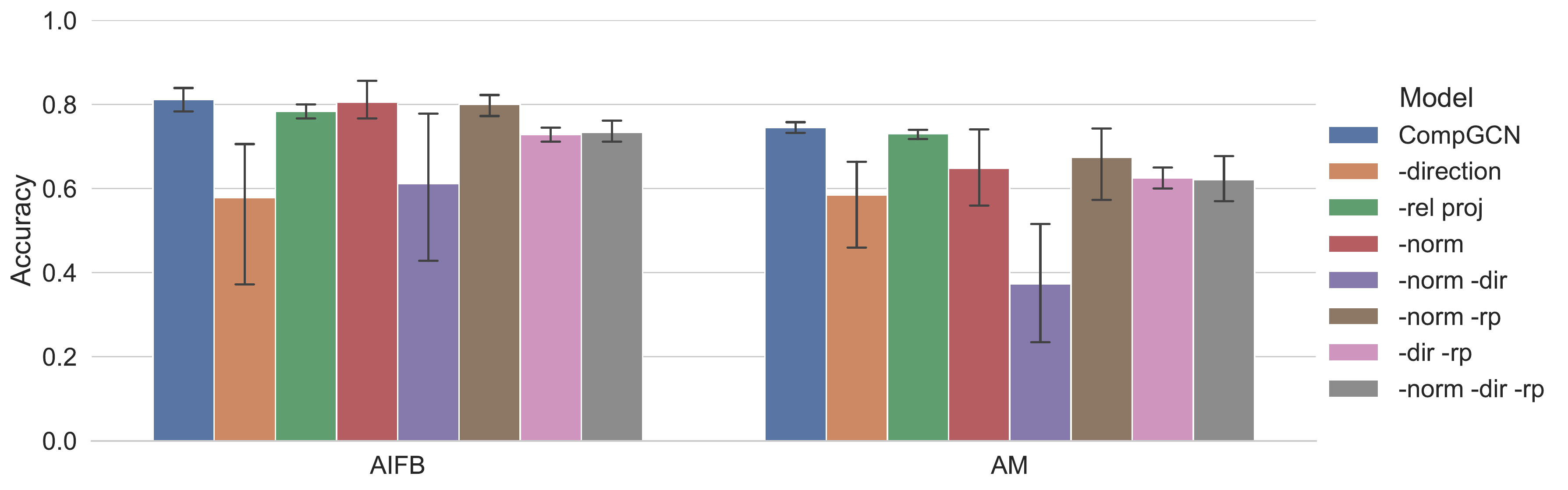}
	\caption{\comp ablations. Directionality (\emph{-dir}) and normalization (\emph{-norm}) are the most crucial components, i.e., their removal does lead to significant performance drops.}
	\label{fig:exp2}
\end{figure}

Studying composition functions (Figure~\ref{fig:exp3}), we do not find significant differences among non-parametric \texttt{mult}, \texttt{add}, \texttt{rotate} functions (\textbf{Q4}); see Section~\ref{APP:comp}.
Performance of an MLP over a concatenation of vertex and edge features %$\text{MLP}([x_j, e_{ij}])$ 
falls within confidence intervals of other compositions and does not exhibit a significant accuracy boost.

\begin{figure}[t]
	\centering
	\includegraphics[width=0.55\textwidth]{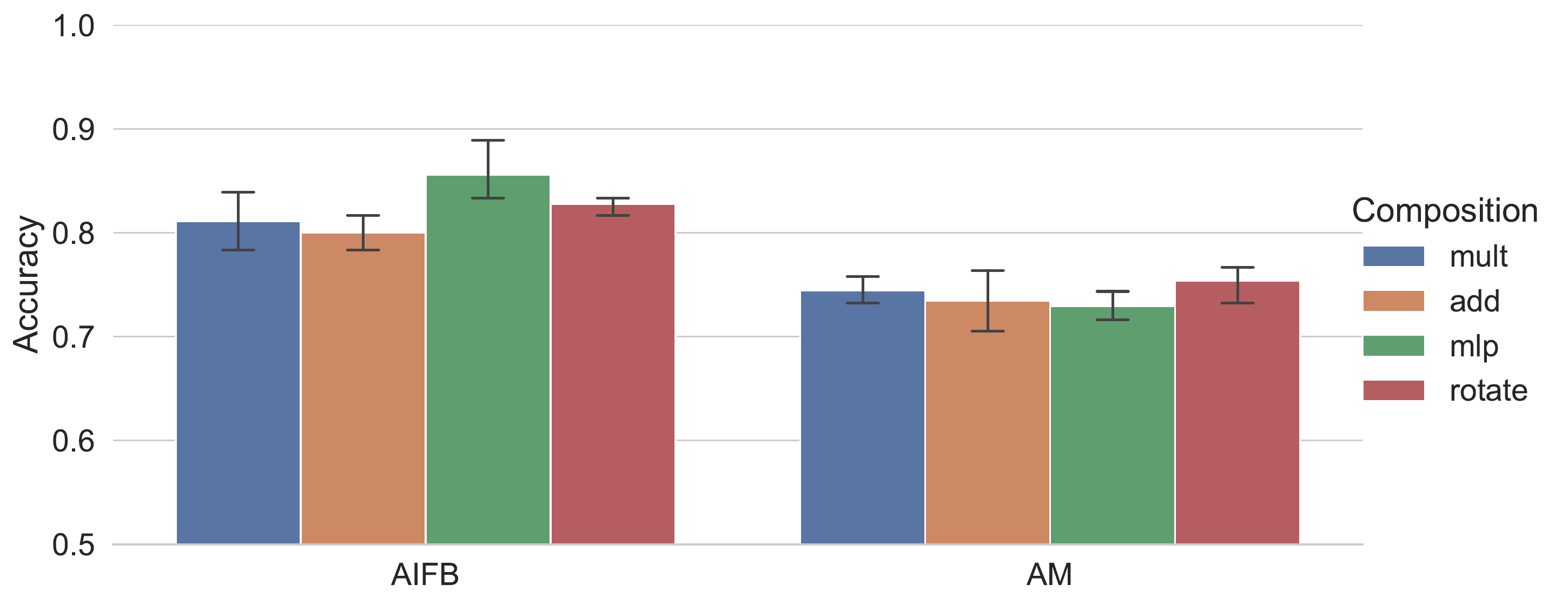}
	\caption{\comp with different composition functions. No significant differences.}
	\label{fig:exp3}
\end{figure}

\section{Conclusion}

Here, we investigated the expressive power of two popular GNN architectures for knowledge or multi-relational graphs, namely, \comp{} and \rgcn. By deriving a variant of the \wlone, we quantified their limits in distinguishing vertices in multi-relational graphs. Further, we investigated under which conditions,  i.e., the choice of the composition function, \comp{}, reaches the same expressive power as \rgcn. To overcome the limitations of the two architectures, we derived the provably more powerful \krn{k} architecture. By increasing $k$,  the \krn{k} architecture gets strictly more expressive. Empirically, we verified that our theoretical results translate largely into practice. Using \comp{} and \rgcn{} in a vertex classification setting over small and large multi-relational graphs shows that both architectures provide a similar performance level. We believe that our paper is the first step in a principled design of GNNs for knowledge or multi-relational graphs.

\section*{Acknowledgements}
Pablo Barcel\'o is funded by Fondecyt grant 1200967, the ANID - Millennium Science Initiative Program - Code ICN17002, and the National Center for Artificial Intelligence CENIA FB210017, Basal ANID. Mikhail Galkin is funded by the Samsung AI grant held at Mila. Christopher Morris is partially funded a DFG Emmy Noether grant (468502433) and  RWTH Junior Principal Investigator Fellowship under the Excellence Strategy of the Federal Government and the Länder. Miguel Romero is funded by Fondecyt grant 11200956, the Data Observatory Foundation, and the National Center for Artificial Intelligence CENIA FB210017, Basal ANID.

\bibliographystyle{unsrtnat}
\bibliography{bibliography}

\begin{thebibliography}{64}
\providecommand{\natexlab}[1]{#1}
\providecommand{\url}[1]{\texttt{#1}}
\expandafter\ifx\csname urlstyle\endcsname\relax
  \providecommand{\doi}[1]{doi: #1}\else
  \providecommand{\doi}{doi: \begingroup \urlstyle{rm}\Url}\fi

\bibitem[Gilmer et~al.(2017)Gilmer, Schoenholz, Riley, Vinyals, and
  Dahl]{Gil+2017}
J.~Gilmer, S.~S. Schoenholz, P.~F. Riley, O.~Vinyals, and G.~E. Dahl.
\newblock Neural message passing for quantum chemistry.
\newblock In \emph{International Conference on Machine Learning}, 2017.

\bibitem[Scarselli et~al.(2009)Scarselli, Gori, Tsoi, Hagenbuchner, and
  Monfardini]{Sca+2009}
F.~Scarselli, M.~Gori, A.~C. Tsoi, M.~Hagenbuchner, and G.~Monfardini.
\newblock The graph neural network model.
\newblock \emph{IEEE Transactions on Neural Networks}, 20\penalty0
  (1):\penalty0 61--80, 2009.

\bibitem[Duvenaud et~al.(2015)Duvenaud, Maclaurin, Iparraguirre, Bombarell,
  Hirzel, Aspuru-Guzik, and Adams]{Duv+2015}
D.~K. Duvenaud, D.~Maclaurin, J.~Iparraguirre, R.~Bombarell, T.~Hirzel,
  A.~Aspuru-Guzik, and R.~P. Adams.
\newblock Convolutional networks on graphs for learning molecular fingerprints.
\newblock In \emph{Advances in Neural Information Processing Systems}, pages
  2224--2232, 2015.

\bibitem[Hamilton et~al.(2017)Hamilton, Ying, and Leskovec]{Ham+2017}
W.~L. Hamilton, R.~Ying, and J.~Leskovec.
\newblock Inductive representation learning on large graphs.
\newblock In \emph{Advances in Neural Information Processing Systems}, pages
  1025--1035, 2017.

\bibitem[Veličković et~al.(2018)Veličković, Cucurull, Casanova, Romero,
  Liò, and Bengio]{Vel+2018}
P.~Veličković, G.~Cucurull, A.~Casanova, A.~Romero, P.~Liò, and Y.~Bengio.
\newblock Graph attention networks.
\newblock In \emph{International Conference on Learning Representations}, 2018.

\bibitem[Defferrard et~al.(2016)Defferrard, X., and
  Vandergheynst]{Defferrard2016}
M.~Defferrard, Bresson X., and P.~Vandergheynst.
\newblock Convolutional neural networks on graphs with fast localized spectral
  filtering.
\newblock In \emph{Advances in Neural Information Processing Systems}, pages
  3844--3852, 2016.

\bibitem[Bruna et~al.(2014)Bruna, Zaremba, Szlam, and LeCun]{Bru+2014}
J.~Bruna, W.~Zaremba, A.~Szlam, and Y.~LeCun.
\newblock Spectral networks and deep locally connected networks on graphs.
\newblock In \emph{International Conference on Learning Representation}, 2014.

\bibitem[Kipf and Welling(2017)]{Kip+2017}
T.~N. Kipf and M.~Welling.
\newblock Semi-supervised classification with graph convolutional networks.
\newblock In \emph{International Conference on Learning Representations}, 2017.

\bibitem[Monti et~al.(2017)Monti, Boscaini, Masci, Rodol{\`{a}}, Svoboda, and
  Bronstein]{Mon+2017}
F.~Monti, D.~Boscaini, J.~Masci, E.~Rodol{\`{a}}, J.~Svoboda, and M.~M.
  Bronstein.
\newblock Geometric deep learning on graphs and manifolds using mixture model
  cnns.
\newblock In \emph{IEEE Conference on Computer Vision and Pattern Recognition},
  pages 5425--5434, 2017.

\bibitem[Baskin et~al.(1997)Baskin, Palyulin, and Zefirov]{bas+1997}
I.~I. Baskin, V.~A. Palyulin, and N.~S. Zefirov.
\newblock A neural device for searching direct correlations between structures
  and properties of chemical compounds.
\newblock \emph{Journal of Chemical Information and Computer Sciences},
  37\penalty0 (4):\penalty0 715--721, 1997.

\bibitem[Kireev(1995)]{Kir+1995}
D.~B. Kireev.
\newblock Chemnet: A novel neural network based method for graph/property
  mapping.
\newblock \emph{Journal of Chemical Information and Computer Sciences},
  35\penalty0 (2):\penalty0 175--180, 1995.
\newblock ACS.

\bibitem[Micheli and Sestito(2005)]{mic+2005}
A.~Micheli and A.~S. Sestito.
\newblock A new neural network model for contextual processing of graphs.
\newblock In \emph{Italian Workshop on Neural Nets Neural Nets and
  International Workshop on Natural and Artificial Immune Systems}, volume 3931
  of \emph{Lecture Notes in Computer Science}, pages 10--17. Springer, 2005.

\bibitem[Merkwirth and Lengauer(2005)]{Mer+2005}
C.~Merkwirth and T.~Lengauer.
\newblock Automatic generation of complementary descriptors with molecular
  graph networks.
\newblock \emph{Journal of Chemical Information and Modeling}, 45\penalty0
  (5):\penalty0 1159--1168, 2005.

\bibitem[Micheli(2009)]{mic+2009}
A.~Micheli.
\newblock Neural network for graphs: {A} contextual constructive approach.
\newblock \emph{{IEEE} Transactions on Neural Networks}, 20\penalty0
  (3):\penalty0 498--511, 2009.

\bibitem[Sperduti and Starita(1997)]{Spe+1997}
A.~Sperduti and A.~Starita.
\newblock Supervised neural networks for the classification of structures.
\newblock \emph{IEEE Transactions on Neural Networks}, 8\penalty0 (2):\penalty0
  714--35, 1997.
\newblock IEEE.

\bibitem[Morris et~al.(2021)Morris, L., Maron, Rieck, Kriege, Grohe, Fey, and
  Borgwardt]{Mor+2021b}
C.~Morris, Y.~L., H.~Maron, B.~Rieck, N.~M. Kriege, M.~Grohe, M.~Fey, and
  K.~Borgwardt.
\newblock {Weisfeiler and Leman} go machine learning: The story so far.
\newblock \emph{CoRR}, abs/2112.09992, 2021.

\bibitem[Morris et~al.(2019)Morris, Ritzert, Fey, Hamilton, Lenssen, Rattan,
  and Grohe]{Mor+2019}
C.~Morris, M.~Ritzert, M.~Fey, W.~L. Hamilton, Jan~Eric Lenssen, G.~Rattan, and
  M.~Grohe.
\newblock Weisfeiler and leman go neural: Higher-order graph neural networks.
\newblock In \emph{AAAI Conference on Artificial Intelligence}, pages
  4602--4609, 2019.

\bibitem[Xu et~al.(2019)Xu, Hu, Leskovec, and Jegelka]{Xu+2018b}
K.~Xu, W.~Hu, J.~Leskovec, and S.~Jegelka.
\newblock How powerful are graph neural networks?
\newblock \emph{International Conference on Machine Learning}, 2019.

\bibitem[Azizian and Lelarge(2020)]{Azi+2020}
W.~Azizian and M.~Lelarge.
\newblock Characterizing the expressive power of invariant and equivariant
  graph neural networks.
\newblock \emph{CoRR}, abs/2006.15646, 2020.

\bibitem[Geerts et~al.(2020)Geerts, Mazowiecki, and P{\'{e}}rez]{Gee+2020a}
F.~Geerts, F.~Mazowiecki, and G.~A. P{\'{e}}rez.
\newblock Let's agree to degree: Comparing graph convolutional networks in the
  message-passing framework.
\newblock \emph{CoRR}, abs/2004.02593, 2020.

\bibitem[Geerts(2020)]{Gee+2020b}
F.~Geerts.
\newblock The expressive power of kth-order invariant graph networks.
\newblock \emph{CoRR}, abs/2007.12035, 2020.

\bibitem[Maron et~al.(2019)Maron, Ben{-}Hamu, Serviansky, and Lipman]{Mar+2019}
H.~Maron, H.~Ben{-}Hamu, H.~Serviansky, and Y.~Lipman.
\newblock Provably powerful graph networks.
\newblock \emph{CoRR}, abs/1905.11136, 2019.

\bibitem[Morris et~al.(2020)Morris, Rattan, and Mutzel]{Morris2020b}
C.~Morris, G.~Rattan, and P.~Mutzel.
\newblock {W}eisfeiler and {L}eman go sparse: Towards scalable higher-order
  graph embeddings.
\newblock In \emph{Advances in Neural Information Processing Systems}, 2020.

\bibitem[Morris et~al.(2022{\natexlab{a}})Morris, Rattan, Kiefer, and
  Ravanbakhsh]{Mor+2022b}
C.~Morris, G.~Rattan, S.~Kiefer, and S.~Ravanbakhsh.
\newblock Speqnets: Sparsity-aware permutation-equivariant graph networks.
\newblock \emph{CoRR}, abs/2203.13913, 2022{\natexlab{a}}.

\bibitem[Chen et~al.(2019)Chen, Villar, Chen, and Bruna]{Che+2019}
Z.~Chen, S.~Villar, L.~Chen, and J.~Bruna.
\newblock On the equivalence between graph isomorphism testing and function
  approximation with gnns.
\newblock In \emph{Advances in Neural Information Processing Systems}, pages
  15868--15876, 2019.

\bibitem[Geerts and Reutter(2022)]{geerts2022}
F.~Geerts and J.~L. Reutter.
\newblock Expressiveness and approximation properties of graph neural networks.
\newblock In \emph{International Conference on Learning Representations}, 2022.

\bibitem[Maehara and NT(2019)]{Mae+2019}
T.~Maehara and H.~NT.
\newblock A simple proof of the universality of invariant/equivariant graph
  neural networks.
\newblock \emph{CoRR}, abs/1910.03802, 2019.

\bibitem[Barcel{\'{o}} et~al.(2020)Barcel{\'{o}}, Kostylev, Monet, P{\'{e}}rez,
  Reutter, and Silva]{barcelo2020}
P.~Barcel{\'{o}}, E.~V. Kostylev, M.~Monet, J.~P{\'{e}}rez, J.~L. Reutter, and
  J.~Pablo Silva.
\newblock The logical expressiveness of graph neural networks.
\newblock In \emph{International Conference on Learning Representations}, 2020.

\bibitem[Fu et~al.(2020)Fu, Qiu, Tang, Li, Yu, and Sun]{fu2020survey}
B.~Fu, Y.~Qiu, C.~Tang, Y.~Li, H.~Yu, and J.~Sun.
\newblock A survey on complex question answering over knowledge base: Recent
  advances and challenges.
\newblock \emph{CoRR}, abs/2007.13069, 2020.

\bibitem[Huang et~al.(2022)Huang, Deshpande, Liu, Alberts, Cho, Vania, and
  Calixto]{huang2022endowing}
N.~Huang, Y.~R. Deshpande, Y.~Liu, H.~Alberts, K.~Cho, C.~Vania, and
  I.~Calixto.
\newblock Endowing language models with multimodal knowledge graph
  representations.
\newblock \emph{CoRR}, abs/2206.13163, 2022.

\bibitem[Schlichtkrull et~al.(2018)Schlichtkrull, Kipf, Bloem, van~den Berg,
  Titov, and Welling]{Sch+2019}
M.~Schlichtkrull, T.~N. Kipf, P.~Bloem, R.~van~den Berg, I.~Titov, and
  M.~Welling.
\newblock Modeling relational data with graph convolutional networks.
\newblock In \emph{The Semantic Web}, pages 593--607, 2018.

\bibitem[Vashishth et~al.(2020)Vashishth, Sanyal, Nitin, and
  Talukdar]{Vas+2020}
S.~Vashishth, S.~Sanyal, V.~Nitin, and P.~P. Talukdar.
\newblock Composition-based multi-relational graph convolutional networks.
\newblock In \emph{International Conference on Learning Representations}, 2020.

\bibitem[Ye et~al.(2022)Ye, Kumar, Sing, Song, and Wang]{Ye+2022}
Z.~Ye, Y.~Jaya Kumar, G.~Ong Sing, F.~Song, and J.~Wang.
\newblock A comprehensive survey of graph neural networks for knowledge graphs.
\newblock \emph{{IEEE} Access}, 10:\penalty0 75729--75741, 2022.

\bibitem[Zhu et~al.(2021)Zhu, Zhang, Xhonneux, and Tang]{Zhu+2021}
Z.~Zhu, Z.~Zhang, L.{-}P. A.~C. Xhonneux, and J.~Tang.
\newblock Neural bellman-ford networks: {A} general graph neural network
  framework for link prediction.
\newblock \emph{CoRR}, abs/2106.06935, 2021.
\newblock URL \url{https://arxiv.org/abs/2106.06935}.

\bibitem[Galkin et~al.(2020)Galkin, Trivedi, Maheshwari, Usbeck, and
  Lehmann]{galkin2020message}
M.~Galkin, P.~Trivedi, G.~Maheshwari, R.~Usbeck, and J.~Lehmann.
\newblock Message passing for hyper-relational knowledge graphs.
\newblock In \emph{Conference on Empirical Methods in Natural Language
  Processing}, pages 7346--7359, 2020.

\bibitem[Yu et~al.(2020)Yu, Yang, Zhang, and Wu]{yu2020generalized}
D.~Yu, Y.~Yang, R.~Zhang, and Y.~Wu.
\newblock Generalized multi-relational graph convolution network.
\newblock \emph{CoRR}, abs/2006.07331, 2020.

\bibitem[Zhang et~al.(2022)Zhang, Wang, Ye, and Wu]{compgcn_random_proj}
Z.~Zhang, J.~Wang, J.~Ye, and F.~Wu.
\newblock Rethinking graph convolutional networks in knowledge graph
  completion.
\newblock In \emph{ACM Web Conference 2022}, page 798–807, 2022.

\bibitem[Teru et~al.(2020)Teru, Denis, and Hamilton]{teru2020inductive}
K.~Teru, E.~Denis, and W.~Hamilton.
\newblock Inductive relation prediction by subgraph reasoning.
\newblock In \emph{International Conference on Machine Learning}, pages
  9448--9457, 2020.

\bibitem[Ali et~al.(2021{\natexlab{a}})Ali, Berrendorf, Galkin, Thost, Ma,
  Tresp, and Lehmann]{ali2021improving}
M.~Ali, M.~Berrendorf, M.~Galkin, V.~Thost, T.~Ma, V.~Tresp, and J.~Lehmann.
\newblock Improving inductive link prediction using hyper-relational facts.
\newblock In \emph{International Semantic Web Conference}, pages 74--92.
  Springer, 2021{\natexlab{a}}.

\bibitem[Zhang and Yao(2022)]{zhang2022knowledge}
Y.~Zhang and Q.~Yao.
\newblock Knowledge graph reasoning with relational digraph.
\newblock In \emph{Proceedings of the ACM Web Conference 2022}, pages 912--924,
  2022.

\bibitem[Daza and Cochez(2020)]{daza2020message}
D.~Daza and M.~Cochez.
\newblock Message passing for query answering over knowledge graphs.
\newblock \emph{CoRR}, abs/2002.02406, 2020.

\bibitem[Alivanistos et~al.(2022)Alivanistos, Berrendorf, Cochez, and
  Galkin]{starqe}
D.~Alivanistos, M.~Berrendorf, M.~Cochez, and M.~Galkin.
\newblock Query embedding on hyper-relational knowledge graphs.
\newblock In \emph{International Conference on Learning Representations}, 2022.

\bibitem[Zhu et~al.(2022)Zhu, Galkin, Zhang, and Tang]{gnn_qe}
Z.~Zhu, M.~Galkin, Z.~Zhang, and J.~Tang.
\newblock Neural-symbolic models for logical queries on knowledge graphs.
\newblock In \emph{International Conference on Machine Learning}, 2022.

\bibitem[Weisfeiler and Leman(1968)]{Wei+1968}
B.~Weisfeiler and A.~Leman.
\newblock The reduction of a graph to canonical form and the algebra which
  appears therein.
\newblock \emph{Nauchno-Technicheskaya Informatsia}, 2\penalty0 (9):\penalty0
  12--16, 1968.
\newblock English translation by G. Ryabov is available at
  \url{https://www.iti.zcu.cz/wl2018/pdf/wl_paper_translation.pdf}.

\bibitem[Grohe(2021)]{Gro+2021}
M.~Grohe.
\newblock The logic of graph neural networks.
\newblock In \emph{ACM-IEEE Symposium on Logic in Computer Science}, pages
  1--17, 2021.

\bibitem[Cai et~al.(1992{\natexlab{a}})Cai, F{\"{u}}rer, and
  Immerman]{Cai+1992}
J.~Cai, M.~F{\"{u}}rer, and N.~Immerman.
\newblock An optimal lower bound on the number of variables for graph
  identifications.
\newblock \emph{Combinatorica}, 12\penalty0 (4):\penalty0 389--410,
  1992{\natexlab{a}}.

\bibitem[Arvind et~al.(2015)Arvind, K{\"{o}}bler, Rattan, and
  Verbitsky]{Arv+2015}
V.~Arvind, J.~K{\"{o}}bler, G.~Rattan, and O.~Verbitsky.
\newblock On the power of color refinement.
\newblock In \emph{International Symposium on Fundamentals of Computation
  Theory}, pages 339--350, 2015.

\bibitem[Babai and Kucera(1979)]{Bab+1979}
L.~Babai and L.~Kucera.
\newblock Canonical labelling of graphs in linear average time.
\newblock In \emph{Symposium on Foundations of Computer Science}, pages 39--46.
  IEEE, 1979.

\bibitem[Kiefer et~al.(2015)Kiefer, Schweitzer, and Selman]{Kie+2015}
S.~Kiefer, P.~Schweitzer, and E.~Selman.
\newblock Graphs identified by logics with counting.
\newblock In \emph{International Symposium on Mathematical Foundations of
  Computer Science}, pages 319--330, 2015.

\bibitem[Grohe(2017)]{Gro2017}
M.~Grohe.
\newblock \emph{Descriptive Complexity, Canonisation, and Definable Graph
  Structure Theory}.
\newblock Cambridge University Press, 2017.

\bibitem[Babai(1979)]{Bab1979}
L.~Babai.
\newblock Lectures on graph isomorphism.
\newblock University of Toronto, Department of Computer Science. Mimeographed
  lecture notes, October 1979, 1979.

\bibitem[Cai et~al.(1992{\natexlab{b}})Cai, F{\"{u}}rer, and Immerman]{Cai1992}
J.~Cai, M.~F{\"{u}}rer, and N.~Immerman.
\newblock An optimal lower bound on the number of variables for graph
  identifications.
\newblock \emph{Combinatorica}, 12\penalty0 (4):\penalty0 389--410,
  1992{\natexlab{b}}.

\bibitem[Kingma and Ba(2015)]{Kin+2015}
D.~P. Kingma and J.~Ba.
\newblock Adam: {A} method for stochastic optimization.
\newblock In \emph{International Conference on Learning Representations}, 2015.

\bibitem[Morris et~al.(2022{\natexlab{b}})Morris, Rattan, Kiefer, and
  Ravanbakhsh]{Mor+2022a}
C.~Morris, G.~Rattan, S.~Kiefer, and S.~Ravanbakhsh.
\newblock {SpeqNets}: Sparsity-aware permutation-equivariant graph networks.
\newblock In \emph{International Conference on Machine Learning}, pages
  16017--16042, 2022{\natexlab{b}}.

\bibitem[Bevilacqua et~al.(2021)Bevilacqua, Frasca, Lim, Srinivasan, Cai,
  Balamurugan, Bronstein, and Maron]{Bev+2021}
B.~Bevilacqua, F.~Frasca, D.~Lim, B.~Srinivasan, C.~Cai, G.~Balamurugan, M.~M.
  Bronstein, and H.~Maron.
\newblock Equivariant subgraph aggregation networks.
\newblock \emph{CoRR}, abs/2110.02910, 2021.

\bibitem[Qian et~al.(2022)Qian, Rattan, Geerts, Morris, and Niepert]{Qia+2022}
C.~Qian, G.~Rattan, F.~Geerts, C.~Morris, and Mathias Niepert.
\newblock Ordered subgraph aggregation networks.
\newblock \emph{CoRR}, abs/2206.11168, 2022.

\bibitem[Ristoski et~al.(2016)Ristoski, Vries, and
  Paulheim]{ristoski2016collection}
P.~Ristoski, G.~K. D.~de Vries, and H.~Paulheim.
\newblock A collection of benchmark datasets for systematic evaluations of
  machine learning on the semantic web.
\newblock In \emph{International semantic web conference}, pages 186--194,
  2016.

\bibitem[Wang et~al.(2021)Wang, Qiu, and Wang]{wang2021survey}
M.~Wang, L.~Qiu, and X.~Wang.
\newblock A survey on knowledge graph embeddings for link prediction.
\newblock \emph{Symmetry}, 13\penalty0 (3):\penalty0 485, 2021.

\bibitem[Ali et~al.(2021{\natexlab{b}})Ali, Berrendorf, Hoyt, Vermue, Galkin,
  Sharifzadeh, Fischer, Tresp, and Lehmann]{ali2021bringing}
M.~Ali, M.~Berrendorf, C.~T. Hoyt, L.~Vermue, M.~Galkin, S.~Sharifzadeh,
  A.~Fischer, V.~Tresp, and J.~Lehmann.
\newblock Bringing light into the dark: A large-scale evaluation of knowledge
  graph embedding models under a unified framework.
\newblock \emph{IEEE Transactions on Pattern Analysis and Machine
  Intelligence}, 2021{\natexlab{b}}.

\bibitem[Glorot and Bengio(2010)]{glorot2010understanding}
X.~Glorot and Y.~Bengio.
\newblock Understanding the difficulty of training deep feedforward neural
  networks.
\newblock In \emph{International conference on artificial intelligence and
  statistics}, pages 249--256, 2010.

\bibitem[Fey and Lenssen(2019)]{Fey+2019}
M.~Fey and J.~E. Lenssen.
\newblock Fast graph representation learning with {PyTorch Geometric}.
\newblock In \emph{International Conference on Learning Representations
  Workshop on Representation Learning on Graphs and Manifolds}, 2019.

\bibitem[Degraeve et~al.(2022)Degraeve, Vandewiele, Ongenae, and
  Hoecke]{degraeve2022r}
V.~Degraeve, G.~Vandewiele, F.~Ongenae, and S.~Van Hoecke.
\newblock {R-GCN:} the {R} could stand for random.
\newblock \emph{CoRR}, abs/2203.02424, 2022.

\bibitem[Immerman and Lander(1990)]{Imm+1990}
N.~Immerman and E.~Lander.
\newblock \emph{Describing Graphs: {A} First-Order Approach to Graph
  Canonization}, pages 59--81.
\newblock Springer, 1990.

\bibitem[Sun et~al.(2019)Sun, Deng, Nie, and Tang]{sun2019rotate}
Z.~Sun, Z.-H. Deng, J.-Y. Nie, and Jian Tang.
\newblock Rotate: Knowledge graph embedding by relational rotation in complex
  space.
\newblock In \emph{International Conference on Learning Representations}, 2019.

\end{thebibliography}

\clearpage
\appendix

\section{Appendix}

\subsection{Related Work}\label{related_work}

In the following, we expand on relevant related work.

\paragraph{GNNs} Recently, GNNs~\citep{Gil+2017,Sca+2009} emerged as the most prominent graph representation learning architecture. Notable instances of this architecture include, e.g.,~\citet{Duv+2015,Ham+2017} and~\citet{Vel+2018}, which can be subsumed under the message-passing framework introduced in~\citet{Gil+2017}. In parallel, approaches based on spectral information were introduced in, e.g.,~\citet{Defferrard2016,Bru+2014,Kip+2017} and~\citet{Mon+2017}---all of which descend from early work in~\citet{bas+1997,Kir+1995,mic+2005,Mer+2005,mic+2009,Sca+2009} and~\citet{Spe+1997}.

\paragraph{Limits of GNNs and More Expressive Architectures}
Recently, connections between GNNs and Weisfeiler--Leman type algorithms have been shown~\citep{Mor+2019,Xu+2018b}. Specifically,~\citet{Mor+2019} and~\citet{Xu+2018b} showed that the $\wlone$ limits the expressive power of any possible GNN architecture in terms of distinguishing non-isomorphic graphs. In turn, these results have been generalized to the \kwl, see, e.g.,~\citet{Azi+2020,Gee+2020a,Gee+2020b,Mar+2019,Mor+2019,Morris2020b,Mor+2022b}, and connected to permutation-equivariant function approximation over graphs, see, e.g.,~\citet{Che+2019,geerts2022,Mae+2019}. \citet{barcelo2020} further
established an equivalence between the expressiveness of GNNs with readout functions
and $\textsf{C}^2$, the $2$-variable fragment of first-order logic extended by counting quantifiers.

\paragraph{Relational GNNs} Relational GNNs enjoy a profound usage in many areas of machine learning, such as complex question answering in NLP~\citep{fu2020survey} or visual question answering~\citep{huang2022endowing} in the intersection of NLP and vision. Notably, \citet{Sch+2019} proposed the first architecture, namely, \rgcn, being able to handle multi-relational data. Further, \citet{Vas+2020} proposed an alternative GNN architecture, namely, \comp, using less number of parameters and reporting improved empirical performance. In the knowledge graph reasoning area, \rgcn{} and \comp, being strong baselines, spun off numerous improved GNNs for vertex classification and transductive link prediction tasks~\citep{galkin2020message,yu2020generalized,compgcn_random_proj}. Furthermore, they inspired architectures for more complex reasoning tasks such as inductive link prediction~\citep{teru2020inductive,ali2021improving,Zhu+2021,zhang2022knowledge} and logical query answering~\citep{daza2020message,starqe,gnn_qe}.

Despite various applications, there has not been any theoretical work shedding light on multi-relational GNNs' expressive power and learning performance. Some recent empirical results highlight interesting properties of relational GNNs, e.g., a randomly initialized and untrained \rgcn{} still demonstrates non-trivial performance~\citep{degraeve2022r}, or that random perturbation of the relations does not lead to performance drops for \comp~\citep{compgcn_random_proj}.

\subsection{The Weisfeiler--Leman Algorithm}\label{APP:vr_ext}

In the following, we briefly describe Weisfeiler--Leman-type algorithms, starting with the \new{$1$-dimensional Weisfeiler--Leman algorithm} (\wlone{}).

\paragraph{The \wlone} The \wlone, or \new{color refinement}, is a simple heuristic for the graph isomorphism problem, originally proposed by~\citet{Wei+1968}.\footnote{Strictly speaking, \wlone\ and color refinement are two different algorithms. That is, \wlone\ considers neighbors and non-neighbors to update the coloring, resulting in a slightly higher expressive power when distinguishing vertices in a given graph, see~\citet{Gro+2021} for details. For brevity, we consider both algorithms to be equivalent.}
Intuitively, the algorithm determines if two graphs are non-isomorphic by iteratively coloring or labeling vertices. Given an initial coloring or labeling of the vertices of both
graphs, e.g., their degree or application-specific information, in
each iteration, two vertices with the same label get different labels if the number of identically labeled neighbors is not equal. If, after some iteration, the number of vertices annotated with a specific label is different in both graphs, the algorithm terminates and a stable coloring (partition) is obtained. We can then conclude that the two graphs are not isomorphic. It is easy to see that the algorithm cannot distinguish all non-isomorphic graphs~\citep{Cai+1992}. Nonetheless, it is a powerful heuristic that can successfully test isomorphism for a broad class of graphs~\citep{Arv+2015,Bab+1979,Kie+2015}.

Formally, let $G = (V(G),E(G),\ell)$ be a labeled graph. In each iteration, $t > 0$, the $\wlone$ computes a vertex coloring $C^{(t)} \colon V(G) \to \bbN$,
which depends on the coloring of the neighbors. That is, in iteration $t>0$, we set
\begin{equation*}
	C^{(t)}(v) \coloneqq \REL\Big(\!\big(C^{(t-1)}(v),\oms C^{(t-1)}(u) \mid u \in N(v)  \cms \big)\! \Big),
\end{equation*}
where $\REL$ injectively maps the above pair to a unique natural number, which has not been used in previous iterations.  In iteration $0$, the coloring $C^{(0)}\coloneqq \ell$. To test if two graphs $G$ and $H$ are non-isomorphic, we run the above algorithm in ``parallel'' on both graphs. If the two graphs have a different number of vertices colored $c$ in $\bbN$ at some iteration, the \wlone{} \new{distinguishes} the graphs as non-isomorphic. Moreover, if the number of colors between two iterations, $t$ and $(t+1)$, does not change, i.e., the cardinalities of the images of $C^{(t)}$ and $C^{(t+1)}$ are equal, or, equivalently,
\begin{equation*}
	C^{(t)}(v) = C^{(t)}(w) \iff C^{(t+1)}(v) = C^{(t+1)}(w),
\end{equation*}
for all vertices $v$ and $w$ in $V(G)$, the algorithm terminates. For such $t$, we define the \new{stable coloring}
$C^{\infty}(v) = C^{(t)}(v)$ for $v$ in $V(G)$. The stable coloring is reached after at most $\max \{ |V(G)|,|V(H)| \}$ iterations~\citep{Gro2017}.

Due to the shortcomings of the $\wlone$ or color refinement in distinguishing non-isomorphic
graphs, several researchers, e.g.,~\citep{Bab1979,Imm+1990}, devised a more powerful generalization of the former, today known
as the $k$-dimensional Weisfeiler-Leman algorithm (\kwl).\footnote{There exists two definitions of the \kwl, the so-called oblivious \kwl\ and the folklore or non-oblivious \kwl, see~\citet{Gro+2021}. There is a subtle difference in how they aggregate neighborhood information. Within the graph learning community, it is customary to abbreviate the oblivious \kwl\ as \kwl, a convention that we follow in this paper.}

\paragraph{Oblivious \kwl}  Intuitively, to surpass the limitations of the $\wlone$, the \kwl{} colors ordered subgraphs instead of a single vertex. More precisely, given a graph $G$, it colors the tuples from $V(G)^k$ for $k \geq 2$ instead of the vertices. By defining a neighborhood between these tuples, we can define a coloring  similar to the $\wlone$. Formally, let $G$ be a labeled graph, and let
$k \geq 2$. In each iteration $t \geq 0$, the algorithm, similarly to the $\wlone$, computes a
\new{coloring} $C^{(t)}_k \colon V(G)^k \to \bbN$. In the first iteration, $t=0$, the tuples $\vec{v}$ and $\vec{w}$ in $V(G)^k$ get the same
color if they have the same atomic type, i.e.,
$C^{(0)}_k(\vec{v}) \coloneqq \mathsf{atp}(\vec{v})$. Now, for $t \geq 0$, $C^{(t+1)}_k$ is defined
by
\begin{equation*}\label{APP:ci}
	C_{(t+1)}^k (\vec{v}) \coloneqq \REL \Big(\!\big( C^{(t)}_k(\vec{v}), M^{(t)}(\vec{v}) \big)\! \Big),
\end{equation*}
with $M^{(t)}(\vec{v})$ the tuple
\begin{equation}\label{APP:mi}
	M^{(t)}(\vec{v}) \coloneqq  \big( \{\!\! \{  C^{(t)}_k(\theta_1(\vec{v},w)) \mid w \in V(G) \} \!\!\}, \dots, \{\!\! \{  C^{(t)}_k(\theta_k(\vec{v},w)) \mid w \in V(G) \} \!\!\} \big).
\end{equation}
We also call $M^{(t)}$ an \new{aggregation function}. Here
\begin{equation*}
	\theta_j(\vec{v},w)\coloneqq (v_1, \dots, v_{j-1}, w, v_{j+1}, \dots, v_k).
\end{equation*}
That is, $\theta_j(\vec{v},w)$ replaces the $j$-th component of the tuple $\vec{v}$ with the vertex $w$. Hence, two tuples $\vec{v}$ and $\vec{w}$ with the same color in iteration $t$ get different colors in iteration $(t+1)$ if there exists a $j$ in $[k]$ such that the number of $j$-neighbors of $\vec{v}$ and $\vec{w}$, respectively, colored with a certain color is different.

Hence, two tuples are \new{adjacent} or \new{$j$-neighbors} if they are different in the $j$th component (or equal, in the case of self-loops). Again, we run the algorithm until convergence, i.e.,
\begin{equation*}
	C^{(t)}_k(\vec{v}) = C^{(t)}_k(\vec{w}) \iff C^{(t+1)}_k(\vec{v}) = C^{(t+1)}_k(\vec{w}),
\end{equation*}
for all $\vec{v}$ and $\vec{w}$ in $V(G)^k$ holds, and call the partition of $V(G)^k$
induced by $C^{(t)}_k$ the stable partition. For such $t$, we define
$C^{\infty}_k(\vec{v}) \coloneqq C^{(t)}_k(\vec{v})$ for $\vec{v}$ in $V(G)^k$.

To test whether two graphs $G$ and $H$ are non-isomorphic, we run the \kwl{} in ``parallel'' on both graphs. Then, if the two graphs have a different number of $k$-tuples colored $c$ in $\bbN$, the \kwl{} \textit{ distinguishes} the graphs as non-isomorphic. By increasing $k$, the algorithm becomes more powerful in distinguishing non-isomorphic graphs, i.e., for each $k \geq 1$, there are non-isomorphic graphs distinguished by $(k+1)$\text{-}\textsf{WL} but not by \kwl~\citep{Cai+1992}.

\paragraph{Local $\boldsymbol{\delta}$-$\boldsymbol{k}$-dimensional Weisfeiler--Leman Algorithm}\label{APP:lwl_ext}

\citet{Morris2020b} introduced a more efficient variant of the \kwl, the \new{local $\delta$-$k$-dimensional Weisfeiler--Leman algorithm} (\localkwl). In contrast to the \kwl, the  \localkwl{} considers only a subset of the entire neighborhood of a vertex tuple. Let the tuple $\vec{w} = \theta_j(\vec{v},w)$ be a $j$-{neighbor} of $\vec{v}$. We say that $\vec{w}$ is a \new{local} $j$-neighbor of $\vec{v}$ if $w$ is adjacent to the replaced vertex $v_j$. Otherwise, the tuple $\vec{w}$ is a \new{global} $j$-neighbor of $\vec{v}$. The \localkwl{} considers only local neighbors during the neighborhood aggregation process, and discards any information about the global neighbors. Formally, the \localkwl{} refines a coloring $C^{(t)}_{k,\delta}$ (obtained after $t$ rounds of the \localkwl) via the aggregation function
\begin{equation*}\label{APP:eqnmidd_ext}
	\begin{split}
		M^{(t)}_{\delta}(\vec{v}) \coloneqq  \big( \{\!\! \{ C^{(t)}_{k,\delta}(\theta_1(\vec{v},w)) \mid w \in N(v_1) \} \!\!\}, \dots, \{\!\! \{  C^{(t)}_{k,\delta}(\theta_k(\vec{v},w)) \mid w \in N(v_k) \}  \!\!\} \big),
	\end{split}
\end{equation*}
hence considering only the local $j$-neighbors of the tuple $\vec{v}$ in each iteration.  The coloring functions for the iterations of the \localkwl{} are then defined by
\begin{equation*}\label{APP:wlsimple_ext}
	C^{(t+1)}_{k,\delta}(\vec{v}) \coloneqq \REL \Big(\!\big( C^{(t)}_{k,\delta}(\vec{v}), M^{(t)}_{\delta}(\vec{v}) \big)\! \Big).
\end{equation*}
Note that the \wlone{} is equivalent to the $\delta$-1-\textsf{LWL}. \citet{Morris2020b} showed that, for each $k$, the \localkwl{} can distinguish graphs that the \kwl{} cannot and derived a variation of the former that is strictly more powerful than the \kwl.

\section{Missing Proofs in Section~\ref{sec:multi-wl}}

\begin{theorem}[Theorem~\ref{thm:mrgcn-upper} in the main text]
	\label{APP:thm:mrgcn-upper}
	{\em Let $G = (V(G), R_1(G), \dots, R_r(G), \ell)$ be a labeled, multi-relational graph. Then for all $t \geq 0$ the following hold:
		\begin{itemize}
			\item For all choices of initial vertex features consistent with $\ell$, sequences $\mathbf{W}_\rgcn^{(t)}$ of \rgcn\ parameters,
			      and vertices $v$ and $w$ in $V(G)$,
			      \begin{equation*}
				      C^{(t)}_{\textsf{R}}(v) =  	C^{(t)}_{\textsf{R}}(w) \ \ \Longrightarrow \ \   \hb_{v, \rgcn}^\tup{t} = \hb_{w, \rgcn}^\tup{t}.
			      \end{equation*}
			\item For all choices of initial vertex features consistent with $\ell$, sequences $\mathbf{W}_\comp^{(t)}$ of \comp
			      parameters,
			      composition functions $\phi$, and vertices $v$ and $w$ in $V(G)$,
			      \begin{equation*}
				      C^{(t)}_{\textsf{R}}(v) =  	C^{(t)}_{\textsf{R}}(w) \ \ \Longrightarrow \ \  \hb_{v, \comp}^\tup{t} = \hb_{w, \comp}^\tup{t}.
			      \end{equation*}
		\end{itemize} }
\end{theorem}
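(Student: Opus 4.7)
The plan is to prove both items simultaneously by induction on the iteration counter $t$, exploiting that the only essential structural ingredient is that both architectures aggregate in a way that factors through the multiset of pairs (neighbor-feature, relation-index).

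\textbf{Base case ($t=0$).} Since $C^{(0)}_{\textsf{R}} = \ell$ by definition, and the initial features are by assumption consistent with $\ell$, i.e.\ $\hb^{(0)}_u = \hb^{(0)}_v$ iff $\ell(u)=\ell(v)$, the implication $C^{(0)}_{\textsf{R}}(v)=C^{(0)}_{\textsf{R}}(w)\Rightarrow \hb^{(0)}_v=\hb^{(0)}_w$ is immediate for both \rgcn\ and \comp.

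\textbf{Inductive step.} Assume the claim holds up to iteration $t-1$ for both architectures. Suppose $C^{(t)}_{\textsf{R}}(v) = C^{(t)}_{\textsf{R}}(w)$. Since $\RL$ in Equation~\ref{wlscomp} is injective, this equality forces (i) $C^{(t-1)}_{\textsf{R}}(v) = C^{(t-1)}_{\textsf{R}}(w)$, and (ii) the multisets $\oms (C^{(t-1)}_{\textsf{R}}(u), i) \mid i\in[r],\, u\in N_i(v)\cms$ and $\oms (C^{(t-1)}_{\textsf{R}}(u), i) \mid i\in[r],\, u\in N_i(w)\cms$ coincide. From (ii), projecting on the second coordinate yields $|N_i(v)|=|N_i(w)|$ for every $i\in[r]$, and slicing by each $i$ shows that $\oms C^{(t-1)}_{\textsf{R}}(u) \mid u\in N_i(v)\cms = \oms C^{(t-1)}_{\textsf{R}}(u) \mid u\in N_i(w)\cms$. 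By the inductive hypothesis applied pointwise to equal colors, this upgrades to equality of feature multisets, for each relation $i$:
\begin{equation*}
\oms \hb^{(t-1)}_{u,\star} \mid u\in N_i(v)\cms \;=\; \oms \hb^{(t-1)}_{u,\star} \mid u\in N_i(w)\cms,
\end{equation*}
where $\star\in\{\rgcn,\comp\}$.

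\textbf{Collapsing the aggregation.} For \rgcn, multiset equality per relation implies $\sum_{u\in N_i(v)}\hb^{(t-1)}_{u,\rgcn}\mW_i^{(t)} = \sum_{u\in N_i(w)}\hb^{(t-1)}_{u,\rgcn}\mW_i^{(t)}$ for every $i$, since summation is symmetric; summing over $i\in[r]$ and combining with $\hb^{(t-1)}_{v,\rgcn} = \hb^{(t-1)}_{w,\rgcn}$ (from (i) and the inductive hypothesis) and applying $\sigma$ elementwise yields $\hb^{(t)}_{v,\rgcn} = \hb^{(t)}_{w,\rgcn}$. For \comp, the argument is identical after observing that $\phi(\hb^{(t-1)}_{u,\comp},\zb_i^{(t)})$ depends only on $\hb^{(t-1)}_{u,\comp}$ and $i$; hence multiset equality of $(\hb^{(t-1)}_{u,\comp},i)$ per relation propagates through $\phi$, right-multiplication by $\mW_1^{(t)}$, and summation, giving $\hb^{(t)}_{v,\comp} = \hb^{(t)}_{w,\comp}$.

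There is no real obstacle here; the only subtlety worth flagging explicitly is that one must use the \emph{pair} $(\,\cdot\,,i)$ inside the $\rwlone$ multiset and not merely the colors, so that the per-relation multiset equalities needed to match the per-relation sums in \rgcn\ and \comp\ are genuinely available. The rest is a routine induction that treats both architectures uniformly.
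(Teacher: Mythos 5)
Your proof is correct and follows essentially the same route as the paper's: induction on $t$, with the base case from consistency of the initial features with $\ell$, and the inductive step extracting per-relation multiset equalities of neighbor colors from the injectivity of $\RL$, upgrading them to feature-multiset equalities via the induction hypothesis, and pushing them through the symmetric sums in Equations~\ref{eq:rgcn} and~\ref{eq:comp}. The only cosmetic difference is that you treat both architectures uniformly and spell out the per-relation slicing, whereas the paper proves the \comp{} case and declares the \rgcn{} case analogous.
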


\begin{proof}
	We only prove it for \comp as the proof for \rgcn\ is analogous.
	Fix initial vertex features $(\hb_{v}^{(0)})_{v\in V(G)}$ for $G$ consistent with $\ell$, a sequence $\mathbf{W}_\comp^{(t)}$ of parameters,
	a composition function $\phi$, and two vertices $v$ and $w$ in $V(G)$.
	We prove the result by induction on $t \geq 0$.
	For $t = 0$, the statement follows immediately
	from the fact the initial features $(\hb_{v}^\tup{0})_{v\in V(G)}$ are consistent with $\ell$.
	Assume now that $C^{(t)}_{\textsf{R}}(v) = C^{(t)}_{\textsf{R}}(w)$, for $t>0$. Hence, by Equation~\ref{wlscomp},
	it must be the case
	that
	\begin{itemize}
		\item $C^{(t-1)}_{\textsf{R}}(v) =  	C^{(t-1)}_{\textsf{R}}(w)$, and
		\item $\oms C^{(t-1)}_{\textsf{R}}(u) \mid u \in\!N_i(v)  \cms = \oms C^{(t-1)}_{\textsf{R}}(u) \mid u \in\!N_i(w)  \cms$, for each $i \in [r]$.
	\end{itemize}
	Then, by induction hypothesis, it holds that:
	\begin{itemize}
		\item $\hb_{v, \comp}^\tup{t-1} = \hb_{w, \comp}^\tup{t-1}$, and
		\item $\oms \hb_{u, \comp}^\tup{t-1} \mid u \in\!N_i(v)  \cms = \oms \hb_{u, \comp}^\tup{t-1} \mid u \in\!N_i(w)  \cms$, for each $i$ in $[r]$.
	\end{itemize}
	From these two we conclude by applying
	Equation~\ref{eq:comp}
	that $\hb_{v, \comp}^\tup{t} = \hb_{w, \comp}^\tup{t}$.
	This is because we
	have that $\hb_{v, \comp}^\tup{t-1}  \mW^{(t)}_0 = \hb_{w, \comp}^\tup{t-1}   \mW^{(t)}_0$
	and
	$$\sum_{u \in N_i(v)}  \phi \big(\hb_{u, \comp}^\tup{t-1}, \zb_i^{(t)} \big)  \mW_1^{(t)} \ = \
		\sum_{u \in N_i(w)}  \phi \big(\hb_{u, \comp}^\tup{t-1}, \zb_i^{(t)} \big)  \mW_1^{(t)},
	$$
	for each $i \in [r]$.
\end{proof}
\medskip
\begin{theorem}[Theorem~\ref{thm:mrgcn-lower} in the main text]\label{APP:thm:mrgcn-lower}
	{\em Let $G=(V(G), R_1(G), \dots, R_r(G), \ell)$ be a labeled, multi-relational graph. For all $t\geq 0$:
		\begin{itemize}
			\item
			      There exist initial vertex features and a sequence $\mathbf{W}_\rgcn^{(t)}$ of parameters such that for all $v$ and $w$ in $V(G)$,
			      \begin{equation*}
				      C^{(t)}_{\textsf{R}}(v) =  	C^{(t)}_{\textsf{R}}(w) \ \ \Longleftrightarrow \ \   \hb_{v, \rgcn}^\tup{t} = \hb_{w, \rgcn}^\tup{t}.
			      \end{equation*}
			\item
			      There exist initial vertex features, a sequence $\mathbf{W}_\comp^{(t)}$ of parameters
			      and a composition function $\phi$ such that for all $v$ and $w$ in $V(G)$,
			      \begin{equation*}
				      C^{(t)}_{\textsf{R}}(v) =  	C^{(t)}_{\textsf{R}}(w) \ \ \Longleftrightarrow \ \   \hb_{v, \comp}^\tup{t} = \hb_{w, \comp}^\tup{t}.
			      \end{equation*}
		\end{itemize} }
\end{theorem}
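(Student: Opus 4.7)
The plan is induction on $t$, leveraging Theorem~\ref{APP:thm:mrgcn-upper} which already supplies the forward implication $C^{(t)}_{\textsf{R}}(v) = C^{(t)}_{\textsf{R}}(w) \Rightarrow \hb_v^{(t)} = \hb_w^{(t)}$ for any parameter choice; only the converse needs constructing. I would maintain the inductive invariant that distinct \rwlone\ colors correspond to linearly independent feature vectors at iteration $t$ (preserved by widening the layer width as needed). Since $G$ is fixed and finite, the number of colors $N_{t'}$ at each intermediate $t'$ is finite, and parameters may depend on $G$ and on the target $t$. For the base case $t = 0$, take $\hb_v^{(0)}$ to be the one-hot encoding of $\ell(v)$ in $\bbR^{N_0}$, which is consistent with $\ell$ and satisfies the invariant.

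For the inductive step in \rgcn, let $\hb_{c_j}^{(t-1)}$ for $j \in [N_{t-1}]$ denote the linearly independent feature representatives of the $N_{t-1}$ colors at iteration $t-1$. Using linear independence, pick $\mW_0^{(t)}, \mW_1^{(t)}, \ldots, \mW_r^{(t)}$ so that, on plugging into Equation~\ref{eq:rgcn}, the pre-activation of $\hb_v^{(t)}$ decomposes into $(r+1)$ non-negative ``blocks'': block $0$ is the one-hot encoding of $C^{(t-1)}_{\textsf{R}}(v)$, and block $i$ is the count vector of colors in $N_i(v)$ for $i \in [r]$. By the definition of \rwlone\ in Equation~\ref{wlscomp}, this pre-activation is an injective function of the new color $C^{(t)}_{\textsf{R}}(v)$. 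All entries are non-negative so $\sigma$ (taken as ReLU) acts as the identity, and the resulting $\hb_v^{(t)}$ are distinct across distinct new colors. For \comp\ with $\phi(\hb, \zb) = \hb \odot \zb$, the same program goes through by embedding the previous features into $\bbR^{r N_{t-1}}$ (replicating the color encoding across $r$ blocks of size $N_{t-1}$) and taking $\zb_i^{(t)}$ to be the indicator of block $i$: then $\phi(\hb_w^{(t-1)}, \zb_i^{(t)})$ is supported only on block $i$ and equal there to $w$'s color encoding, so the double sum in Equation~\ref{eq:comp} deposits the per-relation count vectors block by block, and a shared $\mW_1^{(t)}$ (e.g., identity) plus a suitable $\mW_0^{(t)}$ for the self term reproduces the \rgcn\ construction. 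Circular correlation admits an analogous dimension-selecting action.

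\textbf{Main obstacle.}
The principal difficulty lies in \comp: unlike \rgcn\ with its per-relation weight matrices, \comp\ shares a single $\mW_1^{(t)}$ across all relations, so per-relation discrimination must be smuggled in through $\phi$ and the vectors $\zb_i^{(t)}$. This is exactly why the theorem restricts to composition functions capable of expressing coordinate-wise scaling, and it is the source of the gap recorded in Theorem~\ref{thm:weak-comp-wl}: composition maps like point-wise summation or concatenation cannot tag relations in this way and therefore only match the weaker weak multi-relational \wlone. A secondary technical subtlety is that the naive block-structured encoding may produce linearly dependent features across iterations, obstructing the next inductive step; this is overcome by widening the output dimension and choosing the weights generically so that the finitely many collision conditions on the fixed graph $G$ are all avoided.
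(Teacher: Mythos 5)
Your overall architecture---induction on $t$ with the invariant that distinct $\rwlone$ colors are represented by linearly independent features, per-relation weight matrices for \rgcn, and block-indicator edge vectors $\zb_i^{(t)}$ under point-wise multiplication to route each relation's contribution into a disjoint coordinate block for \comp---matches the paper's strategy in spirit. (The paper instead packs the per-relation counts into base-$n$ digits via scalar factors $\alpha_i = n^i$ applied through a single shared $\mW_1$; your block-routing is an equally legitimate way to push relation identity past the shared weight matrix, and your reading of why summation/concatenation fail is correct.) The forward implication via Theorem~\ref{APP:thm:mrgcn-upper} and the base case are also fine.

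The gap is in how you restore the inductive invariant after a layer. You arrange the pre-activation of $\hb_v^{(t)}$ to be the concatenation of the one-hot color of $v$ with its per-relation neighbor-count vectors, note that all entries are non-negative so that ReLU acts as the identity, and then claim linear independence of the distinct new features can be recovered by ``widening the output dimension and choosing the weights generically.'' This cannot work: if $\sigma$ acts as the identity, then $\hb_v^{(t)}$ is a \emph{linear} image of $v$'s count vector, and every linear relation among the distinct count vectors is inherited by their images under any linear map into any dimension. Such relations occur in ordinary graphs: with one relation and two colors at step $t-1$, four vertices of the same old color whose count rows are $(1,0,0,0)$, $(1,0,1,0)$, $(1,0,0,1)$, $(1,0,1,1)$ are pairwise distinct (hence get four new colors) yet satisfy $\text{row}_1+\text{row}_4=\text{row}_2+\text{row}_3$, so the new feature representatives cannot be linearly independent, no left inverse $\mM$ with $\widetilde{\mF}\mM=\mI$ exists at the next layer, and the induction halts. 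This is precisely the point the paper's proof does not leave to genericity: it composes the counting step with a further linear map $\mX$ and the thresholding $\sign(\cdot\,-\mJ)$, and invokes Lemma~\ref{APP:lemma:sign-matrix} to guarantee that the resulting matrix of distinct $\{-1,1\}$-rows is non-singular, hence row-independent modulo equality (with a ReLU variant obtained as in Corollary 16 of \citet{Mor+2019}). Some genuinely nonlinear step of this kind is needed; extra width and generic linear weights do not suffice.
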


\begin{proof}
	We focus on the case of \comp{} when the composition map $\phi$ is vector scaling, that is, $\phi(\hb, \alpha) = \alpha \hb$, for $\hb$ in $\mathbb{R}^d$ and $\alpha$ in $\mathbb{R}$. As we explain later, this implies the cases of \rgcn{}, \comp{} with point-wise multiplication, and also \comp{} with circular-correlation.

	The proof is a refinement of \cite[Theorem 2]{Mor+2019} for multi-relational graphs. For a matrix $\mB$, we denote by $\mB_i$ its $i$-th row. Let $n=|V(G)|$ and without loss of generality assume $V(G)=[n]$. We represent vertex features for $G$ as a matrix $\mF$ in $\mathbb{R}^{n\times d}$, where $\mF_v$ corresponds to the vertex feature of $v$. By slightly abusing notation, we view vertex features as a coloring for $G$. In particular, we denote by $\Gamma_G(\mF)$ the application of one step of the  \rwlone{} on $G$. That is,
	$\Gamma_G(\mF)$ is a coloring $C \colon V(G)\to \mathbb{N}$ such that for each $v$ in $V(G)$,
	$$ C(v) \coloneqq \RL \Big(\!\big(C_{\mF}(v), \oms (C_{\mF}(u), i) \mid i \in [r], u \in\!N_i(v) \cms \big)\! \Big), $$
	where $C_{\mF}$ is the coloring corresponding to the matrix $\mF$.
	On the other hand, the update rule of \comp{} can be written as follows:
	$$\mF' = \sigma(\mF \mW_0  + \sum_{i\in[r]} \alpha_i \mA_i \mF \mW_1 + b\mJ),$$
	where $\mW_0$ and $\mW_1$ are the parameter matrices, $\alpha_i$ are the scaling factors, $\mA_i$ is the adjacency matrix for the relation $R_i(G)$, and $\mJ$ is the all-one matrix of appropriate dimensions, representing the biases. Here we choose $\sigma$ to be the sign function $\sign$ and the bias $b$ to be $b=-1$. Using the same argument as in \cite[Corollary 16]{Mor+2019}, we can replace $\sigma$ by the ReLU function.

	We need the following lemma shown in \cite[Lemma 9]{Mor+2019}.

	\begin{lemma}[\cite{Mor+2019}]
		\label{APP:lemma:sign-matrix}
		{\em
		Let $\mB$ in $\mathbb{N}^{s\times t}$ be a matrix such that all the rows are pairwise distinct. Then there is a matrix $\mX$ in $\mathbb{R}^{t\times s}$ such that the matrix $\sign(\mB\mX-\mJ)$   in $\{-1,1\}^{s\times s}$ is non-singular.
		}
	\end{lemma}
	Following~\cite{Mor+2019}, we say that a matrix is \emph{row-independent modulo equality} if the set of all rows appearing in the matrix is linearly independent. For two colorings $C$ and $C'$ of $G$, we write $C\equiv C'$ if the colorings define the same partition on $V(G)$. The key lemma of the proof is the following:
	\begin{lemma}
		\label{APP:lemma:key-wl-comp}
		{\em
			Let $\mF$ in $\mathbb{R}^{n\times d}$ be row-independent modulo equality.
			Then there are matrices $\mW_0$ and $\mW_1$ in $\mathbb{R}^{d\times e}$ and scaling factors $\alpha_i$ in $\mathbb{R}$, for $i$ in $[r]$, such that the matrix
			$$ \mF' = \sign(\mF \mW_0  + \sum_{i\in[r]} \alpha_i \mA_i \mF \mW_1 - \mJ)$$
			is row-independent modulo equality and $\mF' \equiv \Gamma_G(\mF)$.
		}
	\end{lemma}

	\begin{proof}
		Let $q$ be the number of distinct rows in $\mF$ and let $\widetilde{\mF}$ in $\mathbb{R}^{q\times d}$ be the matrix whose rows are the distinct rows of $\mF$ in an arbitrary but fixed order. We denote by $Q_1,\dots,Q_q$ the associated \emph{color classes}, that is, a vertex $v$ in $[n]$ is in $Q_j$ if and only if $\mF_v = \widetilde{\mF}_j$. By construction, the rows of $\widetilde{\mF}$
		are linearly independent, and hence there is a matrix $\mM$ in $\mathbb{R}^{d\times q}$ such that $\widetilde{\mF}\mM$ in $\mathbb{R}^{q\times q}$ is the identity matrix. It follows that the matrix $\mF\mM$ in $\mathbb{R}^{n\times q}$ has entries:
		$$(\mF\mM)_{vj} =
			\begin{cases}
				1 & \text{if $v\in Q_j$} \\
				0 & \text{otherwise}.
			\end{cases}$$
		Let $\mD$ in $\mathbb{N}^{n\times q(r+1)}$ be the matrix with entries:
		$$\mD_{vh} =
			\begin{cases}
				|N_i(v)\cap Q_j| & \text{if $h = iq + j$ for $i\in [r]$, $j\in [q]$} \\
				1                & \text{if $h\in[q]$ and $v\in Q_h$}                \\
				0                & \text{otherwise}.
			\end{cases}$$
		So the $v$-th row of $\mD$ is the concatenation of a one-hot vector encoding of the color of $v$ and a vector encoding for the multiset of the colors in $N_i(v)$, for each $i$ in $[r]$. We have
		$$\Gamma_G(\mF) \equiv \mD$$
		if we view $\mD$ as a coloring of $G$. We can also see $\mD$ as a block matrix $\mD = [\mN_0\, \mN_1 \cdots \mN_r]$, where $\mN_0 = \mF\mM$ in $\mathbb{N}^{n\times q}$ and $\mN_i=\mA_i \mF \mM$ in $\mathbb{N}^{n\times q}$ for each $i$ in $[r]$. Since $0\leq \mD_{vh} \leq n-1$, for all $v$ in $[n], h$ in $[q(r+1)]$, we have
		$$\mD \equiv \mE$$
		where
		$$\mE = \mF\mM + \sum_{i\in [r]}n^i \mA_i\mF\mM.$$
		Indeed, $\mE_{vj}$ is simply the $n$-base representation of the vector $(\mD_{vj}, \mD_{v(qj)}, \dots, \mD_{v(rqj)})$, and hence $\mE_v = \mE_w$ if and only if $\mD_v = \mD_w$.

		Let $p$ be the number of distinct rows in $\mE$ and let $\widetilde{\mE}$ in $\mathbb{N}^{p\times q}$ be the matrix whose rows are the distinct rows of $\mE$ in an arbitrary but fixed order. We can apply  Lemma~\ref{APP:lemma:sign-matrix} to $\widetilde{\mE}$ and obtain a matrix $\mX$ in $\mathbb{R}^{q\times p}$ such that $\sign(\widetilde{\mE}\mX - \mJ)$ in $\mathbb{R}^{p\times p}$ is non-singular.
		In particular, $\sign(\mE\mX - \mJ)$ is row-independent modulo equality and
		$\sign(\mE\mX - \mJ)\equiv \mE \equiv \Gamma_{G}(\mF)$.
		Let $\mW_0 = \mW_1 = \mM\mX$ in $\mathbb{R}^{d\times p}$ and $\alpha_i =n^i$ for $i$ in $[r]$. We have
		\begin{align*}
			\mF' & = \sign(\mF \mW_0  + \sum_{i\in[r]} \alpha_i \mA_i \mF \mW_1 - \mJ)  \\
			     & = \sign(\mF\mM\mX  + \sum_{i\in[r]} \alpha_i \mA_i \mF \mM\mX - \mJ) \\
			     & = \sign(\mE\mX - \mJ).
		\end{align*}
		Hence $\mF'$ is row-independent modulo equality and $\mF'= \sign(\mE\mX - \mJ) \equiv \Gamma_{G}(\mF)$.
	\end{proof}

	Now the theorem follows directly from Lemma~\ref{APP:lemma:key-wl-comp}. We start with initial vertex features $(\hb_v^{(0)})_{v \in V(G)}$ consistent with $\ell$ such that different features are linearly independent. Hence the matrix $\mF^{(0)}$ representing the initial features is row-independent modulo equality and we can apply iteratively Lemma~\ref{APP:lemma:key-wl-comp} to obtain the required sequence $\mathbf{W}_\comp^{(t)}$ such that $C^{(t)}_{\textsf{R}} \equiv \mF^{(t)}$, where $\mF^{(t)}$ is the matrix representing the vertex features $(\hb_{v, \comp}^{(t)})_{v \in V(G)}$. In particular, $C^{(t)}_{\textsf{R}}(v) =  	C^{(t)}_{\textsf{R}}(w) \Leftrightarrow  \hb_{v, \comp}^\tup{t} = \hb_{w, \comp}^\tup{t}$, for all $v$ and $w$ in $V(G)$.

	\begin{remark}
		Note that the dimensions $d\times e$ of the parameter matrices at layer $t$ correspond to the number of distinct colors before ($q$) and after ($p$) the application of the layer.
	\end{remark}

	%  \mr{TODO: add remark about the dimensions of $\mW_0, \mW_1\in\mathbb{R}^{d\times e}$ ($d=q$ and $e=p$, i.e., correspond to number of colors).}

	The case of \comp{} with point-wise multiplication holds since we can simulate vector scaling as
	$\alpha \hb = \hb * (\alpha,\dots,\alpha)$, where $*$ denotes point-wise multiplication. Similarly, the case of \rgcn{} follows as we can simulate vector scaling by setting $\mW_i = \alpha_i \mW_1$, for each $i$ in $[r]$.

	Finally, we show that the result also holds for \comp{} with circular correlation. This composition map is defined as follows\footnote{For $0$-indexed vectors, this is simply $(\hb \star \zb)_i = \sum_{j=0}^{d-1} \hb_j \zb_{(i+j)\, \text{mod}\, d}$ for $0\leq i \leq d-1$.}:
	$$(\hb \star \zb)_i = \sum_{j=1}^d \hb_j \zb_{((i+j-2)\, \text{mod}\, d) + 1 },$$
	where $\hb, \zb$ in $\mathbb{R}^d$, $\hb \star \zb$ in $\mathbb{R}^d$ and $i$ in $[d]$. We can easily simulate one layer of \comp{} with vector scaling using two layers of \comp{} with circular-correlation. Indeed, for a layer of the form
	$$\hb_{v} = \sigma \Big(\gb_{v} \mW_0 +
		\sum_{i \in [r]} \sum_{w \in N_i(v)}  \alpha_i \gb_{w}  \mW_1 + \bbb \Big),$$
	where $\gb_{u}$ in $\mathbb{R}^d$, for all $u$ in $V(G)$,
	we first use a layer of the form
	$$\widetilde{\hb}_v =  \gb_{v} \mP,$$
	where $\mP$ in $\mathbb{R}^{d\times d}$ reverts the vertex features, that is,
	all the entries are zero except for $\mP_{(n-i+1)i} = 1$ for all $i$ in $[d]$,
	followed by a layer
	$$\hb_{v} = \sigma \Big(\widetilde{\hb}_{v} \mP \mW_0 +
		\sum_{i \in [r]} \sum_{w \in N_i(v)}  (\widetilde{\hb}_{v}\star (0,\dots, 0, \alpha_i)) \mW_1 + \bbb \Big).$$
\end{proof}

\subsection{On the Choice of the Composition Function for \rgcn{} Architectures}

\begin{proposition}[Proposition~\ref{prop:weak-wl-sep} in the main text]
	\label{APP:prop:weak-wl-sep}
	{\em
		There exist a labeled, multi-relational graph $G=(V(G),R_1(G),R_2(G),\ell)$ and two vertices $v$ and $w$ in $V(G)$, such that $C^{(1)}_{\textsf{R}}(v)\neq C^{(1)}_{\textsf{R}}(w)$ but $C^{\infty}_{\textsf{WR}}(v) = C^{\infty}_{\textsf{WR}}(w)$.
	}
\end{proposition}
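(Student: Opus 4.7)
The plan is to construct a small explicit witness. I would take four vertices $V(G) = \{v, w, a, b\}$ with initial labels $\ell(v) = \ell(w)$ but $\ell(a) \neq \ell(b)$, and two relations defined as
\begin{equation*}
R_1(G) = \{\{v, a\}, \{w, b\}\}, \qquad R_2(G) = \{\{v, b\}, \{w, a\}\},
\end{equation*}
so that the roles of $a$ and $b$ as neighbors of $v$ and $w$ are swapped between the two relations.

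First I would verify the R-side: after one iteration of $\rwlone$, the aggregated multisets at $v$ and $w$ are $\{\!\!\{(\ell(a),1),(\ell(b),2)\}\!\!\}$ and $\{\!\!\{(\ell(b),1),(\ell(a),2)\}\!\!\}$ respectively, and since $\ell(a) \neq \ell(b)$ these multisets differ; combined with $\ell(v) = \ell(w)$, this yields $C^{(1)}_{\textsf{R}}(v) \neq C^{(1)}_{\textsf{R}}(w)$.

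Next I would verify the weak side by induction on $t$, establishing the invariant that in every iteration $t$ one has $C^{(t)}_{\textsf{WR}}(v) = C^{(t)}_{\textsf{WR}}(w)$. The base case $t = 0$ is immediate from $\ell(v) = \ell(w)$. For the inductive step, note that $v$ and $w$ each have degree $1$ in $R_1$ and degree $1$ in $R_2$, so their degree vectors agree, and by construction the multisets of their neighbors' colors coincide as \emph{unordered} multisets: $\{\!\!\{C^{(t)}_{\textsf{WR}}(a), C^{(t)}_{\textsf{WR}}(b)\}\!\!\}$ in both cases. Since the weak refinement discards the relation index during aggregation, the full inputs to $\RL$ for $v$ and $w$ coincide, giving $C^{(t+1)}_{\textsf{WR}}(v) = C^{(t+1)}_{\textsf{WR}}(w)$.

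A small subtlety to check is that the inductive step does not break through backpropagation from $a$ and $b$: these two vertices may well acquire distinct weak colors (indeed they will, since they start from distinct labels), but what matters is only that $v$ and $w$ see the \emph{same} multiset $\{\!\!\{c_a^{(t)}, c_b^{(t)}\}\!\!\}$ of neighbor colors, and this is guaranteed by the symmetric construction. Passing to the limit then yields $C^{\infty}_{\textsf{WR}}(v) = C^{\infty}_{\textsf{WR}}(w)$, completing the proof. I do not anticipate any real obstacle; the only care needed is in the book-keeping that verifies the invariant on the $\{v,w\}$ pair independently of how the coloring evolves on $\{a,b\}$.
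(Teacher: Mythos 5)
Your construction is exactly the paper's witness (the paper uses $u_1,u_2$ with labels $1,2$ where you use $a,b$ with distinct labels), and the verification of both sides proceeds identically. If anything, your explicit induction showing $C^{(t)}_{\textsf{WR}}(v)=C^{(t)}_{\textsf{WR}}(w)$ for all $t$ is slightly more careful than the paper, which computes only the first weak iteration and directly asserts the conclusion for the stable coloring.
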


\begin{proof}
	We have $V(G) = \{v,w,u_1,u_2\}$, $R_1(G)=\{(v,u_1), (w, u_2)\}$, $R_2(G)=\{(v,u_2), (w,u_1)\}$, $\ell(v)=\ell(w)=0$, $\ell(u_1)=1$ and $\ell(u_2)=2$. Hence,
	$$
		C^{(1)}_{\textsf{R}}(v)  = \RL \Big(\!\big(0, \oms (1, 1), (2,2) \cms \big)\! \Big) \qquad
		C^{(1)}_{\textsf{R}}(w)  = \RL \Big(\!\big(0, \oms (2, 1), (1,2) \cms \big)\! \Big),
	$$
	that is, $C^{(1)}_{\textsf{R}}(v)\neq C^{(1)}_{\textsf{R}}(w)$. On the other hand,
	$$
		C^{(1)}_{\textsf{WR}}(v)  = \RL \Big(\!\big(0, \oms 1,2 \cms, 1, 1 \big)\! \Big) \qquad
		C^{(1)}_{\textsf{WR}}(w)  = \RL \Big(\!\big(0, \oms 1,2 \cms, 1, 1 \big)\! \Big)
	$$
	and then $C^{\infty}_{\textsf{WR}}(v) = C^{\infty}_{\textsf{WR}}(w)$.
\end{proof}

%\mr{Maybe we can state this as $C^{1,\textsf{E}}_\infty\neq C^{1,\textsf{WE}}_\infty$. I suppose this can also be stated in terms of distinguishing two distinct graphs.}

As shown next, the expressive power of \comp\ architectures that use point-wise summation/substraction
or vector concatenation is captured by this
weaker form of multi-relational $\wlone$.

\begin{theorem}[Theorem~\ref{thm:weak-comp-wl} in the main text] \label{APP:thm:weak-comp-wl}
	{\em
		Let $G = (V(G), R_1(G), \dots, R_r(G), \ell)$ be a labeled, multi-relational graph. Then:
		\begin{itemize}
			\item For all \mbox{$t\geq 0$}, choices of initial vertex features consistent with $\ell$,
			      sequence $\mathbf{W}_\comp^{(t)}$ of \comp\ parameters, and vertices
			      $v$ and $w$ in $V(G)$,
			      \begin{equation*}
				      C^{(t)}_{\textsf{WR}}(v) =  	C^{(t)}_{\textsf{WR}}(w) \ \ \Longrightarrow \ \  \hb_{v, \comp}^\tup{t} = \hb_{w, \comp}^\tup{t},
			      \end{equation*}
			      for either point-wise summation/substraction or concatenation as the composition map.

			\item For all \mbox{$t\geq 0$}, there exist initial vertex features and a sequence $\mathbf{W}_\comp^{(t)}$ of \comp\
			      parameters, such that for all vertices $v$ and $w$ in $V(G)$,
			      \begin{equation*}
				      C^{(t)}_{\textsf{WR}}(v) =  	C^{(t)}_{\textsf{WR}}(w) \ \ \Longleftrightarrow \ \   \hb_{v, \comp}^\tup{t} = \hb_{w, \comp}^\tup{t},
			      \end{equation*}
			      for either point-wise summation/substraction or concatenation as the composition map.
		\end{itemize}
	}
\end{theorem}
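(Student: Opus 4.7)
The plan is to handle the two directions separately, exploiting the fact that point-wise summation/subtraction and concatenation are linear in their arguments, so the aggregation in one \comp\ layer decouples into a ``neighbor-feature'' part and a ``relation-slot'' part. For the upper-bound direction I would proceed by induction on $t$, with the base case immediate from consistency of the initial features with $\ell$. For $\phi(\hb,\zb) = \hb \pm \zb$, the key observation is
\begin{equation*}
\sum_{i \in [r]} \sum_{w \in N_i(v)} \phi(\hb_{w,\comp}^{(t-1)}, \zb_i^{(t)})\, \mW_1^{(t)} \ = \ \Bigl(\sum_{i \in [r]} \sum_{w \in N_i(v)} \hb_{w,\comp}^{(t-1)}\Bigr)\mW_1^{(t)} \ \pm\ \Bigl(\sum_{i \in [r]} |N_i(v)|\, \zb_i^{(t)}\Bigr)\mW_1^{(t)}.
\end{equation*}
The first summand depends only on the multiset $\oms \hb_{w,\comp}^{(t-1)} \mid i \in [r], w \in N_i(v) \cms$, which by the induction hypothesis is determined by $C^{(t-1)}_{\textsf{WR}}$ and the relation-agnostic neighbor multiset stored in $C^{(t)}_{\textsf{WR}}(v)$. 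The second summand depends only on $(|N_1(v)|,\dots,|N_r(v)|)$, which is also part of $C^{(t)}_{\textsf{WR}}(v)$. Hence $C^{(t)}_{\textsf{WR}}(v) = C^{(t)}_{\textsf{WR}}(w)$ forces both summands to match at $v$ and $w$, and combining with $\hb_{v,\comp}^{(t-1)} = \hb_{w,\comp}^{(t-1)}\!\!\mW_0^{(t)}$ from the inductive hypothesis yields $\hb_{v,\comp}^{(t)} = \hb_{w,\comp}^{(t)}$. For concatenation, splitting $\mW_1^{(t)}$ into a top row-block multiplying the $\hb$-coordinates and a bottom row-block multiplying the $\zb_i^{(t)}$-coordinates reduces the inner sum to exactly the same additive form, so the same argument applies.

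For the lower-bound direction I would mirror the matrix-based construction in the proof of Theorem~\ref{APP:thm:mrgcn-lower}, replacing the per-relation neighborhood encoding by a weak one. Starting from initial features whose distinct rows are linearly independent, I would prove an analogue of Lemma~\ref{APP:lemma:key-wl-comp} whose intermediate row-encoding $\mD_v$ concatenates (i) a one-hot vector for the current colour of $v$, (ii) the colour-multiplicity vector $\sum_{i \in [r]}(\mA_i \mF \mM)_v$ aggregated across relations, and (iii) the degree tuple $(|N_1(v)|,\dots,|N_r(v)|)$. This captures precisely the information separated by one weak-WL step, so two vertices get the same $\mD$-row if and only if $C^{(t)}_{\textsf{WR}}$ cannot tell them apart. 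As in the original proof, integer-valued rows can be compressed into a single $n$-base integer column, and Lemma~\ref{APP:lemma:sign-matrix} then supplies an injective sign-matrix transformation whose output is row-independent modulo equality and $\equiv C^{(t)}_{\textsf{WR}}$. To realise $\mD$ through a single \comp\ layer with summation, I would pick the relation vectors $\zb_i^{(t)}$ to be the distinct standard basis vectors in $\bbR^r$, so that $\sum_i |N_i(v)|\, \zb_i^{(t)}$ literally writes down the relation-degree vector; $\mW_1^{(t)}$ is then chosen as a block matrix whose action on the neighbor-aggregated part implements the required $\mM \mX$ map from Lemma~\ref{APP:lemma:key-wl-comp} and whose action on the degree part embeds the degree vector injectively; $\mW_0^{(t)}$ carries over the self-feature. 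Subtraction is handled by an obvious sign flip, and concatenation by splitting $\mW_1^{(t)}$ into the two row-blocks mentioned earlier and duplicating the construction on each block.

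The main obstacle I expect is keeping the row-independent-modulo-equality invariant alive for the induction while funneling both the aggregated-multiset component and the degree-tuple component through a \emph{single} parameter matrix $\mW_1^{(t)}$, unlike \rgcn\ where per-relation matrices are available. Linear independence of the basis-vector choice of $\zb_i^{(t)}$, plus the block-diagonal structure imposed on $\mW_1^{(t)}$, is what makes this possible: it guarantees that the degree block cannot collide with the neighbor-aggregation block, so that distinct weak-WL colour classes remain linearly independent after the layer. Once this block-wise realisation is in place, the remainder of the induction is a direct transcription of the argument used for Theorem~\ref{APP:thm:mrgcn-lower}.
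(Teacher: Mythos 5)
Your proposal is correct and follows essentially the same route as the paper: the same decomposition of $\phi(\hb,\zb)\mW_1$ into a relation-agnostic neighbor sum plus a degree-weighted sum of relation vectors for the inductive upper bound, and the same block-matrix analogue of Lemma~\ref{APP:lemma:key-wl-comp} for the lower bound (with $\zb_i$ the standard basis vectors of $\bbR^r$, a $\mD$ whose disjoint column blocks encode color, aggregated neighbor multiset, and degree tuple, and Lemma~\ref{APP:lemma:sign-matrix} restoring row-independence modulo equality). The only loose end is the pure summation/subtraction case of the lower bound, where $\hb$ and $\zb_i$ must share coordinates and a single $\mW_1$ multiplies both, so the block separation you invoke needs an explicit padding step---the paper realizes it by simulating one concatenation layer with two summation layers, first mapping $\gb_v\mapsto(\gb_v,0,\dots,0)$ and then using $\zb_i'=(0,\dots,0,\zb_i)$; also, the base-$n$ compression you carry over from Theorem~\ref{APP:thm:mrgcn-lower} is unnecessary here, since the three blocks already occupy disjoint columns and Lemma~\ref{APP:lemma:sign-matrix} only needs pairwise distinct integer rows.
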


\begin{proof}
	We start with the first item.
	We focus first on the case of \comp{} with vector concatenation.
	Note that if $\hb$ in $\mathbb{R}^d$, $\zb$ in $\mathbb{R}^b$
	and $\mW$ in $\mathbb{R}^{(d+b)\times e}$, then we have
	$$(\hb, \zb) \mW = \hb \mX + \zb \mY,$$
	where $\mX$ in $\mathbb{R}^{d\times e}$ is the matrix given by the first $d$ rows of $\mW$, while $\mY$ in $\mathbb{R}^{b\times e}$ is the matrix given by the last $b$ rows of $\mW$. In particular, we can write
	\begin{align*}
		\hb_{v, \comp}^\tup{t} & = \sigma \Big(\hb_{v, \comp}^\tup{t-1}  \mW^{(t)}_0 +
		\sum_{i \in [r]} \sum_{u \in N_i(v)}  (\hb_{u, \comp}^\tup{t-1}, \zb_i^{(t)}) \mW_1^{(t)} \Big)              \\
		                       & = \sigma \Big(\hb_{v, \comp}^\tup{t-1}  \mW^{(t)}_0 +
		\sum_{i \in [r]} \sum_{u \in N_i(v)}  \hb_{u, \comp}^\tup{t-1}  \mX_1^{(t)} + \zb_i^{(t)}  \mY_1^{(t)} \Big) \\
		                       & = \sigma \Big(\hb_{v, \comp}^\tup{t-1}  \mW^{(t)}_0 +
		\sum_{i \in [r]} \sum_{u \in N_i(v)}  \hb_{u, \comp}^\tup{t-1}  \mX_1^{(t)} + 	\sum_{i \in [r]} |N_i(v)| \zb_i^{(t)}  \mY_1^{(t)} \Big).
	\end{align*}

	Fix initial vertex features $(\hb_{v}^{(0)})_{v\in V(G)}$ for $G$ consistent with $\ell$, a sequence $\mathbf{W}_\comp^{(t)}$ of parameters and two vertices $v$ and $w$ in $V(G)$.
	We proceed by induction on $t \geq 0$.
	For $t = 0$ we are done as the features $(\hb_{v}^{(0)})_{v\in V(G)}$ are consistent with $\ell$.
	Assume now that $C^{(t)}_{\textsf{WR}}(v) = C^{(t)}_{\textsf{WR}}(w)$, for $t>0$. Then, by Equation~\ref{eq:weak-wlscomp},
	we have that
	\begin{itemize}
		\item $C^{(t-1)}_{\textsf{WR}}(v) = C^{(t-1)}_{\textsf{WR}}(w)$,
		\item $\oms C^{(t-1)}_{\textsf{WR}}(u) \mid i\in[r], u \in\!N_i(v)  \cms = \oms C^{(t-1)}_{\textsf{WR}}(u) \mid i\in[r], u \in\!N_i(w)  \cms$,
		\item $|N_i(v)| = |N_i(w)|$ for each $i\in [r]$.
	\end{itemize}
	Then, by induction hypothesis, it holds that:
	\begin{itemize}
		\item $\hb_{v, \comp}^\tup{t-1} = \hb_{w, \comp}^\tup{t-1}$, and
		\item $\oms \hb_{u, \comp}^\tup{t-1} \mid i\in [r], u \in\!N_i(v)  \cms = \oms \hb_{u, \comp}^\tup{t-1} \mid i\in [r], u \in\!N_i(w)  \cms$.
	\end{itemize}
	Then we have
	\begin{itemize}
		\item $\sum_{i \in [r]} |N_i(v)| \zb_i^{(t)} = \sum_{i \in [r]} |N_i(w)| \zb_i^{(t)}$, and
		\item $\sum_{i \in [r]} \sum_{u \in N_i(v)}  \hb_{u, \comp}^\tup{t-1} =
			      \sum_{i \in [r]} \sum_{u \in N_i(w)}  \hb_{u, \comp}^\tup{t-1}$.
	\end{itemize}
	We conclude that $\hb_{v, \comp}^\tup{t} = \hb_{w, \comp}^\tup{t}$.

	\medskip

	Note that the update rule for the case of  point-wise summation/substraction is the same except that now $\mX_1^{(t)} =
		\mY_1^{(t)}$. Hence exactly the same argument applies.

	We now turn to the second item.
	We follow the same strategy and terminology as in the proof of Theorem \ref{APP:thm:mrgcn-lower}. In this case, given a vertex feature matrix $\mF$ in $\mathbb{R}^{n\times d}$, we denote by $\hat{\Gamma}_G(\mF)$ the application of one step of the weak \rwlone{}. Hence, $\hat{\Gamma}_G(\mF)$ is a coloring $C \colon V(G)\to \mathbb{N}$ such that for each $v$ in $V(G)$,
	$$ C(v) = \RL \Big(\!\big(C_{\mF}(v), \oms C_{\mF}(u) \mid i \in [r], u \in\!N_i(v) \cms, |N_1(v)|, \dots, |N_r(v)| \big)\! \Big), $$
	where $C_{\mF}$ is the coloring corresponding to the matrix $\mF$.
	In this case, the update rule for \comp{} with vector concatenation can be written as follows:
	$$\mF' = \sigma(\mF \mW_0  + \sum_{i\in[r]} \mA_i \mF \mX_1 + \sum_{i\in[r]} \mA_i \mZ_i \mY_1 + b\mJ),$$
	where $\mW_0$ in $\mathbb{R}^{d\times e}$ and $\mW_1=\begin{bmatrix} \mX_1\\ \mY_1\end{bmatrix}\in \mathbb{R}^{(d+b)\times e}$, for $\mX_1\in \mathbb{R}^{d\times e}$, $\mY_1\in \mathbb{R}^{b\times e}$, are the parameter matrices, $\mZ_i\in \mathbb{R}^{n\times b}$ is the matrix where each row is a copy of the edge feature $\zb_i\in \mathbb{R}^b$ associated with the relation $R_i(G)$, $\mA_i$ is the adjacency matrix for the relation $R_i(G)$, and $\mJ$ is the all-one matrix of appropriate dimensions. We have the following:

	\begin{lemma}
		\label{APP:lemma:key-weakwl}
		{\em
			Let $\mF$ in $\mathbb{R}^{n\times d}$ be row-independent modulo equality.
			Then there are matrices $\mW_0$ in $\mathbb{R}^{d\times e}$,  $\mX_1$ in $\mathbb{R}^{d\times e}$, $\mY_1$ in $\mathbb{R}^{b\times e}$ and vectors $\zb_i$ in $\mathbb{R}^b$, for $i$ in $[r]$ such that the matrix
			$$ \mF' = \sign(\mF \mW_0  + \sum_{i\in[r]} \mA_i \mF \mX_1 + \sum_{i\in[r]} \mA_i \mZ_i \mY_1 - \mJ)$$
			is row-independent modulo equality and $\mF' \equiv \hat{\Gamma}_G(\mF)$.
		}
	\end{lemma}
	\begin{proof}
		Let $q$ be the number of distinct rows in $\mF$ and let $\widetilde{\mF}$ in $\mathbb{R}^{q\times d}$ be the matrix whose rows are the distinct rows of $\mF$ in an arbitrary but fixed order. We denote by $Q_1,\dots,Q_q$ the associated \emph{color classes}, that is, a vertex $v$ in $[n]$ is in $Q_j$ if and only if $\mF_v = \widetilde{\mF}_j$. By construction, the rows of $\widetilde{\mF}$
		are linearly independent, and hence there is a matrix $\mM$ in $\mathbb{R}^{d\times q}$ such that $\widetilde{\mF}\mM$ in $\mathbb{R}^{q\times q}$ is the identity matrix. It follows that the matrix $\mF\mM$ in $ \mathbb{R}^{n\times q}$ has entries:
		$$(\mF\mM)_{vj} =
			\begin{cases}
				1 & \text{if $v\in Q_j$} \\
				0 & \text{otherwise}.
			\end{cases}$$
		Let $\mM_0,\mM_1$ in $ \mathbb{N}^{d\times (2q+r)}$, $\mM_2$ in $ \mathbb{N}^{r\times (2q+r)}$ be the block matrices
		$\mM_0 =[\mM \, \mO\, \mO']$, $\mM_1 =[\mO\, \mM\, \mO']$ and $\mM_2 = [\mO''\, \mO''\, \mI]$,
		where $\mO$ in $\mathbb{R}^{d\times q}$, $\mO'$ in $\mathbb{R}^{d\times r}$, $\mO''$ in $\mathbb{R}^{r\times q}$ are all-$0$ matrices, and $\mI$ in $\mathbb{R}^{r\times r}$ is the identity matrix. For each $i$ in $ [r]$, the required $\zb_i$ in $\mathbb{R}^r$ is the vector with all entries $0$ except for the $i$-th position which is $1$. Let $\mZ_i$ be the corresponding matrix whose rows are copies of $\zb_i$. We define $\mD$ in $ \mathbb{N}^{n\times (2q+r)}$ as:
		\begin{align*}
			\mD & = \mF\mM_0 + \sum_{i\in[r]} \mA_i \mF \mM_1 + \sum_{i\in[r]} \mA_i \mZ_i \mM_2                                                              \\
			    & = \begin{bmatrix}\mF\mM & \sum_{i\in[r]} \mA_i \mF \mM & \sum_{i\in[r]} \mA_i \mZ_i\end{bmatrix}.
		\end{align*}
		The $v$-th row of $\mF\mM$ encodes the color of $v$, the
		$v$-th row of $\sum_{i\in[r]} \mA_i \mF \mM$ encodes the multiset of the colors of $u$, when we range over $i$ in $[r]$ and $u$ in $ N_i(v)$, and the $v$-th row
		of $\sum_{i\in[r]} \mA_i \mZ_i$ contains the sizes of $N_i(v)$ for all $i$ in $[r]$. Hence,
		$$\hat{\Gamma}_G(\mF) \equiv \mD$$
		if we view $\mD$ as a coloring of $G$.

		Let $p$ be the number of distinct rows in $\mD$ and let $\widetilde{\mD}$ in $\mathbb{N}^{p\times (2q+r)}$ be the matrix whose rows are the distinct rows of $\mD$ in an arbitrary but fixed order. We apply  Lemma~\ref{APP:lemma:sign-matrix} to $\widetilde{\mD}$ and obtain a matrix $\mX$ in $ \mathbb{R}^{(2q+r)\times p}$ such that $\sign(\widetilde{\mD}\mX - \mJ)$ in $ \mathbb{R}^{p\times p}$ is non-singular.
		In particular, $\sign(\mD\mX - \mJ)$ is row-independent modulo equality and
		$\sign(\mD\mX - \mJ)\equiv \mD \equiv \hat{\Gamma}_{G}(\mF)$.
		Let $\mW_0 = \mM_0 \mX$ in $ \mathbb{R}^{d\times p}$, $\mX_1 = \mM_1 \mX$ in $ \mathbb{R}^{d\times p}$, and $\mY_1 = \mM_2 \mX$ in $ \mathbb{R}^{r\times p}$. We have
		\begin{align*}
			\mF' & =  \sign(\mF \mW_0  + \sum_{i\in[r]} \mA_i \mF \mX_1 + \sum_{i\in[r]} \mA_i \mZ_i \mY_1 - \mJ)             \\
			     & =  \sign(\mF \mM_0 \mX  + \sum_{i\in[r]} \mA_i \mF \mM_1 \mX + \sum_{i\in[r]} \mA_i \mZ_i \mM_2 \mX - \mJ) \\
			     & =  \sign(\mD\mX - \mJ).
		\end{align*}
		Hence $\mF'$ is row-independent modulo equality and $\mF'=  \sign(\mD\mX - \mJ) \equiv \hat{\Gamma}_{G}(\mF)$.
	\end{proof}

	The theorem follows directly by iteratively applying Lemma~\ref{APP:lemma:key-weakwl} starting with vertex features $(\hb^{(0)}_
		v)_{v\in V(G)}$ consistent with $\ell$ such that different features are linearly independent.

	\medskip

	The case of \comp{} with point-wise summation/substraction follows from the fact that this architecture can simulate \comp{} with vector concatenation. Indeed, we can simulate one layer of  \comp{} with vector concatenation
	using two layers of \comp{} with point-wise summation/substraction.
	Take a layer of the form
	$$\hb_{v} = \sigma \Big({\gb}_{v} \mW_0 +
		\sum_{i \in [r]} \sum_{w \in N_i(v)}  ({\gb}_{w}, \zb_i)  \mW_1 + {\bbb} \Big),$$
	where ${\gb}_{u}$ in $\mathbb{R}^d$, for $u$ in $ V(G)$, $\mW_0\in \mathbb{R}^{d\times e}$,
	$\mW_1\in \mathbb{R}^{(d+b)\times e}$ and $\zb_i\in\mathbb{R}^b$.
	We first use a layer
	$$\widetilde{\hb}_v =  {\gb}_{v} \mB,$$
	where $\mB\in \mathbb{R}^{d\times (d+b)}$ is the $d\times d$ identity matrix with $b$ additional all-$0$ columns. So $\widetilde{\hb}_v = ({\gb}_{v},0,\dots, 0)\in\mathbb{R}^{d+b}$. Then we apply a layer
	$$\hb_{v} = \sigma \Big(\widetilde{\hb}_{v} \mW_0' +
		\sum_{i \in [r]} \sum_{w \in N_i(v)}  (\widetilde{\hb}_{v} + \zb_i') \mW_1 + {\bbb}\Big),$$

	where $\mW_0'\in \mathbb{R}^{(d+b)\times e}$ is the matrix $\mW_0\in \mathbb{R}^{d\times e}$ with $b$ additional all-$0$ rows, while $\zb_i' = (0,\dots, 0, \zb_i)\in \mathbb{R}^{d+b}$.
\end{proof}

Together with Proposition \ref{APP:prop:weak-wl-sep} and Theorem \ref{APP:thm:mrgcn-lower}, this result states that \comp\
architectures based on vector summation or concatenation
are provably weaker in terms of their capacity to distinguish vertices in graphs than the ones that use vector scaling.

\subsection{A comparison between \rgcn\ and \comp architectures} \label{APP:sec:comparison}
We proved that \rgcn\ and \comp with point-wise multiplication have the same power discriminating vertices in (multi-relational) graphs. Here we show that these architectures actually define the same functions on multi-relational graphs.

\begin{theorem}
	\label{APP:thm:rgcn-vs-comp}
	{\em
		The following statements hold:
		\begin{itemize}
			\item
			      For any sequence of parameters $\mathbf{W}_\comp^{(t)}$ for \comp{} with point-wise multiplication, there is a sequence of parameters $\mathbf{W}_{\rgcn\ }^{(t)}$ for \rgcn\ such that
			      for each labeled, multi-relational graph $G = (V(G), R_1(G), \dots, R_r(G), \ell)$ and choice of initial vertex features, we have $\hb_{v,\rgcn}^\tup{t} = \hb_{v,\comp}^\tup{t}$, for each $v$ in $V(G)$.
			\item
			      Conversely,
			      for any sequence of parameters $\mathbf{W}_\rgcn^{(t)}$ for \rgcn, there exists a sequence of parameters $\mathbf{W}_{\rgcn\ }^{(2t)}$ for  \comp with point-wise multiplication such that
			      for each labeled, multi-relational graph $G = (V(G), R_1(G), \dots, R_r(G), \ell)$ and choice of initial vertex features, we have  $\hb_{v,\comp}^\tup{2t} = \hb_{v,\rgcn}^\tup{t}$, for each $v$ in $V(G)$.
		\end{itemize}
	}
\end{theorem}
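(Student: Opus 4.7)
For the first direction, the plan is a straight layer-by-layer simulation of \comp\ by \rgcn. The key observation is that, when the composition map is point-wise multiplication, $(\hb_w * \zb_i)\, \mW_1 = \hb_w \bigl(\mathrm{diag}(\zb_i)\, \mW_1\bigr)$. Hence, by setting $\mW_i^{(t)} \coloneqq \mathrm{diag}(\zb_i^{(t)})\, \mW_1^{(t)}$ in the \rgcn\ update (for each $i$ in $[r]$) and copying the $\mW_0^{(t)}$ matrices verbatim, the two update rules become pointwise identical. A straightforward induction on $t$ then gives $\hb_{v, \rgcn}^\tup{t} = \hb_{v, \comp}^\tup{t}$ for every vertex $v$ and every choice of initial features.

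For the second direction, such a one-to-one correspondence cannot hold in general: a single \comp\ layer with point-wise multiplication can only realize relation-specific transformations of the form $\mathrm{diag}(\zb_i)\, \mW_1$, all sharing the factor $\mW_1$, whereas the \rgcn\ matrices $\mW_i^{(t)}$ are unconstrained. The plan is therefore to simulate each single \rgcn\ layer by a \emph{pair} of \comp\ layers. The first layer of the pair \emph{lifts} $\hb_v \in \bbR^d$ into a higher-dimensional representation encoding $r$ parallel copies of the feature, by picking $\mW_0$ as a horizontal stacking of identity blocks and taking $\mW_1 = \mathbf{0}$ (so nothing is aggregated). The second layer then uses $\zb_i$ as the indicator vector of the $i$-th copy, so that $\hb_w^\tup{2t-1} * \zb_i$ isolates that copy; its $\mW_1$ is a block matrix whose $i$-th block equals $\mW_i^{(t)}$ from \rgcn, which forces $(\hb_w^\tup{2t-1} * \zb_i)\, \mW_1 = \hb_w^\tup{2t-2}\, \mW_i^{(t)}$, while its $\mW_0$ recovers the \rgcn\ self-term $\hb_v\, \mW_0^{(t)}$ from the first copy of the lifted representation.

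The main obstacle is the non-linearity $\sigma$ applied at the end of the lifting layer, which would corrupt a naive identity-based expansion. The fix, tailored to the \relu\ activation used throughout the paper, is to encode each copy of $\hb_v$ as the pair of non-negative parts $(\hb_v^+, \hb_v^-)$ where $\hb_v = \hb_v^+ - \hb_v^-$, since these are fixed points of \relu\ and the expansion thus survives the non-linearity. Linearity in $\hb_v$ is preserved in the second layer by replacing every block $\mW_i^{(t)}$ by the vertical stacking of $\mW_i^{(t)}$ on top of $-\mW_i^{(t)}$, so that $(\hb_v^+, \hb_v^-)$ multiplied by this block returns $(\hb_v^+ - \hb_v^-)\mW_i^{(t)} = \hb_v \mW_i^{(t)}$; an analogous stacking handles the self-term $\mW_0^{(t)}$. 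The intermediate dimension then becomes $2rd$, and iterating the paired construction while carrying the $(\cdot^+, \cdot^-)$ encoding through the layers yields $\hb_{v, \comp}^\tup{2t} = \hb_{v, \rgcn}^\tup{t}$ for every vertex $v$ in $V(G)$.
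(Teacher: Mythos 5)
Your proposal is correct and follows essentially the same route as the paper: the first direction absorbs $\mathrm{diag}(\zb_i^{(t)})$ into relation-specific matrices $\mW_i^{(t)}$, and the second simulates each \rgcn{} layer by a lifting layer that produces $r$ copies of the feature followed by a selection layer with indicator edge features and a block-stacked $\mW_1$. Your $(\hb^+,\hb^-)$ encoding to push the lifted representation through the non-linearity is a refinement the paper omits (its lifting layer is written without an activation), at the cost of doubling the intermediate dimension to $2rd$ and of being specific to \relu.
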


\begin{proof}
	The first item follows since we can simulate one layer of \comp{} with point-wise multiplication using one layer of \rgcn{}. Indeed, take a layer of the form
	$$\hb_{v} = \sigma \Big({\gb}_{v} \mW_0 +
		\sum_{i \in [r]} \sum_{w \in N_i(v)}  ({\gb}_{w} * \zb_i)   \mW_1 \Big),$$
	where ${\gb}_{u},  \zb_i\in \mathbb{R}^d$.
	This is equivalent to
	$$\hb_{v} = \sigma \Big({\gb}_{v} \mW_0 +
		\sum_{i \in [r]} \sum_{w \in N_i(v)}  {\gb}_{w}   \mW_i \Big),$$
	where $\mW_i = \mLambda_i \mW_1$, where $\mLambda_i\in \mathbb{R}^{d\times d}$ is the diagonal matrix whose diagonal is precisely $\zb_i$.

	For the second item, we can simulate one layer of \rgcn{} with two layers of \comp{} with point-wise multiplication. Take a layer
	$$\hb_{v} = \sigma \Big({\gb}_{v} \mW_0 +
		\sum_{i \in [r]} \sum_{w \in N_i(v)}  {\gb}_{w}   \mW_i \Big),$$
	where ${\gb}_{u}\in\mathbb{R}^d$, $\mW_0\in \mathbb{R}^{d\times e}$, $\mW_i \in  \mathbb{R}^{d\times e}$.
	We first apply a layer
	$$\widetilde{\hb}_{v} = {\gb}_{v} \mB$$
	where $\mB\in\mathbb{R}^{d\times dr}$ is the concatenation of $r$ copies of the $d\times d$ identity matrix. In particular, $\widetilde{\hb}_{v}\in \mathbb{R}^{dr}$ is the vector ${\gb}_{v}$ repeated $r$ times. Then we use the layer
	$$\hb_{v} = \sigma \Big(\widetilde{\hb}_{v}  \mW_0' +
		\sum_{i \in [r]} \sum_{w \in N_i(v)}  (\widetilde{\hb}_{w} * \zb_i)  \mW_1' \Big),$$
	where $\mW_0'\in\mathbb{R}^{dr\times e}$ is the matrix $\mW_0\in \mathbb{R}^{d\times e}$ with $d(r-1)$ additional all-$0$ rows,  $\mW_1'\in\mathbb{R}^{dr\times e}$ is the (vertical) concatenation of the matrices $\mW_i$ for $i\in[r]$, and $\zb_i\in \mathbb{R}^{dr}$ is the vector with all entries $0$ except for the $d$ positions $(i-1)d+1,\dots,(i-1)d+d$ which contain the value $1$.
\end{proof}

\begin{remark}
	A similar result holds for the case of \comp{} with point-wise summation/subtraction and \comp{} with vector concatenation. The simulations between these two architectures are implicitly given in the proof of Theorem \ref{APP:thm:weak-comp-wl}.
\end{remark}
\medskip
\begin{remark}
	Note that, as a consequence of Theorem \ref{APP:thm:mrgcn-lower}, Proposition \ref{APP:prop:weak-wl-sep} and the first item of Theorem \ref{APP:thm:weak-comp-wl}, there are functions defined by \rgcn{} or \comp{} with point-wise multiplication that cannot be expressed by \comp{} with point-wise summation/subtraction or vector concatenation. This even holds in the non-uniform sense, that is, if we focus on a single labeled multi-relational graph (the one from Proposition \ref{APP:prop:weak-wl-sep}).
\end{remark}

\section{Missing proofs in Section~\ref{sec:more_exp}}\label{APP:ho}

\begin{proposition}[Proposition~\ref{upper} in the main text]\label{APP:upper}
	{\em For all $r \geq 1$, there exists a pair of non-isomorphic graphs $G = (V(G), R_1(G), \dots, R_r(G), \ell)$ and $H= (V(H), R_1(H), \dots, R_r(H), \ell)$ that cannot be distinguished by \rgcn{} or \comp.}
\end{proposition}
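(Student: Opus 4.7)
The plan is to reduce the problem to the classical case by exhibiting two multi-relational graphs whose non-distinguishability follows from the well-known failure of $\wlone$ on a standard example. By Theorem~\ref{thm:mrgcn-upper}, it suffices to find a pair of non-isomorphic multi-relational graphs $G$ and $H$ such that $C^{\infty}_{\textsf{R}}(v) = C^{\infty}_{\textsf{R}}(w)$ for every pair of vertices $v \in V(G)$ and $w \in V(H)$; the upper bound then transfers to \rgcn{} and \comp{}.

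Concretely, I would take the canonical $\wlone$-indistinguishable pair, namely the disjoint union of two triangles $G_0 = 2C_3$ and the 6-cycle $H_0 = C_6$, each with a uniform vertex label, and lift them to $r$-relational graphs by putting every edge into the first relation $R_1$ and leaving $R_2,\ldots,R_r$ empty. The resulting graphs $G = (V(G_0), E(G_0), \emptyset, \dots, \emptyset, \ell)$ and $H = (V(H_0), E(H_0), \emptyset, \dots, \emptyset, \ell)$ are clearly non-isomorphic because their underlying simple graphs are non-isomorphic (e.g.\ $G_0$ is disconnected while $H_0$ is connected).

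Next I would verify that $\rwlone$ fails to distinguish $G$ from $H$. Since $N_i(v) = \emptyset$ for all $i \geq 2$ and all vertices $v$, the update rule of Equation~\ref{wlscomp} collapses to
\begin{equation*}
C^{(t)}_{\textsf{R}}(v) \;=\; \RL\Big(\!\big(C^{(t-1)}_{\textsf{R}}(v),\oms (C^{(t-1)}_{\textsf{R}}(u),1) \mid u \in N_1(v) \cms\big)\!\Big),
\end{equation*}
which is just a relabelling of the standard $\wlone$ refinement on the underlying graph. A straightforward induction on $t$ shows that every vertex in both $G$ and $H$ receives the same color at every iteration, since this already holds for the $\wlone$ run on $G_0$ and $H_0$ (both graphs are $2$-regular with a single initial color, so the coloring remains constant). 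Therefore $C^{\infty}_{\textsf{R}}$ assigns the same color to every vertex of $G$ and every vertex of $H$.

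Finally, invoking Theorem~\ref{thm:mrgcn-upper} for both \rgcn{} and \comp{}, any choice of initial features consistent with $\ell$, parameters, and (for \comp{}) composition function produces identical vertex embeddings across $G$ and $H$, so no readout function can separate them. The only mild subtlety, and the one place I would be a bit careful, is the case $r = 1$: there the construction degenerates to $G_0$ and $H_0$ themselves, but the same argument goes through since $\rwlone$ with a single relation coincides with $\wlone$ on the unlabeled graph, which by construction fails on $C_6$ vs.\ $2C_3$. No genuine obstacle arises; the main point is simply to observe that the classical $\wlone$ lower bound propagates to the multi-relational setting via empty auxiliary relations.
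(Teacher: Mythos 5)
Your proposal is correct and follows essentially the same route as the paper: both take the canonical $\wlone$-indistinguishable pair $C_6$ versus $2C_3$, lift it to a multi-relational pair (the paper copies the edge set into every relation $R_i$, you place it in $R_1$ and leave the rest empty --- an immaterial difference), observe that \rwlone{} stabilizes with a single uniform color on both graphs, and conclude via Theorem~\ref{thm:mrgcn-upper}. Your explicit treatment of the $r=1$ case is a minor bonus over the paper's write-up, which phrases the construction for $r \geq 2$.
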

\begin{proof}
	We explicitly construct the graphs $G$ and $H$ for $r \geq 2$. To do so, we take a pair of graphs $A$ and $B$, non-distinguishable by \wlone, and transform them into the multi-relational graphs $G$ and $H$. Let $A$ be a cycle on six vertices and $B$ be the disjoint union of two cycles on three vertices. Clearly, the \wlone{} cannot distinguish the two graphs. Now let $V(G) \coloneqq V(A)$ and $V(H) \coloneqq V(B)$. Further, let $R_i(G)  \coloneqq E(A)$ and $R_i(H) \coloneqq E(B)$ for $i$ in $[r]$. Observe that the multi-relational \wlone{} will reach the stable coloring after one iteration and it will not distinguish the multi-relational graphs $G$ and $H$. Hence, by Theorem~\ref{APP:thm:mrgcn-upper}, the result follows.
\end{proof}
\medskip
\begin{proposition}[Theorem~\ref{thm:ho-upper} in the main text]
	%\label{thm:mrgcn-upper}
	{\em	Let $G = (V(G), R_1(G), \dots, R_r(G), \ell)$ be a labeled, multi-relational graph. Then for all $t \geq 0$, $r > 0$, $k \geq 1$, and all choices of $\UPD^\tup{t}$, $\AGG^\tup{t}$, and all $\vec{v}$ and $\vec{w}$ in $V(G)$,
		\begin{equation*}
			C_{k,r}^{(t)}(\vec{v}) = C_{k,r}^{(t)}(\vec{w}) \ \Longrightarrow \ \hb_{\vec{v},k}^\tup{t} = \hb_{\vec{w},k}^\tup{t}.
		\end{equation*}}
\end{proposition}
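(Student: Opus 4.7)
The plan is to prove the implication by induction on the number of iterations $t \geq 0$, mirroring the strategy used for Theorem~\ref{APP:thm:mrgcn-upper} but adapted to the $k$-tuple setting with multi-relational neighborhoods. Fix arbitrary choices of $\UPD^{(t)}$, $\AGG^{(t)}$, edge features $\zb_i^{(t)}$, composition map $\phi$, and initial $k$-tuple features $(\hb_{\vec{v},k}^{(0)})_{\vec{v} \in V(G)^k}$ that are consistent with $\mathsf{atp}_r$. The base case $t=0$ is immediate: by the consistency assumption on the initial features, $C_{k,r}^{(0)}(\vec{v}) = \mathsf{atp}_r(\vec{v}) = \mathsf{atp}_r(\vec{w}) = C_{k,r}^{(0)}(\vec{w})$ forces $\hb_{\vec{v},k}^{(0)} = \hb_{\vec{w},k}^{(0)}$.

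For the inductive step, assume the implication holds at iteration $t-1$, and suppose $C_{k,r}^{(t)}(\vec{v}) = C_{k,r}^{(t)}(\vec{w})$. Unfolding the definition of the \localmkwl{} coloring gives two facts: first, $C_{k,r}^{(t-1)}(\vec{v}) = C_{k,r}^{(t-1)}(\vec{w})$, and second, $M_r^{(t-1)}(\vec{v}) = M_r^{(t-1)}(\vec{w})$, i.e.\ for every $j \in [k]$,
\begin{equation*}
\oms (C_{k,r}^{(t-1)}(\theta_j(\vec{v},x)), i) \mid x \in N_i(v_j), i \in [r] \cms \;=\; \oms (C_{k,r}^{(t-1)}(\theta_j(\vec{w},y)), i) \mid y \in N_i(w_j), i \in [r] \cms.
\end{equation*}
By the induction hypothesis applied to each tuple inside these multisets, equal colors yield equal $(t-1)$-features, so the multisets of pairs $\bigl(\hb_{\theta_j(\vec{v},x),k}^{(t-1)}, i\bigr)$ and $\bigl(\hb_{\theta_j(\vec{w},y),k}^{(t-1)}, i\bigr)$ coincide for each $j$. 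Applying the (deterministic) composition $\phi(\cdot, \zb_i^{(t)})$ pairwise to equal pair-multisets preserves equality, so the multisets
\begin{equation*}
\oms \phi(\hb_{\theta_j(\vec{v},x),k}^{(t-1)}, \zb_i^{(t)}) \mid x \in N_i(v_j), i \in [r] \cms
\end{equation*}
agree for $\vec{v}$ and $\vec{w}$ for every $j \in [k]$.

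Feeding these equal $k$-tuples of multisets into the same $\AGG^{(t)}$ yields identical aggregated values, and combining with the induction hypothesis $\hb_{\vec{v},k}^{(t-1)} = \hb_{\vec{w},k}^{(t-1)}$, applying the same $\UPD^{(t)}$ gives $\hb_{\vec{v},k}^{(t)} = \hb_{\vec{w},k}^{(t)}$, completing the induction.

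The step requiring care is the bookkeeping of the per-coordinate structure: one must verify that matching the \emph{aggregated} multiset over all $i$ and $j$ is not sufficient, but that the coloring equality gives per-$j$ (per coordinate) multiset equality, which is precisely what the aggregation function in Equation~\ref{kgnn} consumes. Once this alignment between the tuple-indexed multisets produced by $M_r^{(t-1)}$ and the tuple-indexed arguments of $\AGG^{(t)}$ is made explicit, the rest of the argument is a direct transfer of equal inputs through deterministic functions, with the relation labels carried along inside the pairs to ensure that the correct $\zb_i^{(t)}$ is matched to the correct neighbor in both $\vec{v}$ and $\vec{w}$.
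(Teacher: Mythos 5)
Your proof is correct and follows the same route the paper intends: the paper only gives a one-line sketch deferring to the analogous induction in Morris et al.\ (Proposition 3), and your argument is precisely that induction spelled out, including the key bookkeeping point that the per-coordinate multiset equality from $M_r^{(t-1)}$ aligns with the tuple-structured arguments of $\AGG^{(t)}$ and that the relation index carried in each pair ensures the matching $\zb_i^{(t)}$ is used on both sides. Nothing is missing.
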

\begin{proof}[Proof sketch]
	The proof is analogous to the proof of~\citet[Proposition 3]{Mor+2019}.
\end{proof}
\medskip
\begin{proposition}[Theorem~\ref{thm:ho-lower} in the main text]
	\label{APP:thm:ho-lower}
	{\em	Let $G = (V(G), R_1(G), \dots, R_r(G), \ell)$ be a labeled, multi-relational graph.  Then for all $t \geq 0$ and $k \geq 1$, there exists $\UPD^\tup{t}$, $\AGG^\tup{t}$,  such that for all $\vec{v}$ and $\vec{w}$ in $V(G)$,
		\begin{equation*}
			C_{k,r}^{(t)}(\vec{v}) = C_{k,r}^{(t)}(\vec{w}) \ \Longleftrightarrow \ \hb_{\vec{v},k}^\tup{t} = \hb_{\vec{w},k}^\tup{t}.
		\end{equation*}
	}
\end{proposition}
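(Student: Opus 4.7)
The plan is to mirror the strategy of Theorem~\ref{APP:thm:mrgcn-lower}, adapted to the higher-order tuple setting, by induction on $t \geq 0$. For the base case, I would initialize the $k$-tuple features $\hb_{\vec{v},k}^{(0)}$ as a one-hot encoding of the multi-relational atomic type $\mathsf{atp}_r(\vec v)$; by the definition $C_{k,r}^{(0)}(\vec v)=\mathsf{atp}_r(\vec v)$ these initial features are linearly independent and satisfy the desired equivalence trivially.

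For the inductive step, I would assume the equivalence holds after iteration $t$ and design $\UPD^{(t+1)}$ and $\AGG^{(t+1)}$ (together with $\phi$ and the learned edge features $\zb_i^{(t+1)}$) so that the equivalence is preserved. The main idea is injectivity: at layer $t+1$, $\AGG$ receives a $k$-tuple of multisets of pairs of the form $(\hb_{\theta_j(\vec v,w),k}^{(t)}, i)$, which by the induction hypothesis is in bijection with the tuple of multisets defining $M_r^{(t)}(\vec v)$ in Equation~\ref{eqnmidd_ext_m}. First, choose $\phi$ and the edge features $\zb_i^{(t+1)}$ so that $\phi(\hb,\zb_i^{(t+1)})$ injectively encodes the pair $(\hb,i)$; concretely, take $\zb_i^{(t+1)}$ to be a distinct one-hot vector for each $i\in[r]$ and let $\phi$ be concatenation (or any other composition map capable of separating relations, similarly to what was done in the proof of Theorem~\ref{APP:thm:mrgcn-lower}). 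Then construct $\AGG^{(t+1)}$ as a multiset-to-vector encoder that is injective on the finitely many multisets that occur across $V(G)^k$---this can be achieved via the Deep Sets/GIN-style sum-of-MLPs argument, or by a direct linear-algebraic encoding in the spirit of Lemma~\ref{APP:lemma:key-wl-comp} using an $n$-base representation of the multiset-count vectors. Finally, let $\UPD^{(t+1)}$ be any injective function on pairs (e.g.\ concatenation followed by an MLP realising the $\sign(\cdot\mX-\mJ)$ trick of Lemma~\ref{APP:lemma:sign-matrix}) applied to $(\hb_{\vec v,k}^{(t)}, \AGG^{(t+1)}(\cdots))$; this exactly mimics the refinement rule $C_{k,r}^{(t+1)}(\vec v)=(C_{k,r}^{(t)}(\vec v), M_r^{(t)}(\vec v))$.

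Combining these choices, the updated feature $\hb_{\vec v, k}^{(t+1)}$ is a function of $(C_{k,r}^{(t)}(\vec v), M_r^{(t)}(\vec v))$ that is injective on the image of this map when restricted to the tuples in $V(G)^k$, which is precisely $C_{k,r}^{(t+1)}$. Hence $\hb_{\vec v, k}^{(t+1)}=\hb_{\vec w,k}^{(t+1)} \iff C_{k,r}^{(t+1)}(\vec v)=C_{k,r}^{(t+1)}(\vec w)$, completing the induction.

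I expect the main obstacle to be the injectivity of $\AGG^{(t+1)}$ on $k$-tuples of multisets with relation-tagged entries. The cleanest route is to absorb the relation index into the per-element representation via $\phi$ and the one-hot $\zb_i^{(t+1)}$ so that the aggregation reduces to injectively encoding an ordinary multiset of vectors, and then to lift this to a $k$-tuple of multisets by applying the encoder componentwise and concatenating, analogously to the aggregation used in the $k$-WL-to-GNN translation of~\citet{Mor+2019}. Because everything is quantified over a single fixed graph $G$, the relevant domains are finite, so existence of such injective functions is immediate from the finite-domain variants of the lemmas already invoked in Theorem~\ref{APP:thm:mrgcn-lower}.
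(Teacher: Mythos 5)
Your proposal is correct and follows essentially the same route as the paper: the paper likewise splits $\AGG^{(t)}$ into an outer function applied to $k$ per-component inner aggregators, realizes each inner aggregator as a \comp-style sum, and imports injectivity from Theorem~\ref{APP:thm:mrgcn-lower} (i.e., Lemma~\ref{APP:lemma:key-wl-comp} and the $\sign(\cdot\,\mX-\mJ)$ device of Lemma~\ref{APP:lemma:sign-matrix}), with finiteness of $V(G)^k$ doing the rest. One caution on your concrete choice of $\phi$: by the paper's own Theorem~\ref{APP:thm:weak-comp-wl}, concatenation followed by a plain linear sum loses the pairing between neighbor colors and relation indices, so your choice is sound only because $\AGG^{(t)}$ in the \krn{k} is a general function of the multiset and you insert a per-element injective encoder (Deep Sets/GIN style) before summing; if one instead realizes the inner aggregation exactly as the \comp{} sum, one must use vector scaling with the $n$-base weights $\alpha_i=n^i$ as in Lemma~\ref{APP:lemma:key-wl-comp}, which is what the paper does.
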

\begin{proof}
	To prove the results, we need to ensure that there exists instantiations of $\UPD^\tup{t}$ and  $\AGG^\tup{t}$ that are injective.
	To show the existence of injective instantiations of $\AGG^\tup{t}$ for $t > 0$, we write $\AGG^\tup{t}$ as
	\begin{align*}
		\AGG_\text{out}^\tup{t} \Bigl( \AGG_{\text{in},1}^\tup{t} \bigl( & \oms \phi(\hb_{\theta_1(\vec{v},w),k}^\tup{t-1}, \zb_i^{(t)})
		\mid w \in N_i(v_1)  \text{ and } i \in [r]   \cms \bigr), \dots,                                                                \\ \AGG_{\text{in},k}^\tup{t} \bigl(&\oms \phi(\hb_{\theta_k(\vec{v},w),k}^\tup{t-1}, \zb_i^{(t)})
		\mid w \in N_i(v_k)  \text{ and } i \in [r] \cms \bigr) \Bigr),
	\end{align*}
	where 	$\AGG_\text{out}^\tup{t}$ and $\AGG_{\text{in},j}^\tup{t}$ for $j$ in $[k]$ may be a differentiable parameterized functions, e.g., neural networks. Observe that we can represent $\AGG_{\text{in},j}^\tup{t}$ as
	\begin{equation*}
		\sum_{i \in [r]} \sum_{w \in N_i(v_j)}  \phi \big(\hb_{\theta_j(\vec{v},w)}^\tup{t-1}, \zb_i^{(t)} \big) \cdot \mW_{1}^{(t)},
	\end{equation*}
	for $j$ in $[k]$, resembling the aggregation of Equation~\ref{eq:comp}, by Theorem~\ref{APP:thm:mrgcn-lower}, the injectiveness of the above aggregation function follows. A similar argument can be made for $\AGG_\text{out}^\tup{t}$ and $\UPD^\tup{t}$, implying the result.
\end{proof}

Moreover, the following result implies that increasing $k$ leads to a strict boost in terms of expressivity of the \localmkwl{} and \krns{k} architectures in terms of distinguishing non-isomorphic multi-relational graphs.
\medskip
\begin{proposition}\label{APP:thm:hier}
	{\em For $k \geq 2 $ and $r \geq 1$, there exists a pair of non-isomorphic multi-relational graphs  $G_r = (V(G_r), R_1(G_r), \dots, R_r(G_r), \ell)$ and $H_r = (V(H_r), R_1(H_r), \dots, R_r(H_r), \ell)$ that can be distinguished by the  $(k+1)$\text{-}\textsf{MLWL} but not by the  $k$\text{-}\textsf{MLWL}.}
\end{proposition}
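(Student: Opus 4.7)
The plan is to reduce the multi-relational hierarchy separation to the already-known hierarchy separation for the (single-relation) local $k$-dimensional Weisfeiler--Leman algorithm established by~\citet{Morris2020b}. That result guarantees, for every $k\geq 2$, a pair of non-isomorphic labeled graphs $A_k$ and $B_k$ distinguished by the $(k+1)$-dimensional local WL but not by the $k$-dimensional local WL. I would take these two graphs as the starting point and lift them to the multi-relational setting in the simplest possible way.

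Concretely, given $r\geq 1$, define $G_r$ by setting $V(G_r)\coloneqq V(A_k)$, $R_1(G_r)\coloneqq E(A_k)$, and $R_i(G_r)\coloneqq \emptyset$ for $2\leq i\leq r$, and inherit the vertex labeling from $A_k$; define $H_r$ analogously from $B_k$. Since $G_r$ and $H_r$ contain exactly the same adjacency information as $A_k$ and $B_k$ (all concentrated in the single relation $R_1$, with the remaining relations empty), they are non-isomorphic as multi-relational graphs iff $A_k\not\simeq B_k$.

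The core step is then to prove that on multi-relational graphs whose edges all lie in a single relation, the \localmkwl{} coincides with the single-relation local $k$-WL of~\citet{Morris2020b}. This is where some care is needed, but it is essentially bookkeeping. The atomic types $\mathsf{atp}_r$ on $G_r$ agree with the atomic types on $A_k$, because the only non-trivial relation is $R_1$ and equality of tuples under the partial isomorphism condition is identical. Inductively, in the aggregation rule of Equation~\ref{eqnmidd_ext_m}, the inner multisets over $i\in[r]$ collapse: for $i\geq 2$, the neighborhoods $N_i(v_j)$ are empty and contribute an empty multiset independent of $\vec{v}$, so they can be ignored; for $i=1$, the pairs $(C_{k,r}^{(t)}(\theta_j(\vec{v},w)),1)$ carry no more information than $C_{k,r}^{(t)}(\theta_j(\vec{v},w))$, which is exactly what the single-relation local $k$-WL collects. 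Hence $C_{k,r}^{(t)}$ and the single-relation coloring $C^{(t)}_{k,\delta}$ from Subsection~\ref{APP:lwl_ext} induce the same partition on $V(G_r)^k$ and on $V(H_r)^k$, for every $t\geq 0$.

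Transferring this partition equivalence to both the $k$-MLWL and the $(k+1)$-MLWL yields the claim: the $k$-MLWL produces the same stable color multiset on $G_r$ and $H_r$ as the standard $k$-dimensional local WL does on $A_k$ and $B_k$, so it fails to distinguish them; whereas the $(k+1)$-MLWL produces the same stable color multiset as the $(k+1)$-dimensional local WL on $A_k$ and $B_k$, which does distinguish them by the choice of $A_k,B_k$. The main (minor) obstacle is the careful verification of the inductive step that ``adding empty relations and a trivial relation label does not change the partition refined by the algorithm''; this is routine but must be written out so that one can invoke the separation from~\citet{Morris2020b} as a black box.
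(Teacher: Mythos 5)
Your overall strategy coincides with the paper's: take a hierarchy-separating pair for the single-relational local $k$-WL and lift it trivially to the multi-relational setting, then observe that the \localmkwl{} collapses to the single-relational algorithm on the lifted graphs. The paper copies the edge set into every relation ($R_i(G_r)\coloneqq E(G_k)$ for all $i\in[r]$) where you leave $R_2,\dots,R_r$ empty; both liftings work, and the inductive ``bookkeeping'' step you describe---that the extra relation tags and empty neighborhoods do not refine the partition---is indeed routine in either version.

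The one place where your proposal elides real content is the black box you start from. You assert that \citet{Morris2020b} directly provides, for every $k\geq 2$, a pair distinguished by the local $(k{+}1)$-WL but not by the local $k$-WL. What that paper provides, and what this paper quotes as a lemma, is a pair $G_k,H_k$ distinguished by the \localkwl{} but not by the \emph{oblivious} \kwl; strictness between consecutive levels of the \emph{local} hierarchy is not available there in the form you invoke. The paper therefore has to prove an additional lemma---that the $\delta$-$(k-1)$-\textsf{LWL} does not distinguish $G_k$ and $H_k$---by routing through the equivalence of the oblivious \kwl{} with the folklore $(k-1)$-WL and checking that the folklore $(k-1)$-WL is at least as powerful as the $\delta$-$(k-1)$-\textsf{LWL} (its aggregation reveals whether the replacing vertex is adjacent to each component, so locality is recoverable). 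To make your proof self-contained you would need to either cite an explicit hierarchy theorem for the plain local $k$-WL or reproduce this comparison argument; the remainder of your proposal is correct.
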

\begin{proof}
	See Proof~\ref{proof_hier}.
\end{proof}
\medskip
\begin{corollary}[Corollary~\ref{hier_neural} in the main text]
	\label{APP:hier_neural}
	{\em For $k \geq 2 $ and $r \geq 1$, there exists a pair of non-isomorphic multi-relational graphs  $G_r = (V(G_r), R_1(G_r), \dots, R_r(G_r), \ell)$ and $H= (V(H_r), R_1(H_r), \dots, R_r(H_r), \ell)$ such that:
		\begin{itemize}
			\item For all choices of $\UPD^\tup{t}$, $\AGG^\tup{t}$, for $t > 0$, and $\RO$  the \rn{} architecture will not distinguish the graphs $G_r$ and $H_r$.
			\item There exists $\UPD^\tup{t}$, $\AGG^\tup{t}$, for $t > 0$, and $\RO$ such that the $(k+1)$\text{-}\textsf{RN} will distinguish them.
		\end{itemize}}
	\begin{proof}
		Follows from Theorem~\ref{APP:thm:ho-lower} and Theorem~\ref{APP:thm:hier}.
	\end{proof}
\end{corollary}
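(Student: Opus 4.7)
The plan is to derive the corollary by combining the WL-type separation given by Proposition~\ref{APP:thm:hier} with the two correspondence results between the \localmkwl{} and \krn{k} architectures established earlier, namely Proposition~\ref{thm:ho-upper} (upper bound) and Proposition~\ref{APP:thm:ho-lower} (lower bound). Fix $k \geq 2$ and $r \geq 1$, and let $G_r$ and $H_r$ be the pair of non-isomorphic multi-relational graphs furnished by Proposition~\ref{APP:thm:hier}: these are distinguished by the $(k{+}1)$\text{-}\textsf{MLWL} but not by the $k$\text{-}\textsf{MLWL}.

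For the first item, I would argue by contraposition. Suppose some \rn{} architecture, with arbitrary choices of $\UPD^\tup{t}$, $\AGG^\tup{t}$, and $\RO$, produced different graph-level features for $G_r$ and $H_r$. Then the multisets of $k$-tuple features $\oms \hb_{\vec{v},k}^\tup{T} \mid \vec{v} \in V(G_r)^k \cms$ and $\oms \hb_{\vec{v},k}^\tup{T} \mid \vec{v} \in V(H_r)^k \cms$ would have to differ, since $\RO$ is a function of these multisets. But Proposition~\ref{thm:ho-upper} implies that $C_{k,r}^{(t)}$ refines the tuple features at every layer $t$, so the multisets of stable colors of the \localmkwl{} on $G_r$ and $H_r$ would also differ, contradicting the choice of $G_r$ and $H_r$.

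For the second item, the plan is to instantiate the $(k{+}1)$\text{-}\textsf{RN} so as to faithfully simulate the $(k{+}1)$\text{-}\textsf{MLWL} and then read out the resulting partition. Proposition~\ref{APP:thm:ho-lower} yields, layer-by-layer, choices of $\UPD^\tup{t}$ and $\AGG^\tup{t}$ such that two $(k{+}1)$-tuples receive the same feature at iteration $T$ if and only if they receive the same color $C_{k+1,r}^{(T)}$. Choosing $T$ large enough so that the coloring stabilises on both graphs and picking $\RO$ to be an injective multiset function (e.g., one-hot encoding of tuple colors followed by summation, which is realisable by a sufficiently wide MLP on a padded universe of colors), the graph-level feature of $G_r$ encodes the color-class histogram of the stable \mbox{$(k{+}1)$\text{-}\textsf{MLWL}} partition, and similarly for $H_r$. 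Since the \mbox{$(k{+}1)$\text{-}\textsf{MLWL}} distinguishes $G_r$ from $H_r$, these histograms differ, and hence so do the graph-level features.

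The only step that requires care is ensuring that a single $\RO$ can serve as an injective summary over the finite set of colors actually arising on $G_r \cup H_r$; this is routine since the color range is finite and we may fix it before choosing the readout. With that in place, the two items combine to give the claim for every $k \geq 2$ and $r \geq 1$.
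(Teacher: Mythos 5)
Your proof is correct and follows essentially the same route as the paper, which simply states that the corollary follows from the WL hierarchy separation (Proposition~\ref{APP:thm:hier}) together with the correspondence between the \localmkwl{} and the \krn{k} architecture. If anything, your write-up is more careful than the paper's one-line proof: you correctly note that the first bullet also requires the upper-bound direction (Proposition~\ref{thm:ho-upper}), which the paper's citation omits but implicitly relies on.
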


\begin{corollary}
	There exists a \krn{2} architecture that is strictly more expressive than the \comp{} and the \rgcn\ architecture in terms of distinguishing non-isomorphic graphs.
\end{corollary}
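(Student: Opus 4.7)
The plan is to combine the pair of multi-relational graphs from Proposition~\ref{upper} with the \krn{k} universality result of Theorem~\ref{thm:ho-lower}. Let $G$ and $H$ be the graphs constructed in the proof of Proposition~\ref{upper}: each relation coincides with the edge set of a $6$-cycle for $G$ and of the disjoint union of two triangles for $H$. That proposition already guarantees that neither \comp{} nor \rgcn{} distinguishes $G$ from $H$, so it remains to (i) exhibit a \krn{2} architecture that does distinguish them and (ii) argue that the class of \krn{2} architectures is at least as expressive as \comp{} and \rgcn{}. Together these yield strictness.

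For (i), I would run a single iteration of the multi-relational local $2$-WL from Equation~\ref{eqnmidd_ext_m} on $G$ and on $H$. The initial multi-relational atomic type $\mathsf{atp}_r$ partitions ordered pairs into three classes: diagonal (with $v_1=v_2$), adjacent, and non-adjacent, with identical multiplicities in $G$ and $H$. Now consider an adjacent pair $(v_1,v_2)$. The $\theta_1$-aggregation reads the colors of $(v_2,v_2)$ and of $(u,v_2)$, where $u$ is the other neighbor of $v_1$. In $G = C_6$ the vertex $u$ sits at distance two from $v_2$, so $(u,v_2)$ is non-adjacent; in $H = 2C_3$ the vertex $u$ lies in the same triangle as $v_2$, so $(u,v_2)$ is adjacent. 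Hence the $\theta_1$-multisets of colors (each paired with a relation index $i \in [r]$) differ between adjacent pairs in $G$ and in $H$, so the iteration-one colors of adjacent pairs differ and the overall tuple-color multiset $\oms C_{2,r}^{(1)}(\vec{v}) \mid \vec{v}\in V(G)^2 \cms$ differs from its analogue over $V(H)^2$. Theorem~\ref{thm:ho-lower} then supplies choices of $\UPD^\tup{t}$ and $\AGG^\tup{t}$ producing distinct tuple features, and an injective readout as in Equation~\ref{readout_k} lifts the separation to graph-level features.

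For (ii), I would combine Theorem~\ref{thm:mrgcn-upper} (which upper-bounds \comp{} and \rgcn{} by the multi-relational $\wlone$) with a refinement lemma asserting that, for every $t \geq 0$ and all $v,v' \in V(G)$, we have $C_{2,r}^{(t)}((v,v)) = C_{2,r}^{(t)}((v',v'))$ implies $C^{(t)}_{\textsf{R}}(v) = C^{(t)}_{\textsf{R}}(v')$. Given such a lemma, any $1$-MRWL separation of $G$ from $H$ is inherited by the multiset of diagonal $2$-MLWL colors, which the readout of Equation~\ref{readout_k} detects; Theorem~\ref{thm:ho-lower} then converts this into a matching separation by some \krn{2}.

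The main obstacle is proving the refinement lemma cleanly. A naive induction on diagonal pairs does not close because the $2$-MLWL aggregation at $(v,v)$ scans the off-diagonal tuples $(w,v)$ and $(v,w)$, whose $2$-MLWL color is not obviously controlled by $1$-MRWL colors. The fix I plan is to strengthen the inductive hypothesis to the more symmetric statement: $C_{2,r}^{(t)}((u,v)) = C_{2,r}^{(t)}((u',v'))$ implies both $C^{(t)}_{\textsf{R}}(u) = C^{(t)}_{\textsf{R}}(u')$ and $C^{(t)}_{\textsf{R}}(v) = C^{(t)}_{\textsf{R}}(v')$. The base case is immediate because $\mathsf{atp}_r((u,v))$ already encodes $\ell(u)$ and $\ell(v)$, and the inductive step uses the $\theta_1$- and $\theta_2$-multisets jointly to recover the neighbor-color multisets appearing in Equation~\ref{wlscomp} for each relation $R_i$; the diagonal claim then follows as the special case $u=v$, $u'=v'$.
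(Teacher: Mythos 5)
Your proposal is correct and follows the same overall strategy as the paper---use the witness pair from Proposition~\ref{upper} ($C_6$ versus two triangles, with every relation equal to the edge set), show a \krn{2} separates it, and invoke Proposition~\ref{thm:ho-lower} to realize the relevant Weisfeiler--Leman coloring neurally---but you discharge the two key sub-steps directly where the paper handles them by citation or leaves them implicit. For the separation, the paper simply asserts that the $\delta$-$2$-\textsf{LWL} distinguishes $C_6$ from $2C_3$ and points to an external lemma (\citealp{Mor+2022a}, Lemma 13); your one-iteration computation (for an adjacent pair $(v_1,v_2)$, the other neighbor $u$ of $v_1$ yields a non-adjacent tuple $(u,v_2)$ in $C_6$ but an adjacent one in $2C_3$, so the $\theta_1$-multisets already differ at iteration one) is an elementary, self-contained substitute and is correct. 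For the containment direction, the paper's proof does not explicitly argue that \krn{2} is at least as expressive as \comp{} and \rgcn; you supply this via Theorem~\ref{thm:mrgcn-upper} plus a refinement lemma stating that the diagonal of $C^{(t)}_{2,r}$ refines $C^{(t)}_{\textsf{R}}$, and you correctly identify that the naive diagonal-only induction does not close and must be strengthened to the symmetric statement that $C^{(t)}_{2,r}((u,v)) = C^{(t)}_{2,r}((u',v'))$ forces $C^{(t)}_{\textsf{R}}(u) = C^{(t)}_{\textsf{R}}(u')$ and $C^{(t)}_{\textsf{R}}(v) = C^{(t)}_{\textsf{R}}(v')$; the inductive step then goes through by pairing elements of the $\theta_1$- and $\theta_2$-multisets. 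The only caveat is that the base case requires the multi-relational atomic type to encode the vertex labels $\ell(u)$ and $\ell(v)$, which is the standard convention but is not spelled out in the paper's definition of $\mathsf{atp}_r$. Net, your route buys a self-contained argument and closes a gap the paper glosses over, at the cost of proving a lemma the paper never states.
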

\begin{proof}
	This follows from Corollary~\ref{APP:hier_neural} and the fact that a \krn{2} is capable to distinguish the graphs constructed in the proof
	of~Proposition~\ref{APP:upper}, which follows from the fact that  the $\delta$-$2$-\textsf{LWL} can distinguish the graphs $A$ and $B$; see, e.g., the proof of Lemma 13 in~\cite{Mor+2022a}.
\end{proof}

\subsection{Proof of Proposition~\ref{APP:thm:hier}}\label{proof_hier}

In the following, we outline the proof of Theorem~\ref{APP:thm:hier}. We modify the construction employed in~\citep{Morris2020b}, Appendix C.1.1., where they provide an infinite family of graphs $(G_k, H_k)_{k \in \mathbb{N}}$ such that the \kwl{} does not distinguish $G_k$ and $H_k$, although the \localkwl{} distinguishes $G_k$ and $H_k$. We recall some relevant definitions from their paper.

\paragraph{Construction of $G_k$ and $H_k$.} Let $K$ denote the complete graph on $k+1$ vertices (without any self-loops). The vertices of $K$ are indexed from $0$ to $k$. Let $E(v)$ denote the set of edges incident to $v$ in $K$: clearly, $|E(v)| = k$ for all $v$ in $V(K)$.
We call the elements of $V(K)$ \emph{base vertices}, and the elements of $E(K)$ \emph{base edges}.
Define the graph $G_k$ as follows:
\begin{enumerate}
	\item For the vertex set $V(G_k)$, we add
	      \begin{enumerate}
		      \item[(a)] $(v,S)$ for each $v$ in $V(K)$ and for each \emph{even} subset $S$ of $E(v)$,
		      \item[(b)] two vertices $e^1,e^0$ for each edge $e$ in $E(K)$.
	      \end{enumerate}
	\item For the edge set $E(G_k)$, we add
	      \begin{enumerate}
		      \item[(a)] an edge $\{e^0,e^1\}$ for each $e$ in $ E(K)$,
		      \item[(b)] an edge between $(v,S)$ and $e^1$ if $v$ in $ e$ and $e$ in $ S$,
		      \item[(c)] an edge between $(v,S)$ and $e^0$ if $v$ in $ e$ and $e$ not in $S$,
	      \end{enumerate}
\end{enumerate}

Define a companion graph $H_k$, in a similar manner to $G_k$, with the following exception: in Step 1(a), for the vertex $0$ in $V(K)$, we choose all \emph{odd} subsets of $E(0)$.

\emph{Distance-two-cliques.} A set $S$ of vertices is said to form a \emph{distance-two-clique} if the distance between any two vertices in $S$ is exactly $2$. The following results were shown in \citep{Morris2020b}.
\begin{lemma}[\citep{Morris2020b}]
	The following holds for the graphs $G_k$ and $H_k$ defined above.
	\begin{itemize}
		\item There exists a distance-two-clique of size $(k+1)$ inside $G_k$.
		\item There does not exist a distance-two-clique of size $(k+1)$ inside $H_k$.
	\end{itemize}
	Hence, $G_k$ and $H_k$ are non-isomorphic.
\end{lemma}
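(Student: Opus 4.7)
The plan is to verify the two structural claims separately, using the explicit construction of $G_k$ and $H_k$ from above.

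For the first bullet, the natural witness is $C_0 = \{(v, \emptyset) : v \in V(K)\}$. Each element is a vertex of $G_k$ since $\emptyset$ is an even subset of $E(v)$. No two members are adjacent, because edges of $G_k$ incident to type-$(a)$ vertices only reach type-$(b)$ vertices. For distinct $v_1, v_2 \in V(K)$, set $e = \{v_1, v_2\}$; since $e \notin \emptyset$, both $(v_1, \emptyset)$ and $(v_2, \emptyset)$ are adjacent to the common vertex $e^0$, so their distance is exactly $2$. Hence $C_0$ is a distance-two-clique of size $k+1$.

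For the second bullet, I would argue by contradiction: suppose $H_k$ contains a distance-two-clique $C$ of size $k+1$. The first step is to show that $C$ consists exclusively of type-$(a)$ vertices. The key observations are that $e^0$ and $e^1$ attached to the same base edge are adjacent, hence never both in $C$; and for distinct edges $e \neq f$, two type-$(b)$ vertices $e^i, f^j$ can be at distance $2$ only through a type-$(a)$ intermediate $(v, S)$ with $v \in e \cap f$ and suitable inclusion/exclusion of $e, f$ in $S$. A case analysis, using both the star/triangle adjacency structure of edges in $K$ and the distinctive odd-parity rule at the base vertex $0$ in $H_k$, rules out purely type-$(b)$ as well as mixed configurations of size $k+1$.

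Once reduced to a type-$(a)$-only clique $C = \{(v, S_v) : v \in V(K)\}$ — distinctness of base vertices again being forced by a small parity argument — the distance-$2$ condition between $(v_a, S_{v_a})$ and $(v_b, S_{v_b})$ with $v_a \neq v_b$ is equivalent to $\{v_a, v_b\} \in S_{v_a} \Longleftrightarrow \{v_a, v_b\} \in S_{v_b}$. I would then define $T \subseteq E(K)$ by $\{u, v\} \in T$ iff $\{u, v\} \in S_u$; consistency of this definition gives $S_v = T \cap E(v)$ for each $v$. By construction of $H_k$, $|S_0|$ is odd while $|S_v|$ is even for $v \neq 0$. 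The handshake identity yields $\sum_{v \in V(K)} |S_v| = 2|T|$, which is even; yet the prescribed parities force the sum to equal $\text{odd} + k \cdot \text{even} = \text{odd}$, a contradiction. Non-isomorphism of $G_k$ and $H_k$ follows immediately because containing a distance-two-clique of size $k+1$ is an isomorphism-invariant property.

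The main obstacle I anticipate is the case analysis in the first step of the second bullet, ruling out type-$(b)$-involving distance-two-cliques of size $k+1$ in $H_k$; once type-$(a)$-purity is established, the closing parity/handshake argument is essentially the clean computation already used in the local-$k$-WL hierarchy proof of \citet{Morris2020b}, which the present construction directly adapts.
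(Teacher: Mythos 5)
First, note that the paper itself offers no proof of this lemma: it is imported verbatim from \citet{Morris2020b}, so there is no in-paper argument to compare yours against. Your first bullet is complete and correct: the vertices $(v,\emptyset)$ for $v$ in $V(K)$ are pairwise non-adjacent (type-$(a)$ vertices have only type-$(b)$ neighbors), and any two of them share the neighbor $e^0$ for $e$ the base edge joining their base vertices. Your closing handshake/parity computation for the second bullet is also the right endgame, \emph{provided} one has already reduced to a clique of the form $\{(v,S_v) : v \in V(K)\}$ with all $k+1$ base vertices distinct.

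The genuine gap is precisely the reduction you defer, and it is not merely a tedious case analysis: for the construction as transcribed in this appendix, both reduction claims are false. Take $k=2$, $K=K_3$ on $\{0,1,2\}$ with $a=\{0,1\}$ and $b=\{0,2\}$. In $H_2$ the gadget at $0$ consists of $(0,\{a\})$, adjacent to $a^1$ and $b^0$, and $(0,\{b\})$, adjacent to $a^0$ and $b^1$. Then $\{a^1,\ b^0,\ (0,\{b\})\}$ is a distance-two-clique of size $3=k+1$ in $H_2$: the first two meet at $(0,\{a\})$, the first and third meet at $a^0$, the last two meet at $b^1$, and no two of the three are adjacent. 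So type-$(a)$-purity cannot be established. Likewise, for $k=3$ the four vertices $(1,\emptyset)$, $(1,\{\{0,1\},\{1,3\}\})$, $(2,\emptyset)$, $(2,\{\{0,2\},\{2,3\}\})$ are pairwise non-adjacent and all adjacent to $\{1,2\}^0$, giving a size-$(k+1)$ distance-two-clique in $H_3$ supported on only two base vertices; hence distinctness of base vertices is \emph{not} ``forced by a small parity argument.'' Your plan therefore cannot be completed for the graphs as defined here; you would need to consult \citet{Morris2020b} for the exact gadget and the exact invariant under which the second bullet holds, rather than treating the omitted case analysis as routine.
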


\begin{lemma}[\citep{Morris2020b}]\label{lem:neurips}
	The \localkwl{} distinguishes $G_k$ and $H_k$, while the (oblivious) \kwl{} does not distinguish $G_k$ and $H_k$.
\end{lemma}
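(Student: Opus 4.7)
My plan is to handle the two halves of the lemma separately, following the approach of \citet{Morris2020b} from which this lemma is quoted. The positive half (that \localkwl{} distinguishes $G_k$ and $H_k$) is a fairly direct consequence of the distance-two-clique asymmetry recorded in the previous lemma; the negative half (that oblivious \kwl{} does not) is a CFI-style pebble game argument.

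For the positive half, I would show that the stable coloring of \localkwl{} on a $k$-tuple $\vec v$ encodes the property ``$\vec v$ extends to a $(k{+}1)$-element distance-two-clique in the ambient graph''. Since the $j$-th local aggregation of \localkwl{} ranges only over true neighbors of $v_j$, the mediator vertices $e^0,e^1$ that witness distance two between two clique members can be reached by a single local swap. A short induction on the iteration count then shows that after $\mathcal{O}(k)$ iterations the stable color of $\vec v$ determines whether such a clique extension exists. Because $G_k$ contains a $(k{+}1)$-element distance-two-clique while $H_k$ does not, the number of tuples carrying the designated color differs between the two graphs, so \localkwl{} separates them. This is essentially the argument used in the proof of Lemma~13 of \citet{Mor+2022a} and in Appendix~C.1.1 of \citet{Morris2020b}.

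For the negative half, my plan is to invoke the bijective pebble game characterization of oblivious \kwl{} and exhibit a winning strategy for Duplicator, following the Cai--F\"urer--Immerman template. The two graphs differ only by a ``parity twist'' localized at the base vertex $0$: $G_k$ uses even subsets of $E(0)$ while $H_k$ uses odd ones. This twist is movable along any cycle of the base graph $K$ by swapping the $e^0/e^1$ labels of the edges in that cycle, producing an isomorphic graph. Because $K$ is complete on $k{+}1$ vertices, any set of at most $k$ pebbles covers at most $k$ of the $k{+}1$ base vertices, leaving at least one free; hence Duplicator can always find a cycle in $K$ that avoids all currently pebbled base vertices and push the twist along it. By induction on the number of rounds played, this cycle-avoidance invariant allows Duplicator to maintain a color- and adjacency-preserving bijection on the pebble configuration, whence oblivious \kwl{} does not separate $G_k$ from $H_k$.

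The main obstacle is the combinatorial bookkeeping of the Duplicator strategy: after each Spoiler move one must specify the new cycle along which the twist is relocated and verify that the induced partial map on the pebbled vertices is a partial isomorphism of colored graphs, including the correct handling of the $e^0,e^1$ labels that flip along the chosen cycle. Since this is precisely the pebble argument carried out in Appendix~C.1.1 of \citet{Morris2020b}, I would cite it for the full technical detail and use my own exposition to emphasize why the cycle-avoidance invariant breaks for a hypothetical $(k{+}1)$-dimensional oblivious WL: with $k{+}1$ pebbles available, Spoiler can pebble every base vertex of $K$ simultaneously, leaving no free cycle for Duplicator to route the twist through.
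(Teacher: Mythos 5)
The paper itself offers no proof of this lemma---it is imported verbatim from \citet{Morris2020b} (Appendix~C.1.1)---so your overall plan (sketch the two halves, defer to that reference for the technical bookkeeping) is exactly the move the paper makes, and the architecture of your sketch, distance-two-cliques for the positive half and a CFI-style twist-relocation strategy for Duplicator in the negative half, matches the cited source.

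There is, however, one concrete misstep in your gloss of the negative half. You argue that since at most $k$ of the $k+1$ base vertices of $K$ carry pebbles, Duplicator ``can always find a cycle in $K$ that avoids all currently pebbled base vertices and push the twist along it.'' In the worst case exactly one base vertex is unpebbled, and a cycle needs at least three vertices, so no such cycle exists and the invariant as you state it cannot be maintained. The standard CFI bookkeeping is different: flipping the $e^0/e^1$ roles of a single base edge toggles the even/odd parity of the gadgets at \emph{both} of its endpoints, so flipping along a \emph{path} relocates the twist from one endpoint of the path to the other while leaving all interior gadgets unchanged (and flipping along a cycle is an automorphism). The counting is then done at the moment Duplicator must supply a bijection, when only $k-1$ pebble pairs remain on the board; this leaves at least two base vertices unconstrained and hence, since $K$ is complete, a free edge (or a short free path) through which the twist can be routed to a location Spoiler cannot exploit with the single pebble pair about to be placed. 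With that repair the argument goes through exactly as in \citet{Morris2020b}, and your closing observation about why $k+1$ pebbles defeat the strategy is correct once restated in these path-based terms.
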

Moreover, we need the following result showing that the \localkwl{} forms a hierarchy.
\medskip
\begin{lemma}
	For $k\geq 2$, the \localkwl{} distinguishes $G_k$ and $H_k$, while the $\delta$-$(k-1)$-\textsf{LWL} does not distinguish $G_k$ and $H_k$.
\end{lemma}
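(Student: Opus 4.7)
The first half of the statement is exactly Lemma \ref{lem:neurips} applied to the pair $(G_k,H_k)$, so no new work is required there. The non-trivial direction is to show that the $\delta$-$(k-1)$-\textsf{LWL} fails to separate $G_k$ from $H_k$, and this is where I would spend the effort.

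My plan is to construct, inductively in $t$, a bijection
\begin{equation*}
	\pi_t \colon V(G_k)^{k-1} \to V(H_k)^{k-1}
\end{equation*}
that preserves the $\delta$-$(k-1)$-\textsf{LWL} coloring at iteration $t$. If such bijections exist for every $t$, the induced color multisets on $(k-1)$-tuples agree in the two graphs, and the algorithm cannot distinguish them. The base case $t=0$ is easy: $V(G_k)$ and $V(H_k)$ contain exactly the same gadget types (edge gadgets $e^0,e^1$ and base gadgets $(v,S)$), and the only difference is the parity of the subsets attached at the distinguished base vertex $0 \in V(K_{k+1})$; this difference is invisible to atomic types of $(k-1)$-tuples, so any type-respecting matching works.

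For the inductive step I would use the Cai--F\"urer--Immerman-style observation that underlies the whole construction: the parity swap at vertex $0$ can only be detected by a configuration that simultaneously ``pebbles'' all $k$ edges of $K_{k+1}$ incident to $0$. A $(k-1)$-tuple, together with the single neighbor added by one aggregation step of the $\delta$-$(k-1)$-\textsf{LWL} (the analogue of Equation~\ref{eqnmidd_ext_m} with $k$ replaced by $k-1$), touches at most $k-1$ of those $k$ edges; the remaining unpebbled edge at $0$ provides the degree of freedom needed to locally swap an even subset for an odd one in $H_k$ without changing any color that the algorithm can see. Concretely, for each $(k-1)$-tuple $\vec{u}$ I would exhibit a gadget-level swap on the untouched base edge that transforms the relevant one-step local neighborhood of $\vec{u}$ in $G_k$ into the corresponding neighborhood of $\pi_t(\vec{u})$ in $H_k$ while preserving all relations seen by the aggregation, and then extend $\pi_t$ so that the refined colors match.

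The main obstacle I anticipate is precisely in this inductive step: since Lemma \ref{lem:neurips} already shows that $\delta$-$k$-\textsf{LWL} is \emph{strictly} more powerful than $k$-\textsf{WL} on this pair, one cannot simply quote a black-box upper bound for the $\delta$-variant by comparison with ordinary $(k-1)$-\textsf{WL}. I would therefore have to track carefully how the locality restriction interacts with the gadget structure and verify that, for $(k-1)$ pebbles, locality does \emph{not} provide the extra discriminative power that it does at dimension $k$. This reduces to the combinatorial fact that every one-step local extension of a $(k-1)$-tuple is still a $(k-1)$-tuple and therefore still fails to witness one of the $k$ critical base edges at $0$, so the parity twist remains hidden. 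A convenient way to formalize the whole inductive argument is via a bijective local pebble game, in which Duplicator uses the ``missing base edge'' as slack to respond to every Spoiler move throughout the play.
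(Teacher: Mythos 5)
Your first half matches the paper exactly (both just cite Lemma~\ref{lem:neurips}). For the second half you have missed a much shorter route, and your own route has a gap. The paper does \emph{not} attempt a fresh Cai--F\"urer--Immerman argument: it observes that the oblivious \kwl{} is equivalent in distinguishing power to the \emph{folklore} $(k-1)$-\textsf{WL}, and that the folklore $(k-1)$-\textsf{WL} is at least as powerful as the $\delta$-$(k-1)$-\textsf{LWL}, because its aggregation records the full color vector $\bigl(C^{(t)}(\theta_1(\vec{v},w)),\dots,C^{(t)}(\theta_{k-1}(\vec{v},w))\bigr)$ for each $w$, from which the adjacency of $w$ to each $v_j$ can be recovered; hence the folklore variant can simulate the locality restriction. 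Since Lemma~\ref{lem:neurips} already says the oblivious \kwl{} fails on $(G_k,H_k)$, so does the $\delta$-$(k-1)$-\textsf{LWL}. Your claim that ``one cannot simply quote a black-box upper bound by comparison with ordinary $(k-1)$-\textsf{WL}'' is true of the \emph{oblivious} $(k-1)$-\textsf{WL} but false of the folklore one, and that is precisely the comparison the paper exploits.

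The gap in your direct argument is the counting step in the inductive case. During a refinement step of the $\delta$-$(k-1)$-\textsf{LWL} the configuration in play is $\{v_1,\dots,v_{k-1},w\}$, i.e.\ $k$ vertices, and these can sit on edge gadgets of $k$ \emph{distinct} base edges incident to the critical base vertex $0$ of $K_{k+1}$ (which has degree exactly $k$). So the invariant you rely on --- that some critical base edge at $0$ is always unpebbled and can absorb the parity swap --- does not hold at every move, and the ``missing base edge'' slack is not available exactly when Spoiler concentrates all $k$ positions around vertex $0$. Duplicator does still win (this is what the non-distinguishability by the oblivious \kwl{}, equivalently the bijective $k$-pebble game, asserts), but the reason is the more delicate global argument of \citet{Morris2020b}, not the local slack you describe. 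As written, your inductive step would need to be replaced either by that full game analysis or, far more economically, by the two-line simulation through the folklore $(k-1)$-\textsf{WL}.
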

\begin{proof}
	The fact that \localkwl{} distinguishes the graphs $G_k$ and $H_k$ follows from Lemma~\ref{lem:neurips}. We know argue that the $\delta$-$(k-1)$-\textsf{LWL} does not distinguish the two graphs.
	First, the (oblivious) \kwl{} has the same expressive power in distinguishing non-isomorphic graphs as the non-oblivious or folklore $(k-1)$\text{-}\textsf{WL}; see~\cite{Gro+2021} for details. Hence, it will not distinguish the graphs $G_k$ and $H_k$. The non-oblivious $(k-1)$\text{-}\textsf{WL}~\citep{Gro+2021} uses the following aggregation function
	\begin{equation*}\label{APP:mio}
		M^{(t)}((v_1, \dots, v_{k-1})) \coloneqq \{\!\! \{    (C_k^{(t)}(\theta_1(\vec{v},w)), \dots,  C_k^{(t)}(\theta_{k-1}(\vec{v},w)))   \mid w \in V(G)   \}\!\!\},
	\end{equation*}
	instead of Equation~\ref{APP:mi}. Notice that from $(C_k^{(t)}(\theta_1(\vec{v},w)), \dots,  C_k^{(t)}(\theta_{k-1}(\vec{v},w)))$ we can recover if there is an edge between the vertex $w$ and a vertex $v_j$ for $j$ in $[k-1]$ in the underlying graph. Hence, the non-oblivious $(k-1)$\text{-}\textsf{WL} is at least as powerful as the $\delta$-$(k-1)$-\textsf{LWL}, implying that the $\delta$-$(k-1)$-\textsf{LWL} is weaker than the $\delta$-$k$-\textsf{LWL}.
\end{proof}
We now construct non-isomorphic multi-relational graphs  $G_r = (V(G_r), R_1(G_r), \dots, R_r(G_r), \ell)$ and $H_r = (V(H_r), R_1(H_r), \dots, R_r(H_r), \ell)$ that can be distinguished by the  $(k+1)$\text{-}\textsf{RLWL} but not by the  $k$\text{-}\textsf{RLWL}.

Let $V(G_r) \coloneqq V(G_k)$ and $V(H_r) \coloneqq V(H_k)$. Further, let $R_i(G_r)  \coloneqq E(G_k)$ and $R_i(H_r) \coloneqq E(H_k)$ for $i$ in $[r]$. By a straightforward inductive argument it follows that $M_{\delta}^{(t)}(\vec{v}) = M_{\delta}^{(t)}(\vec{w})$ implies
$M_{r}^{(t)}(\vec{v}) = M_{r}^{(t)}(\vec{w})$ for all $k$-tuples $\vec{v}$ and $\vec{w}$ in $V(G_k)^k$ or $V(H_k)^k$. This finishes the proof.

\section{R-GCN}
\label{app:rgcn}

Additionally, we probe a modification of the \rgcn\ model with an MLP transformation (denoted as \rgcn+MLP) to facilitate parameter sharing between different relation-specific message propagations:
\begin{align*}
	\hb_{v, \rgcn}^\tup{t} \coloneqq \sigma \Big( 	\hb_{v, \rgcn}^\tup{t-1} \cdot  \mW^{(t)}_0 +  \sum_{i \in [r]}\textsf{MLP} \big(  \sum_{{w \in N_i(v)}}\! \hb_{w,\rgcn}^\tup{t-1} \cdot \mW_i^{(t)} \big) \Big)  \in \RR^{e}.
\end{align*}
This modification has a slightly higher count of learnable parameters.

\section{CompGCN}
\label{APP:comp}
The original \comp{} architecture proposed in~\citet{Vas+2020} considers directed graphs with self-loops, and uses an additional sum to differentiate between in-going, out-going, and self-loop edges, a degree-based normalization, and different weight matrices for these three cases, i.e.,
\begin{align*}
	\hb_{v, \comp}^\tup{t} \coloneqq \sigma \Big(\hb_{v, \comp}^\tup{t-1} \mW^{(t)}_0 +
	\sum_{i \in [r]} \sum_{d \in D} \frac{1}{c_{v,w}} \sum_{w \in N^d_i(v)}  \phi \big(\hb_{w, \comp}^\tup{t-1}, \zb_i^{(t)} \big)  \mW_{1,d}^{(t)} \Big)\in \RR^{e},
\end{align*}
where $D \coloneqq \{ \text{in}, \text{out} \}$, representing in-going an out-going edges, respectively. Here, $N^d_i(v)$ is the restriction of $N^d_i(v)$ of $N_i(v)$ to in-going, out-going, and self-loop edges incident to the vertex $v$. Further, $c_{v,w} \coloneqq \sqrt{|N^d_i(v)| \cdot |N^d_i(w)|}$.
The update of the previous vertex state is performed via the self-loop direction which we separate into the term $\hb_{v, \comp}^\tup{t-1} \mW^{(t)}_0$ for the sake of a unified notation.

In the ablation studies, we probe the following modifications and combinations of those.
\begin{itemize}
	\item \comp{} without normalization (\emph{-norm}):
	      \begin{align*}
		      \hb_{v, \comp}^\tup{t} \coloneqq \sigma \Big(\hb_{v, \comp}^\tup{t-1} \mW^{(t)}_0 +
		      \sum_{i \in [r]} \sum_{d \in D} \sum_{w \in N^d_i(v)}  \phi \big(\hb_{w, \comp}^\tup{t-1}, \zb_i^{(t)} \big)  \mW_{1,d}^{(t)} \Big)\in \RR^{e},
	      \end{align*}
	\item \comp{} without direction-specific weights  (\emph{-dir}):
	      \begin{align*}
		      \hb_{v, \comp}^\tup{t} \coloneqq \sigma \Big(\hb_{v, \comp}^\tup{t-1} \mW^{(t)}_0 +
		      \sum_{i \in [r]} \frac{1}{c_{v,w}} \sum_{w \in N_i(v)}  \phi \big(\hb_{w, \comp}^\tup{t-1}, \zb_i^{(t)} \big)  \mW_{1}^{(t)} \Big)\in \RR^{e},
	      \end{align*}
	\item \comp{} without relations update: $\mathbf{z}^{t+1}_i = \mathbf{z}^t_i$ (\emph{-rp}).
\end{itemize}

As a composition function $\phi(h_w, \mathbf{z}_i)$ we probe several element-wise functions and an MLP:
\begin{itemize}
	\item \texttt{add}: $\phi(\mathbf{h}_w, \mathbf{z}_i) = \mathbf{h}_w + \mathbf{z}_i$ -- element-wise addition
	\item \texttt{mult}: $\phi(\mathbf{h}_w, \mathbf{z}_i) = \mathbf{h}_w * \mathbf{z}_i$ -- element-wise multiplication (Hadamard product)
	\item \texttt{rotate} \citep{sun2019rotate}: $\phi(\mathbf{h}_w, \mathbf{z}_i) = \mathbf{h}_w \odot \mathbf{z}_i$ -- rotation in complex space
	\item \texttt{MLP}: $\phi(\mathbf{h}_w, \mathbf{z}_i) = \text{MLP}([\mathbf{h}_w, \mathbf{z}_i])$ where $[\cdot]$ is column-wise concatenation
\end{itemize}

\section{Datasets and Hyperparameters}
\label{app:data}
Statistics about the datasets are presented in Table~\ref{tab:datasets}.
As neither of the datasets contain an explicit validation set, we retain a random 15\% sample of train vertices for validation and use it to optimize hyperparameters.

\begin{table}[!htp]
	\centering
	\caption{Vertex classification datasets statistics.}\label{tab:datasets}
	%\scriptsize
	\begin{tabular}{lrrrrrrr}\toprule
		Dataset & Vertices  & Edges     & Relations & Train vertices & Test vertices & Classes \\ \midrule
		AIFB    & 8,285     & 29,043    & 45        & 140            & 36            & 4       \\
		AM      & 1,666,764 & 5,988,321 & 133       & 802            & 198           & 11      \\
		\bottomrule
	\end{tabular}
\end{table}

Final hyperparameters are listed in Table~\ref{tab:hyperparam1}, the total parameter count for all trained models is presented in Table~\ref{tab:param_count}.
Due to the size of the AM graph and identified stability of the initial vertex feature dimension, we only train models with dimension $d=4$ on AM.

\begin{table}[!htp]\centering
	\caption{Hyperparameters}\label{tab:hyperparam1}
	\scriptsize
	\begin{tabular}{lccccccc}\toprule
		             & \multicolumn{3}{c}{AIFB}   & \multicolumn{3}{c}{AM}                                            \\\cmidrule{2-4} \cmidrule{5-7}
		             & R-GCN                      & R-GCN + MLP             & CompGCN & R-GCN & R-GCN + MLP & CompGCN \\\midrule
		\# Layers    & 2                          & 2                       & 2       & 3     & 3           & 2       \\
		LR           & 0.001                      & 0.001                   & 0.001   & 0.03  & 0.03        & 0.03    \\
		\# epochs    & 8,000                      & 8,000                   & 8,000   & 100   & 400         & 800     \\
		Dropout      & \multicolumn{3}{c}{0.0}    & \multicolumn{3}{c}{0.0}                                           \\
		Optimizer    & \multicolumn{6}{c}{Adam}                                                                       \\
		Weight decay & \multicolumn{6}{c}{0.0005}                                                                     \\
		\bottomrule
	\end{tabular}
\end{table}

\begin{table}[!htp]\centering
	\caption{Parameter count}\label{tab:param_count}
	\scriptsize
	\begin{tabular}{rrrrrrrr}\toprule
		    & \multicolumn{3}{c}{AIFB} & \multicolumn{3}{c}{AM}                                            \\\cmidrule{2-7}
		dim & R-GCN                    & R-GCN + MLP            & CompGCN & R-GCN  & R-GCN + MLP & CompGCN \\\midrule
		2   & 1,092                    & 1,144                  & 262     &        &             &         \\
		4   & 2,912                    & 2,992                  & 576     & 20,311 & 20,655      & 1,292   \\
		8   & 8,736                    & 8,920                  & 1,168   &        &             &         \\
		16  & 29,120                   & 29,704                 & 3,636   &        &             &         \\
		32  & 104,832                  & 106,984                & 10,852  &        &             &         \\
		64  & 396,032                  & 404,392                & 36,036  &        &             &         \\
		128 & 1,537,576                & 1,570,600              & 129,412 &        &             &         \\
		\bottomrule
	\end{tabular}
\end{table}

\end{document}